\documentclass[11pt]{article}

\usepackage[final]{acl}
\usepackage{placeins}

\usepackage{times}
\usepackage{latexsym}
\usepackage{subcaption} 

\usepackage[most]{tcolorbox}
\usepackage{listings}

\usepackage[T1]{fontenc}

\usepackage[utf8]{inputenc}
\usepackage{multirow}
\usepackage{array}
\usepackage[most]{tcolorbox}

\usepackage{microtype}
\usepackage{amsmath}
\usepackage{booktabs} 
\usepackage{inconsolata}

\usepackage{amssymb}   

\usepackage{graphicx}

\usepackage{amsthm}

\theoremstyle{definition}
\newtheorem{definition}{Definition}[subsection]

\theoremstyle{plain}
\newtheorem{theorem}[definition]{Theorem}
\newtheorem{lemma}[definition]{Lemma}

\theoremstyle{definition}
\newtheorem{define}{Definition}[subsection]
\newtheorem{prop}[definition]{Proposition}

\usepackage{graphicx}
\usepackage{subcaption}

\usepackage{booktabs}   
\usepackage{multirow}   
\usepackage{makecell}   

\usepackage{titletoc} 
\usepackage{tabularx} 

\newtcolorbox{promptcard}[1]{
    enhanced,
    breakable,
    colback=white,
    colframe=black!35,
    colbacktitle=black!55,
    coltitle=white,
    title={#1},
    boxrule=0.5pt,
    arc=2pt,
    left=6pt,
    right=6pt,
    top=5pt,
    bottom=5pt,
    fontupper=\footnotesize,
    width=\linewidth
}

\title{CompEvo: Competition-Induced Evolution for Multi-Agent in News-Driven Time Series Forecasting}

\author{
  \textbf{Yuxuan Zhang}\textsuperscript{1,*} \quad
  \textbf{Yangyang Feng}\textsuperscript{1,*} \quad
  \textbf{Yong Guan}\textsuperscript{2} \quad
  \textbf{Daifeng Li}\textsuperscript{1,\textdagger} \\
  \textbf{Kexin Zhang}\textsuperscript{1} \quad
  \textbf{Junlan Chen}\textsuperscript{1} \quad
  \textbf{Bowen Deng}\textsuperscript{1} \quad
  \textbf{Jun Liu}\textsuperscript{3} \quad
  \textbf{Zehua Zeng}\textsuperscript{3} \\
  \textsuperscript{1}Sun Yat-sen University, Guangzhou, China \\
  \textsuperscript{2}North China Electric Power University, Beijing, China \\
  \textsuperscript{3}Unilumin Group Co., Ltd., Shenzhen, China \\
  \texttt{\{zhangyx528,fengyy33\}@mail2.sysu.edu.cn} \quad
  \texttt{lidaifeng@mail.sysu.edu.cn} \\
  \textsuperscript{*}Equal contribution. \quad
  \textsuperscript{\textdagger}Corresponding author.
}

\newif\ifwithappendix
\withappendixtrue

\begin{document}
\maketitle

\begin{abstract}


News-driven time series forecasting (NTSF) uses evolving textual events together with historical observations to predict future values, supporting applications such as market risk monitoring and resource scheduling. In multi-agent settings, two challenges still remain. The first is degeneration of thought, where agents converge to similar evidence-seeking behaviors. The second is insufficient theoretical grounding, where strategy updates are often heuristic and lack a principled formulation. To address the above challenges, we propose \textbf{CompEvo}, a competition-induced evolution framework for multi-agent NTSF. At the theoretical level, our evolutionary game formulation establishes equilibrium existence and local convergence under the stated assumptions. At the methodological level, we construct a trainable multi-agent evolution framework that integrates strategy execution, fitness-based differentiable selection, and competition-induced strategy evolution. CompEvo enables heterogeneous agents to explore diverse news evidence, converts forecasting feedback into differentiable influence weights, and evolves agent strategies under competitive pressure to preserve effective logic while maintaining diversity. Experiments on four real-world datasets show that CompEvo reduces RMSE by 29.8\% and MAPE by 30.3\% on average relative to the strongest multi-agent baseline. Further analysis indicates that CompEvo maintains diverse and specialized agent behaviors\footnote{The code and data are available at \url{https://github.com/Aliiina-z/CompEvo}.}.





\end{abstract}

\section{Introduction}

\begin{figure}[t]
  \centering
  \includegraphics[width=\linewidth]{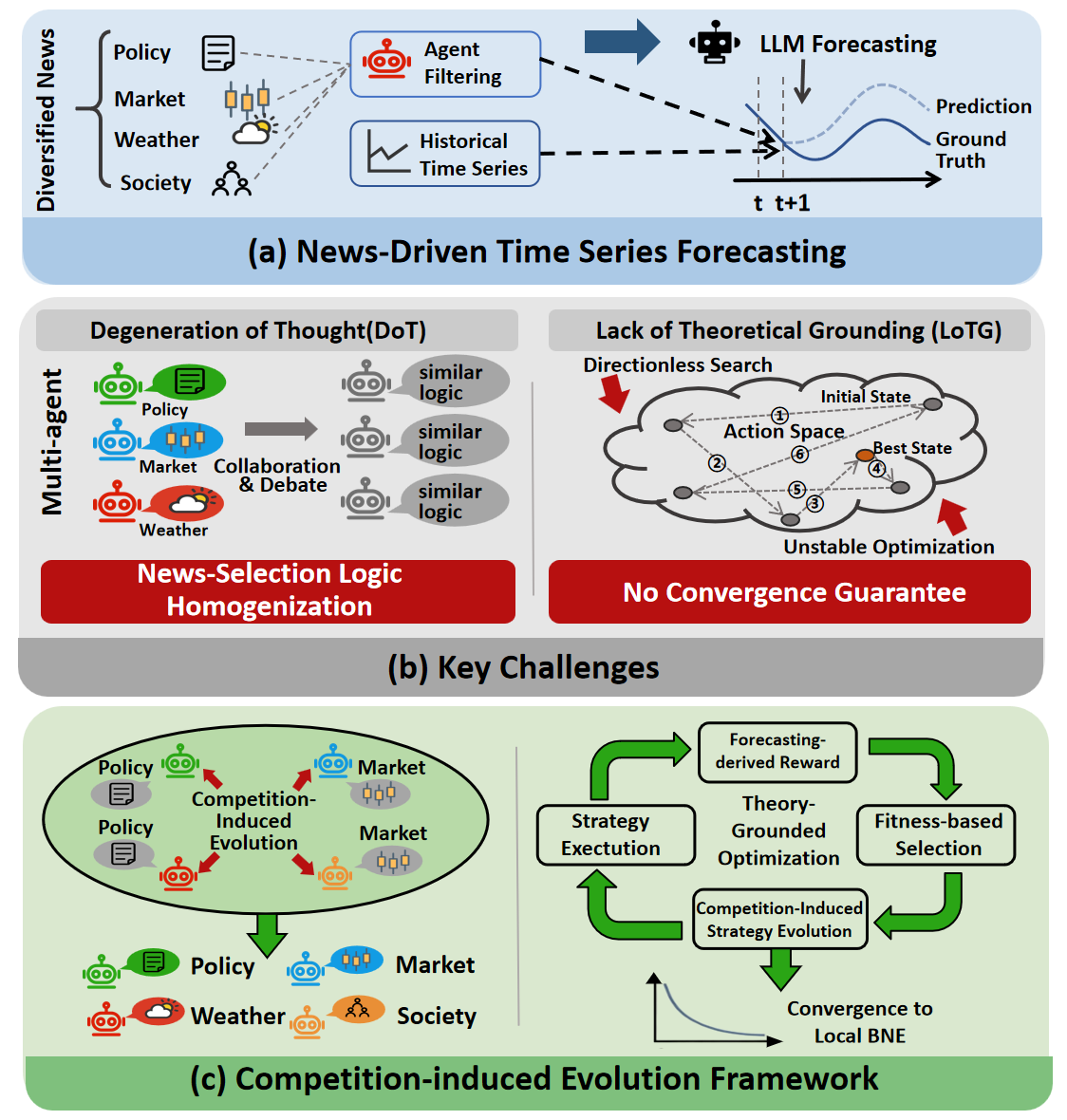}
\caption{
Illustration of CompEvo.
}
  \label{fig:intro_framework}
  \vspace{-15pt}
\end{figure}




News-driven time series forecasting (NTSF) predicts future numerical values by combining historical observations with time-aligned news events \cite{Wang2024, xiong2025beyond}. This task is vital in settings such as electricity demand planning, financial risk monitoring, and traffic management, where external events can abruptly alter future trends. Accurately capturing these evolving external factors is therefore essential. In the absence of explicit prior knowledge, the model must explore evidence-seeking strategies to identify informative news signals for forecasting.

Recent LLM-based agent methods improve NTSF by retrieving and reasoning over news evidence. However, a single agent often relies on a fixed evidence-seeking strategy, potentially missing complementary evidence. Multi-agent discussion and debate frameworks provide a cooperative paradigm to introduce diverse perspectives \cite{wang2026maspo, nguyen2026hearboth}, but applying them to NTSF encounters two challenges. The first is \textbf{degeneration of thought (DoT)}, where agents gradually imitate peer reasoning and converge to homogeneous evidence-seeking strategies \cite{chan2024chateval, Estornell2024MultiLLMDebate}. This homogenization causes overlapping evidence retrieval and restricted exploration. The second is \textbf{insufficient theoretical grounding (ITG)}, where existing competitive or evolutionary multi-agent designs rely on heuristic prompt-level interactions. Despite the emergence of end-to-end debate frameworks based on reinforcement learning (RL), they lack formal proofs of convergence and stability. Without a principled objective linking forecasting feedback to strategy updates, these methods lack theoretical guarantees for stable optimization and strategy retention.


Interestingly, evolutionary game theory (EGT), a paradigm widely used to model innovation and strategy adaptation under partial observations \cite{TomIScience2024, PatraPhy2025}, provides an insightful perspective for this task. Under EGT, individuals receive payoffs from their actions, successful strategies expand via selection, and continuous variation prevents population homogenization \cite{qu2026coral}. This process mirrors NTSF, where agents update their evidence-seeking strategies according to newly observed news evidence and forecasting feedback.

At the methodological level, we propose \textbf{CompEvo}, a competition-induced evolution framework that models agents as a competing population (Figure~\ref{fig:intro_framework}). CompEvo consists of three modules: \textit{strategy execution}, \textit{fitness-based selection}, and \textit{competition-induced strategy evolution}. First, strategy execution prompts agents to seek news evidence under their current strategies. Second, fitness-based selection converts forecasting feedback into differentiable influence weights and transforms strategy retention into an optimizable process, ensuring that strategy updates remain theoretically grounded. Third, competition-induced strategy evolution updates agent strategies with partial peer information, encouraging diverse evidence exploration while avoiding homogenization.

At the theoretical level, we formulate the multi-agent interaction as an incomplete-information evolutionary game and establish the existence of a mixed-strategy Bayesian Nash equilibrium. We further analyze local convergence, long-term regret, and fitness-weight monotonicity under the stated assumptions, providing a formal account of stable strategy retention under differentiable optimization. Our contributions are summarized as follows:


\vspace{-0.2cm}
\begin{itemize}
\item We formulate multi-agent NTSF as an incomplete-information evolutionary game and characterize equilibrium existence, local convergence, long-term regret, and fitness-weight monotonicity under the stated assumptions.
\vspace{-0.3cm}
\item We propose CompEvo, a trainable framework that integrates strategy execution, fitness-based selection, and competition-induced strategy evolution, enabling agents to continuously evolve and maintain diverse, specialized evidence-seeking behaviors.
\vspace{-0.3cm}
\item We evaluate CompEvo on four real-world benchmarks, reducing RMSE by 29.8\% and MAPE by 30.3\% relative to the strongest multi-agent baseline, and release the implementation and data to support reproducibility.

\end{itemize}

\section{Related Work}

\subsection{LLMs for News-Driven Forecasting}

Current research on LLM based time series forecasting can be grouped into three categories. Pure numerical forecasting maps historical observations into token space via reprogramming or fine-tuning to capture temporal dependencies, largely bypassing external context \cite{zhou2024one, guo2026tllm, qiu2026rethinking}. Text-aware forecasting incorporates aligned news signals or temporal descriptions generated by LLMs to model event-driven fluctuations \cite{liu2025dpgpt4mts, xiong2025beyond}. Agentic and multi-agent forecasting further enables LLM agents to retrieve, filter, refine, or discuss news evidence before prediction \cite{Wang2024, chen-etal-2025-semob, park-etal-2025-maporl}. However, these methods typically rely on fixed evidence-seeking strategies or consensus based coordination, making them less suitable for NTSF, where heterogeneous news selection strategies can remain valid simultaneously and should evolve under forecasting feedback.

\subsection{Evolutionary Game-Theoretic Approaches}
\label{sec:related_work_mas_egt}

EGT provides mathematical foundations for long-term strategy adaptation and population-level selection \cite{smith1973logic}. Recent studies introduce EGT principles into LLM systems for agent generation, prompt-level selection, and discrete strategy mutation \cite{yuan-etal-2025-evoagent, qu2026coral}, while competitive network topologies offer alternative interaction structures for LLM agents \cite{XuWolf2025}. However, existing frameworks rely on discrete or prompt-level operations, lacking a differentiable mechanism that preserves diverse evidence-seeking strategies and links forecasting feedback to strategy retention. This gap motivates our formulation of news-driven forecasting as a competition-induced evolutionary process with differentiable selection and strategy evolution.


\begin{figure*}[htbp]
  \includegraphics[width=\linewidth]{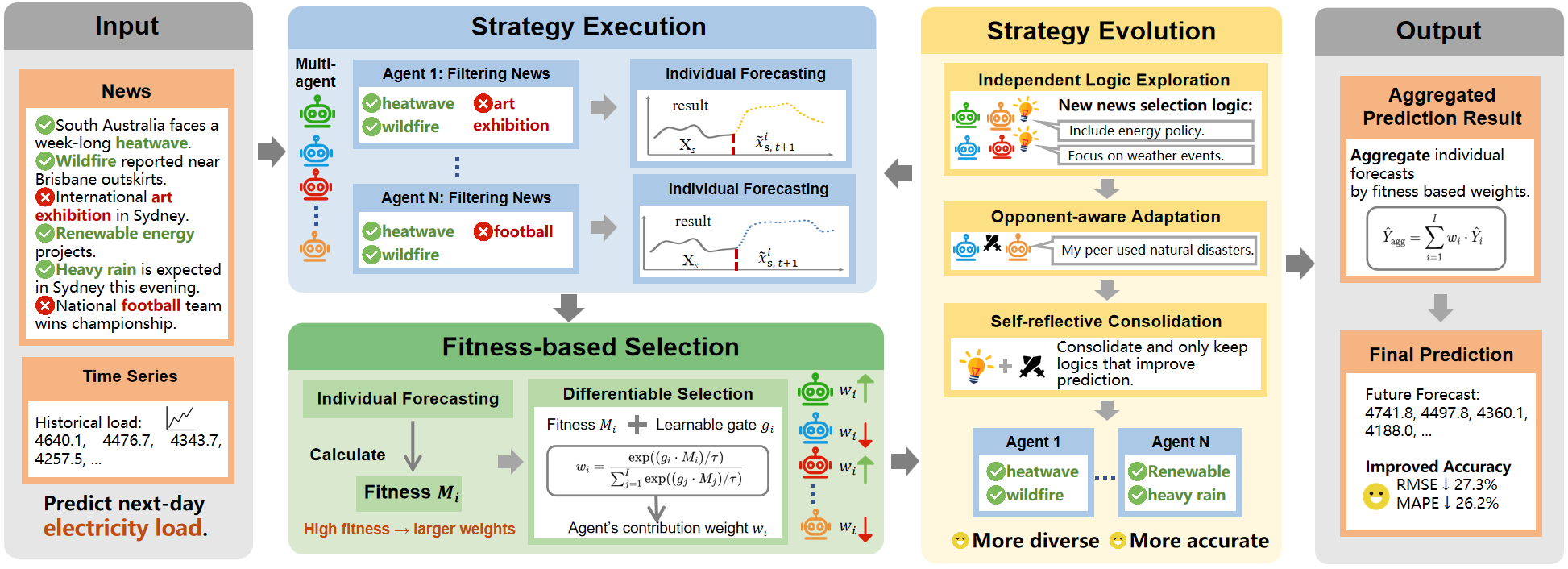}
  \centering
  \caption{Overview of CompEvo for multi-agent NTSF.}
  \vspace{-15pt}
  \label{framework}
\end{figure*}

\section{Preliminaries}

We formulate NTSF as a multi-agent competitive process over textual strategy representations. Let $\mathbf{x}_{t-L:t-1} = \{x_{t-L}, \dots, x_{t-1}\}$ denote the historical numerical observations of length $L$, and $\mathcal{C}_t$ represent the time-aligned public news pool available at forecasting step $t$. The core objective is to predict the future numerical value $x_t \in \mathbb{R}$ via a unified mapping.

\vspace{-0.3cm}
\begin{equation}
\hat{x}_t = \mathcal{F}(\mathbf{x}_{t-L:t-1}, \mathcal{C}_t).
\label{eq:prelim_task}
\end{equation}
\vspace{-0.6cm}

A population of $I$ heterogeneous agents competes to optimize this forecasting task. Each agent $i \in \{1, \dots, I\}$ follows an agent-specific evidence-seeking strategy to filter and interpret the public news pool. Under this strategy, agent $i$ extracts a customized news subset $\mathcal{C}_{t,i} \subseteq \mathcal{C}_t$ and generates an individual prediction $\hat{x}_{t,i} = \mathcal{F}_i(\mathbf{x}_{t-L:t-1}, \mathcal{C}_{t,i})$. The agents observe identical historical data but differ in how their strategies transform shared raw evidence into distinct forecasting behaviors.


\section{Methodology}
\label{sec:methodology}




As illustrated in Figure~\ref{framework}, CompEvo is a trainable multi-agent framework built upon a shared LLM backbone, where agents maintain separate parameter-efficient fine-tuning (PEFT) parameters to preserve agent-specific adaptation. Each agent $i$ maintains a parameterized strategy $\pi_i(a \mid o_i, \theta_i)$, where $\theta_i$ denotes the agent-specific training parameters and $a$ denotes the action of the generated strategy. In our implementation, this action is performed as textual logic $l_i$, which specifies how the agent retrieves the news evidence and produces an individual forecast. We therefore use strategy to refer to the parameterized policy-level object and logic to refer to its generated textual realization. Agents share the same backbone for efficiency, while agent-specific adapters and competitive updates produce heterogeneous textual logic realizations.

\subsection{Strategy Execution}


Strategy execution specifies how each agent samples or generates textual logic from its current strategy, then uses this logic to filter relevant external evidence and produce an individual forecast. To model competition under incomplete information, the observable state of agent $i$ in round $e \leq E$ is defined as follows\footnote{A round denotes one strategy-update cycle among agents over a data batch, while an epoch consists of multiple rounds and denotes one full pass over the training data.}.

\vspace{-0.3cm}
\begin{equation}
o_i^{(e)} = \{X_s, \mathcal{C}_s, l_i^{(e-1)}, l_{-i}^{(e-1)}\},
\label{eq:agent_state}
\end{equation}
where $X_s$ denotes the historical time series context, $\mathcal{C}_s$ denotes the shared candidate news pool, $l_i^{(e-1)}$ is the logic used by agent $i$ in the previous round, and $l_{-i}^{(e-1)}$ denotes the exposed logic of other agents. The agent samples a strategy action from the policy $\pi_i^{(e)}(a \mid o_i^{(e)}, \theta_i)$. This action is realized as the updated textual logic $l_i^{(e)}$. For optimization, the textual logic is further represented by its hidden state $h_i^{(e)}$. Implementation details are deferred to Appendix~\ref{sec:implementation_details}.

Conditioned on $l_i^{(e)}$, agent $i$ uses its logic as an agent-specific retrieval-and-interpretation rule to obtain a news subset $\mathcal{C}_{i,s}^{(e)}$ from the shared candidate pool $\mathcal{C}_s$ and produce an individual 48-step prediction vector $\hat{\mathbf{Y}}_{i,s}^{(e)} \in \mathbb{R}^{48}$ for the target window $\mathbf{Y}_s \in \mathbb{R}^{48}$. Keeping these forecasts separate converts their errors into agent-level rewards for selection and evolution instead of erasing specialization through early averaging.

\subsection{Fitness-based Selection}

Fitness-based selection converts forecasting feedback into differentiable agent weights for group prediction.
To reduce the effect of noisy results from a single round, we distinguish the immediate reward of an agent from its long-term fitness across rounds, and use the latter to guide smooth selection among agents. The reward of agent $i$ in round $e$ is defined as the negative forecasting error.
\vspace{-0.2cm}
\begin{equation}
R_i^{(e)} = -\mathcal{L}_{\text{metric}}(\hat{\mathbf{Y}}_{i,s}^{(e)}, \mathbf{Y}_s),
\label{eq:reward}
\end{equation}
where $\mathcal{L}_{\text{metric}}$ is a differentiable forecasting metric and $\mathbf{Y}_s$ is the ground-truth 48-step target vector. To capture stable competitiveness beyond one-round fluctuations, we maintain an exponential moving average of historical rewards.
\vspace{-0.2cm}
\begin{equation}
M_i^{(e+1)} = \beta M_i^{(e)} + (1-\beta) R_i^{(e)},
\label{eq:fitness}
\end{equation}
where $\beta \in [0,1)$ controls reward smoothing. Selection is then implemented through a differentiable weighting rule.
\begin{equation}
\label{eq:weight_softmax}
w_i = \frac{\exp((g_i \cdot M_i)/\tau)}{\sum_{j=1}^{I} \exp((g_j \cdot M_j)/\tau)},
\end{equation}
where $g_i$ is a learnable gate and $\tau$ is the temperature. Agents with higher fitness gain larger influence, and the learned weights define aggregation and forecasting loss.
\vspace{-0.3cm}
\begin{equation}
\hat{\mathbf{Y}}_{\text{agg}} = \sum_{i=1}^{I} w_i \hat{\mathbf{Y}}_{i,s}^{(e)},
\end{equation}
where $\hat{\mathbf{Y}}_{\text{agg}}$ is the aggregated prediction and $\hat{\mathbf{Y}}_{i,s}^{(e)}$ is the individual prediction vector of agent $i$. The forecasting objective is defined below.
\vspace{-0.3cm}
\begin{equation}
L_{\text{pred}} = \mathcal{L}_{\text{metric}}(\hat{\mathbf{Y}}_{\text{agg}}, \mathbf{Y}_s).
\label{eq:predict_loss}
\end{equation}


\subsection{Competition-Induced Strategy Evolution}

Strategy evolution updates each agent's strategy action, which is realized as textual logic, after observing its own state and partial peer information.

\noindent \textbf{Opponent context.}
An agent cannot know whether a new candidate logic will perform well before using it for forecasting. Each agent therefore updates its next-round textual logic from both its own local state and the exposed textual logic of other agents. This lets the agent use peer information without fully replacing its own reasoning direction.

\noindent \textbf{Three-stage update.}
We implement the logic update with a three-stage gated generator. First, agent $i$ generates a candidate logic representation from its current state.
\begin{equation}
\bar{h}_i^{(e)} = \mathrm{LLM}_{\mathrm{LE}}(o_i^{(e)}),
\end{equation}
where $o_i^{(e)}$ is the current state of agent $i$, $\bar{h}_i^{(e)} \in \mathbb{R}^{d}$ denotes the candidate logic representation, and $\mathrm{LLM}_{\mathrm{LE}}$ denotes the logic-evolution module. This stage preserves independent exploration across agents. Second, the agent forms an opponent context from the exposed logic representations of other agents in the previous round.
\vspace{-0.1cm}
\begin{equation}
c_i^{(e)} = \sum_{j \neq i} \alpha_{ij}\bar{h}_j^{(e-1)},
\end{equation}
where $c_i^{(e)}$ denotes the opponent context, $\alpha_{ij}$ denotes the interaction weight from agent $j$ to agent $i$, and $\bar{h}_j^{(e-1)}$ is the candidate logic representation of agent $j$ from the previous round. $\sigma(\cdot)$ is the sigmoid function and $\odot$ denotes element-wise multiplication. The opponent context is then fused with the agent's previous logic through a gate.
\begin{gather}
g_{2,i} = \sigma\!\left(W_{g2}[\bar{h}_i^{(e-1)}, c_i^{(e)}] + b_{g2}\right), \\
h_i^{m,(e)} = (1-g_{2,i}) \odot \bar{h}_i^{(e-1)} + g_{2,i} \odot c_i^{(e)}.
\end{gather}
where $g_{2,i}$ is the second-stage gate, $W_{g2}$ and $b_{g2}$ are trainable projection parameters, and $h_i^{m,(e)}$ denotes the opponent-informed logic representation. Third, the agent combines the newly generated candidate with the opponent-informed representation.
\vspace{-0.2cm}
\begin{gather}
g_{3,i} = \sigma\!\left(W_{g3}[h_i^{m,(e)}, \bar{h}_i^{(e)}] + b_{g3}\right), \\
h_i^{(e)} = (1-g_{3,i}) \odot h_i^{m,(e)} + g_{3,i} \odot \bar{h}_i^{(e)}.
\end{gather}
where $g_{3,i}$ is the third-stage gate, $W_{g3}$ and $b_{g3}$ are trainable projection parameters, and $h_i^{(e)}$ denotes the final logic representation. The final representation is decoded into the next textual logic and used in the following forecasting round.

\noindent \textbf{Textual logic interface.}
We instantiate this update in the backbone's representation space. Each candidate logic is tokenized by the shared Llama-3.1-8B tokenizer and encoded with the corresponding agent's logic adapter; the last-layer state at the final non-padding token gives a 4,096-dimensional representation. Row-normalized interaction scores $\alpha_{ij}$ form the opponent context, and the two gates fuse it with the agent representation elementwise. The fused state is projected to one soft-prompt embedding and decoded autoregressively into the textual logic used in the next forecasting round. Exact adapter and optimization settings remain in Appendix~\ref{sec:implementation_details}.

\noindent \textbf{Policy optimization.}
We optimize logic generation with GRPO~\cite{deepseek_math_grpo}. GRPO uses relative rewards within the agent group as the following policy improvement signal.
\begin{equation}
\begin{aligned}
L_{\text{pg}}(\theta_i)
&= \mathbb{E}_{a \sim \pi_{i,\theta_i^{\mathrm{old}}}(a|o_i)} \Bigg[
\frac{\pi_{i,\theta_i}(a|o_i)}{\pi_{i,\theta_i^{\mathrm{old}}}(a|o_i)} \\
&\qquad\qquad {}\times \hat{A}_i(a, o_i) \Bigg].
\end{aligned}
\label{eq:pg_loss}
\end{equation}
where $a$ denotes a generated token in the textual logic $l_i$, $\pi_{i,\theta_i^{\mathrm{old}}}$ is the previous policy of agent $i$, and $\hat{A}_i(a,o_i)$ is its group-relative advantage. Further details are deferred to Appendix~\ref{sec:implementation_details}. 

\begin{table*}[htbp]
\centering
\setlength{\tabcolsep}{5pt}
\scriptsize
\renewcommand{\arraystretch}{0.5}
\begin{tabular}{llcccccccccc}
\toprule
\textbf{Dataset} & \textbf{Metrics} & \textbf{CompEvo} & \textbf{DLin.} & \textbf{iTrans.} & \textbf{FiLM} & \textbf{Pyra.} & \textbf{FED.} & \textbf{GPT4TS} & \textbf{N2F} & \textbf{GPT4MTS} & \textbf{LTP-LLM} \\
\midrule

\multirow{4}{*}{\textbf{Electricity}}
& MAE & \textbf{220.15 $\pm$ 2.4} & 255.70 & 233.58 & 254.05 & 544.64 & 238.77 & 236.91 & 250.71 & \underline{229.40} & 232.10 \\
& MSE$_{\times10^{3}}$ & \underline{125.44 $\pm$ 1.8} & 135.05 & \textbf{97.61} & 153.90 & 625.00 & 133.96 & 142.60 & 192.24 & 133.50 & 136.80 \\
& RMSE & \underline{354.18 $\pm$ 2.5} & 367.49 & \textbf{312.42} & 392.38 & 790.54 & 365.44 & 377.62 & 438.45 & 365.38 & 369.86 \\
& MAPE (\%) & \textbf{6.45 $\pm$ 0.05} & 7.29 & 6.86 & 6.81 & 8.26 & 7.27 & \underline{6.72} & 7.60 & \underline{6.72} & 6.85 \\

\midrule

\multirow{4}{*}{\textbf{Exchange}}
& MAE$_{\times10^{-3}}$ & \textbf{4.35 $\pm$ 0.08} & 6.96 & 27.04 & 5.24 & 40.18 & 35.19 & 15.05 & 13.43 & 4.82 & \underline{4.75} \\
& MSE$_{\times10^{-4}}$ & \textbf{0.35 $\pm$ 0.01} & 9.06 & 11.59 & 0.77 & 24.50 & 18.45 & 4.01 & 17.72 & \underline{0.46} & 0.49 \\
& RMSE$_{\times10^{-2}}$ & \textbf{0.59 $\pm$ 0.01} & 9.52 & 3.41 & 0.88 & 4.95 & 4.30 & 2.00 & 4.21 & \underline{0.68} & 0.70 \\
& MAPE (\%) & \textbf{0.60 $\pm$ 0.02} & 0.92 & 3.96 & \underline{0.70} & 5.93 & 5.17 & 1.34 & 5.69 & 0.75 & 0.78 \\

\midrule

\multirow{4}{*}{\textbf{Traffic}}
& MAE$_{\times10^{-2}}$ & \textbf{1.48 $\pm$ 0.03} & 1.70 & \underline{1.56} & 1.61 & 1.69 & 1.74 & 1.64 & 1.84 & 1.58 & \underline{1.56} \\
& MSE$_{\times10^{-3}}$ & \textbf{0.95 $\pm$ 0.02} & 1.67 & 1.54 & 1.49 & \underline{0.97} & 1.43 & 1.45 & 1.81 & 1.25 & 1.18 \\
& RMSE$_{\times10^{-2}}$ & \textbf{3.08 $\pm$ 0.04} & 4.09 & 3.93 & 3.86 & \underline{3.12} & 3.79 & 3.81 & 4.26 & 3.54 & 3.44 \\
& MAPE (\%) & \textbf{3.75 $\pm$ 0.10} & 4.55 & 4.12 & 4.28 & \underline{3.85} & 4.68 & 4.33 & 4.20 & 4.15 & 4.08 \\

\midrule

\multirow{4}{*}{\textbf{Bitcoin}}
& MAE$_{\times10^{-3}}$ & \textbf{2.65 $\pm$ 0.05} & 5.74 & 3.20 & 3.17 & 9.22 & 3.96 & 2.84 & 5.10 & 2.92 & \underline{2.80} \\
& MSE$_{\times10^{-3}}$ & \textbf{1.28 $\pm$ 0.02} & 5.09 & 1.62 & 1.64 & 12.36 & 2.46 & 1.37 & 3.33 & 1.45 & \underline{1.36} \\
& RMSE$_{\times10^{-2}}$ & \textbf{3.58 $\pm$ 0.03} & 7.13 & 4.03 & 4.05 & 11.12 & 4.96 & 3.70 & 5.78 & 3.81 & \underline{3.69} \\
& MAPE (\%) & \textbf{4.70 $\pm$ 0.12} & 10.39 & 5.70 & 5.64 & 16.16 & 6.97 & 5.08 & 6.65 & 5.20 & \underline{4.95} \\
\bottomrule
\end{tabular}
\caption{
Performance comparison on four datasets. Results for CompEvo are reported as mean $\pm$ standard deviation over 5 runs. A one-sided Wilcoxon signed-rank test against LTP-LLM and GPT4MTS gives $p<0.05$ on each dataset. Bold and underlined values mark the lowest and second-lowest errors.
}
\label{tab:comparison_results}
\end{table*}

\subsection{Training Objective}

The training objective jointly optimizes forecasting, strategy update, diversity, and pruning, keeping competition differentiable while discouraging similar logic and weak-agent influence.

\vspace{-0.8cm}
\begin{equation}
L = L_{\text{pred}} + \lambda_{\text{PG}} L_{\text{pg}} + \lambda_{\text{diversity}} L_{\text{div}} + \lambda_{\text{pruning}} L_{\text{prune}}.
\label{eq:total_loss}
\end{equation}
where $L_{\text{pred}}$, $L_{\text{pg}}$, $L_{\text{div}}$, and $L_{\text{prune}}$ correspond to forecasting accuracy, strategy update, logic diversity, and weak-agent suppression, respectively. $\lambda_{\text{PG}}$, $\lambda_{\text{diversity}}$, and $\lambda_{\text{pruning}}$ are balancing coefficients. 
The diversity loss is defined below.
\vspace{-0.3cm}
\begin{equation}
L_{\text{div}} = \sum_{i=1}^{I} \sum_{j=i+1}^{I} \frac{h_i^{(e)} \cdot h_j^{(e)}}{\|h_i^{(e)}\|_2 \|h_j^{(e)}\|_2},
\label{eq:diversity_loss}
\end{equation}
where $h_i^{(e)}$ and $h_j^{(e)}$ are the logic representations of agents $i$ and $j$ in round $e$. This term penalizes high cosine similarity to encourage different evidence-seeking patterns. 
The pruning loss is defined below.

\vspace{-0.3cm}
\begin{equation}
L_{\text{prune}} = \sum_{i=1}^{I} |g_i|,
\label{eq:pruning_loss}
\end{equation}
where $g_i$ is the gate in Equation~(\ref{eq:weight_softmax}).

The four terms act at distinct points in the trainable loop. $L_{\text{pred}}$ evaluates the aggregate forecast, $L_{\text{pg}}$ converts relative reward into logic updates, and $L_{\text{div}}$ separates candidate reasoning paths before fusion. Together with gated weighting, $L_{\text{prune}}$ suppresses persistently weak agents without eliminating them. The objective thereby connects forecasting feedback, strategy variation, and differentiable selection.

\subsection{Theoretical Justification}

The theoretical analysis states why the competitive learning process has a stable solution and fitness-based selection is consistent with the intended strategy update, providing guarantees that underpin the stability and reliability of CompEvo. Detailed statements and proofs are provided in Appendix~\ref{sec:theory}.

In our formulation, each agent is treated as a player, its parameterized policy for generating textual logic defines a mixed strategy, and the reward derived from forecasting performance serves as the payoff under partial information. The induced interaction admits a mixed-strategy Bayesian Nash equilibrium (BNE)  (Theorem~\ref{thm:bne_existence}), ensuring that the competitive update process is well-defined even when agents only observe partial information about others.

The gradient dynamics can converge to a local equilibrium under the differentiable objective (Theorem~\ref{thm:convergence}), which supports stable strategy updates through gradient-based optimization. The regret analysis further shows that the competitive update rule is more favorable than debate-based interaction (Theorem~\ref{thm:regret_bound} and Lemma~\ref{lemma:linear_regret}), indicating improved long-term forecasting performance.

The monotonicity result (Proposition~\ref{prop:monotonicity}) confirms that influence weights increase with long-term fitness under the soft selection mechanism, ensuring that strong strategies are retained while weak strategies gradually lose influence. Together, these results justify the main design choices of CompEvo, with fitness-based selection realizing competitive selection, strategy evolution supporting adaptation under incomplete information, and diversity regularization discouraging early textual-logic collapse.


\section{Experiments}
\label{sec:experiments}

\subsection{Experimental Setup}
\textbf{Datasets. } 
We evaluate CompEvo on four NTSF benchmarks: Electricity for energy demand, Exchange for financial markets, Traffic for transportation flows, and Bitcoin for cryptocurrency dynamics. Dataset provenance, scale, and example news inputs are reported in Appendix~\ref{dataset}.

\noindent \textbf{Metrics. } 
We evaluate forecasting quality using mean absolute error (MAE), mean squared error (MSE), root mean squared error (RMSE), and mean absolute percentage error (MAPE) following prior work~\cite{zhou2023ptse,Wang2024}. These metrics measure the average, squared, or percentage deviations between predicted and true values, with lower values indicating better performance. 

In addition, we design LUD to evaluate whether agents maintain diverse strategy updates during training. LUD measures meaningful textual-logic changes rather than superficial rephrasing by combining embedding-level update magnitude with a rubric-based innovation check. For agent $i$ at update step $t$, LUD uses the Rubric-based Innovation Update Score (RIUS) and is computed as follows.
\vspace{-0.2cm}
\begin{equation}
\mathrm{LUD}_i^t = \Delta_i^t \cdot \mathbb{I}(\mathrm{RIUS}_i^t \ge 1),
\end{equation}
where $\Delta_i^t$ measures embedding-level logic change, and RIUS is produced by an LLM judge, with details provided in Appendix~\ref{app:rius_prompt}.

\noindent \textbf{Baselines. } 
The baselines are organized into three categories.
%
(1) \textit{Numerical forecasting baselines}, which rely solely on historical time-series signals, including DLinear, iTransformer, FiLM, Pyraformer, Fedformer, and GPT4TS.
%
(2) \textit{News-aware forecasting baselines}, which explicitly incorporate external news or events for forecasting, including News2Forecast (N2F), GPT4MTS, and LTP-LLM.
%
(3) \textit{Agent-based baselines}, including a single-agent baseline (SA), multi-agent ensemble learning (MAEL), multi-agent evolution with GPT API (MAE-GPT), and multi-agent debate with reinforcement learning (MAD-RL). Detailed descriptions are provided in Appendix~\ref{sec:baselines}. 
%

\noindent \textbf{Implementation. }
We use 10 agents with a shared backbone. Unless otherwise specified, the main tables report this default setting. CompEvo performs direct multi-step forecasting: each call predicts the full 48-step target window in one output, and no predicted value is fed back into a later forecasting call. All compared agent-based methods follow this protocol. Detailed implementation settings are provided in Appendix~\ref{sec:implementation_details}.

\begin{table}[t]
\centering
\small
\setlength{\tabcolsep}{3.5pt}
\renewcommand{\arraystretch}{1.05}
\resizebox{\columnwidth}{!}{%
\begin{tabular}{lclrr}
\toprule
Method & Agents & Model(s) & Trainable params & Train steps \\
\midrule
SA & 1 & Llama-3.1-8B & 41.9M & 2,193 \\
MAEL & 10 & Llama-3.1-8B & 419.4M & 2,193 \\
MAE-GPT & 10 & GPT-4o + Llama-3.1-8B & 419.4M & 2,193 \\
MAD-RL & 10 & Llama-3.1-8B & 1,258.4M & 2,193 \\
CompEvo & 10 & Llama-3.1-8B & 922.8M & 2,193 \\
\bottomrule
\end{tabular}
}
\caption{Agent-based baseline configurations. All methods use a 4,096-token context. Counts exclude frozen backbone parameters and the frozen GPT-4o selector in MAE-GPT.}
\label{tab:baseline_configuration_main}
\end{table}

The comparison is not capacity matched: CompEvo uses 26.7\% fewer trainable parameters than MAD-RL and 2.20$\times$ as many as MAEL or MAE-GPT. The main result therefore compares interaction mechanisms under disclosed, rather than identical, trainable capacity.


\subsection{Main Results}

CompEvo consistently achieves strong performance across all datasets and metrics, demonstrating robust improvements beyond a single evaluation criterion. Table~\ref{tab:comparison_results} reports a full comparison against numerical, news-aware, and LLM-based forecasting baselines. The results show that CompEvo remains competitive on MAE, MSE, RMSE, and MAPE, indicating that the gains are not limited to one metric or one domain. 

Table~\ref{tab:simple_avg_results} isolates the single-agent and multi-agent setting. For each dataset, we compute the relative error reduction over MAD-RL and then report the macro average across the four datasets. Under this protocol, CompEvo lowers RMSE by \textbf{29.8\%} and MAPE by \textbf{30.3\%}. Compared with SA, CompEvo consistently achieves lower errors, highlighting that its gains stem from population-level competition rather than stronger individual agents. It also outperforms MAEL, showing that the improvement arises from competitive evolution rather than simple LoRA-based aggregation. Against MAE-GPT and MAD-RL, CompEvo preserves effective strategies and maintains diverse evidence-seeking patterns more effectively, demonstrating the advantages of end-to-end optimization and competition-driven adaptation.



\begin{table}[htbp]
\centering
\setlength{\tabcolsep}{4pt}
\renewcommand{\arraystretch}{0.5}
\scriptsize
\resizebox{\columnwidth}{!}{%
\begin{tabular}{llccccc}
\toprule
\textbf{Dataset} & \textbf{Metrics} & \textbf{CompEvo} & \textbf{SA} & \textbf{MAEL} & \textbf{MAE-GPT} & \textbf{MAD-RL} \\
\midrule
\multirow{4}{*}{\textbf{Electricity}}
& MAE & \textbf{220.15} & \underline{245.50} & 246.13 & 254.61 & 248.76 \\
& MSE$_{\times10^{3}}$ & \textbf{125.44} & 185.10 & 177.68 & 195.15 & \underline{166.67} \\
& RMSE & \textbf{354.18} & 430.20 & 421.52 & 441.76 & \underline{408.25} \\
& MAPE (\%) & \textbf{6.45} & 7.45 & 7.42 & 7.49 & \underline{6.98} \\

\midrule
\multirow{4}{*}{\textbf{Exchange}}
& MAE$_{\times10^{-3}}$ & \textbf{4.35} & 13.10 & 11.26 & 11.69 & \underline{9.03} \\
& MSE$_{\times10^{-4}}$ & \textbf{0.35} & 17.20 & 12.82 & 18.32 & \underline{8.24} \\
& RMSE$_{\times10^{-2}}$ & \textbf{0.59} & 4.15 & 3.58 & 4.28 & \underline{2.87} \\
& MAPE (\%) & \textbf{0.60} & 5.50 & 4.55 & 5.58 & \underline{3.75} \\

\midrule
\multirow{4}{*}{\textbf{Traffic}}
& MAE$_{\times10^{-2}}$ & \textbf{1.48} & 1.80 & 1.78 & \underline{1.65} & 1.69 \\
& MSE$_{\times10^{-3}}$ & \textbf{0.95} & 1.75 & 1.62 & 1.88 & \underline{1.44} \\
& RMSE$_{\times10^{-2}}$ & \textbf{3.08} & 4.18 & 4.02 & 4.34 & \underline{3.79} \\
& MAPE (\%) & \textbf{3.75} & 4.90 & 4.73 & 5.04 & \underline{4.66} \\

\midrule
\multirow{4}{*}{\textbf{Bitcoin}}
& MAE$_{\times10^{-3}}$ & \textbf{2.65} & 3.14 & 3.09 & \underline{2.82} & 2.94 \\
& MSE$_{\times10^{-3}}$ & \textbf{1.28} & 3.14 & 1.91 & 1.69 & \underline{1.51} \\
& RMSE$_{\times10^{-2}}$ & \textbf{3.58} & 5.60 & 4.37 & 4.11 & \underline{3.89} \\
& MAPE (\%) & \textbf{4.70} & 6.50 & 5.35 & \underline{4.95} & 5.22 \\
\bottomrule
\end{tabular}
}
\caption{Agent baseline comparison.}
\label{tab:simple_avg_results}
\vspace{-0.4cm}
\end{table}

\subsection{Backbone Robustness}
CompEvo remains effective across different backbone models on Electricity, suggesting that its gains are not tied to a single architecture. Table~\ref{tab:model_generalization_summary} reports these results as supplementary robustness evidence.

\begin{table}[htbp]
\centering
\small
\setlength{\tabcolsep}{3pt}
\renewcommand{\arraystretch}{0.9}
\begin{tabular}{lcccc}
\toprule
\textbf{Model} & \textbf{RMSE} & \textbf{MSE} & \textbf{MAE} & \textbf{MAPE (\%)} \\
\midrule
Mistral-v0.1-7B & 366.84 & 134.57 & 245.13 & 7.18 \\
Gemma-2-9B & 364.12 & 132.58 & 228.14 & 6.69 \\
Llama-2-7B & 368.53 & 135.81 & 248.79 & 7.25 \\
Llama-3.1-8B & 354.18 & 125.44 & 220.15 & 6.45 \\
Llama-3.1-70B & 342.57 & 117.35 & 212.08 & 6.22 \\
Qwen-2.5-7B & 363.27 & 131.97 & 227.43 & 6.68 \\
Qwen-2.5-32B & 349.83 & 122.38 & 218.09 & 6.38 \\
\bottomrule
\end{tabular}
\caption{Backbone robustness on Electricity. MSE is reported in $10^3$, and MAPE is reported in percentage.}
\vspace{-0.4cm}
\label{tab:model_generalization_summary}
\end{table}

\begin{table*}[htbp]
\centering
\scriptsize
\renewcommand{\arraystretch}{0.5}
\resizebox{\textwidth}{!}{%
\begin{tabular}{@{}l|cccc|cccc@{}}
\cmidrule(r){1-9}
\multicolumn{1}{l|}{\multirow{2}{*}{\textbf{Model Variant}}} & \multicolumn{4}{c|}{\textbf{Electricity}} & \multicolumn{4}{c}{\textbf{Exchange}} \\ 
\cmidrule(lr){2-9}
\multicolumn{1}{l|}{} & RMSE & MSE\( \times 10^{3} \) & MAE & MAPE\( (\%) \) & RMSE\( \times 10^{-2} \) & MSE\( \times 10^{-4} \) & MAE\( \times 10^{-3} \) & MAPE\( (\%) \) \\ 
\cmidrule(r){1-9}
Simplified Logic Gen & 445.53 & 198.50 & 255.24 & 7.65 & 4.10 & 16.81 & 14.84 & 4.85 \\
Uniform Weights       & 452.12 & 204.41 & 265.53 & 7.93 & 4.82 & 23.23 & 20.53 & 5.65 \\
w/o $L_{\text{diversity}}$ & 410.56 & 168.56 & 240.20 & 7.05 & 3.88 & 15.05 & 9.50  & 3.84 \\
w/o $L_{\text{pruning}}$   & 405.22 & 164.20 & \underline{238.51} & \underline{6.95} & 3.76 & 14.13 & 9.81  & 3.92 \\
w/o GRPO        & \underline{388.51} & \underline{150.94} & 242.08 & 7.12 & \underline{1.82} & \underline{3.31} & \underline{7.97} & \underline{2.82} \\
CompEvo & \textbf{354.18} & \textbf{125.44} & \textbf{220.15} & \textbf{6.45} & \textbf{0.59} & \textbf{0.35} & \textbf{4.35} & \textbf{0.60} \\
\cmidrule(r){1-9}

\multirow{2}{*}{} & \multicolumn{4}{c|}{\textbf{Traffic}} & \multicolumn{4}{c}{\textbf{Bitcoin}} \\ 
\cmidrule(lr){2-9}
\multicolumn{1}{l|}{} & RMSE\(\times 10^{-2} \) & MSE\( \times 10^{-3} \) & MAE\( \times 10^{-2} \) & MAPE\( (\%) \) 
& RMSE\( \times 10^{-2} \) & MSE\( \times 10^{-3} \) & MAE\( \times 10^{-3} \) & MAPE\( (\%) \) \\ 
\cmidrule(r){1-9}
Simplified Logic Gen & 4.29 & 1.84 & 1.92 & 5.46 & 4.13 & 1.71 & 3.11 & 6.84 \\
Uniform Weights      & 4.75 & 2.26 & 2.25 & 6.07 & 4.45 & 1.98 & 3.37 & 7.81 \\
w/o GRPO       & \underline{3.54} & \underline{1.25} & 1.70 & 4.36 & \underline{3.68} & \underline{1.35} & \underline{2.79} & 5.18 \\
w/o $L_{\text{diversity}}$ & 3.95 & 1.56 & \underline{1.68} & \underline{4.30} & 3.85 & 1.48 & 2.86 & \underline{5.11} \\
w/o $L_{\text{pruning}}$   & 4.02 & 1.62 & 1.75 & 4.52 & 3.95 & 1.56 & 2.97 & 5.36 \\
CompEvo & \textbf{3.08} & \textbf{0.95} & \textbf{1.48} & \textbf{3.75} & \textbf{3.58} & \textbf{1.28} & \textbf{2.65} & \textbf{4.70} \\
\cmidrule(r){1-9}
\end{tabular}
}
\caption{Architectural ablations across four datasets. Each variant changes one component while retaining the remaining model and training setup.}
\label{tab:ablation}
\end{table*}

\subsection{Ablation Study}

Table~\ref{tab:ablation} evaluates five variants under the same backbone, data splits, and training setup. Each variant changes one component; coefficient sensitivity is analyzed separately in Section~\ref{sec:hyperparameter_sensitivity}.

\begin{itemize}
    \setlength{\itemsep}{0pt}
    \setlength{\parskip}{0pt}
    \setlength{\parsep}{0pt}
    \item \textbf{Simplified Logic Gen:} replaces the three-stage opponent-aware update with a simplified logic generator.
    \item \textbf{Uniform Weights:} replaces fitness-based selection with equal aggregation ($w_i=1/I$).
    \item \textbf{w/o $L_{\text{diversity}}$:} removes the penalty that prevents agents from learning similar logic.
    \item \textbf{w/o $L_{\text{pruning}}$:} removes the suppression of persistently weak agents.
    \item \textbf{w/o GRPO:} removes the group-relative policy update for logic generation.
\end{itemize}

Every variant performs worse than CompEvo across all datasets and metrics. Simplified Logic Gen and Uniform Weights cause the largest overall drops, highlighting the importance of opponent-aware logic evolution and fitness-based selection. The other three variants show that diversity control, pruning, and GRPO provide additional, consistent gains.
Because the backbone and temporal split remain unchanged, these results also suggest that the gains come from competitive adaptation rather than static memorization.

\noindent \textbf{Leakage-controlled inference check.}
We additionally evaluate the trained model without further updates on 100 Electricity samples from February 19 to March 30, 2026, using AEMO demand data aligned with GDELT news. This temporally later set follows the original 48-step input/output construction and is excluded from training, validation, model selection, and hyperparameter tuning.

\begin{table}[t]
\centering
\small
\setlength{\tabcolsep}{5pt}
\begin{tabular}{lcc}
\toprule
Metric & Original test & 2026 inference \\
\midrule
MSE & 125440.00 & 111141.83 \\
RMSE & 354.18 & 333.38 \\
MAE & 220.15 & 225.02 \\
MAPE (\%) & 6.45 & 6.99 \\
\bottomrule
\end{tabular}
\caption{Post-training inference on temporally later Electricity samples.}
\label{tab:memorization_risk_2026_main}
\end{table}

Table~\ref{tab:memorization_risk_2026_main} shows comparable performance, with RMSE 333.38 and MAPE 6.99\%. This sanity check argues against a memorization-only explanation, while it does not replace evaluation on continuously updated news streams. Additional protocol details are provided in Appendix~\ref{subsec:performance_source_discussion}.

\subsection{Optimization Stability and Efficiency}

\begin{figure}[htbp]
    \centering
    \includegraphics[width=0.48\textwidth]{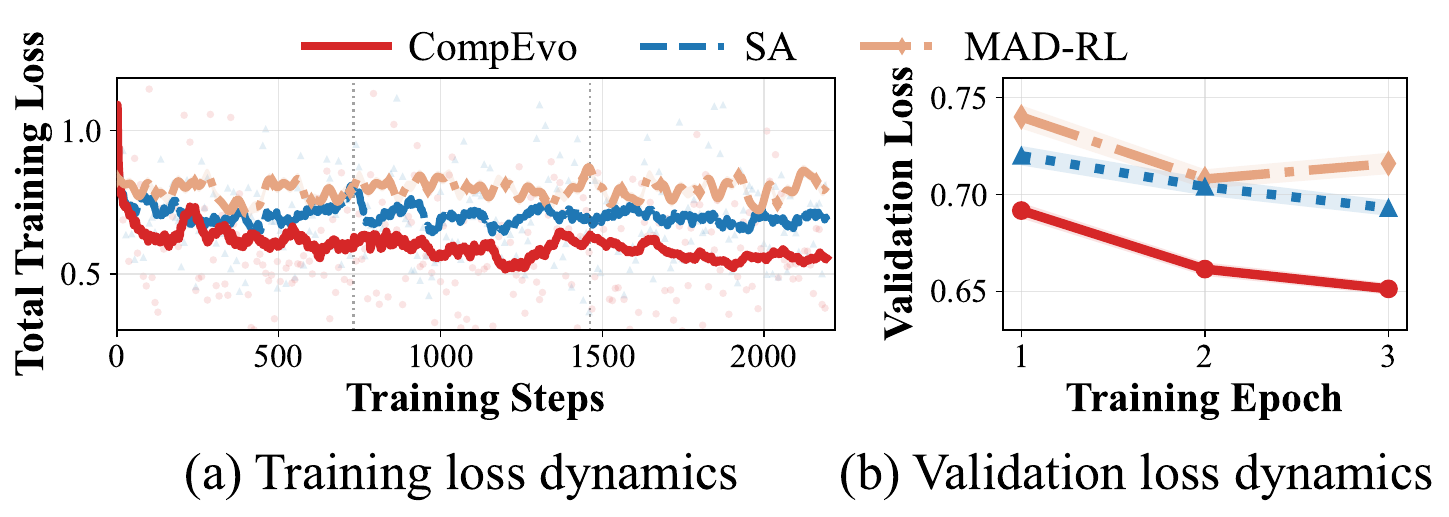}
    \caption{Training dynamics on the Electricity dataset.}
    \label{fig:convergence_loss}
\end{figure}

CompEvo follows a faster and more stable convergence trajectory than SA and MAD-RL while keeping multi-agent computational overhead moderate. As shown in Figure~\ref{fig:convergence_loss}(a), CompEvo's training loss drops more rapidly and remains consistently lower, while SA converges slowly and MAD-RL exhibits pronounced oscillations. Figure~\ref{fig:convergence_loss}(b) shows a similar trend on validation loss, indicating consistent generalization without overfitting. This pattern supports the optimization claim and aligns with Theorem~\ref{thm:convergence}, as updates are guided by a differentiable objective rather than heuristic or prompt-based rules.

\begin{figure}[htbp]
    \centering
    \includegraphics[width=0.38\textwidth]{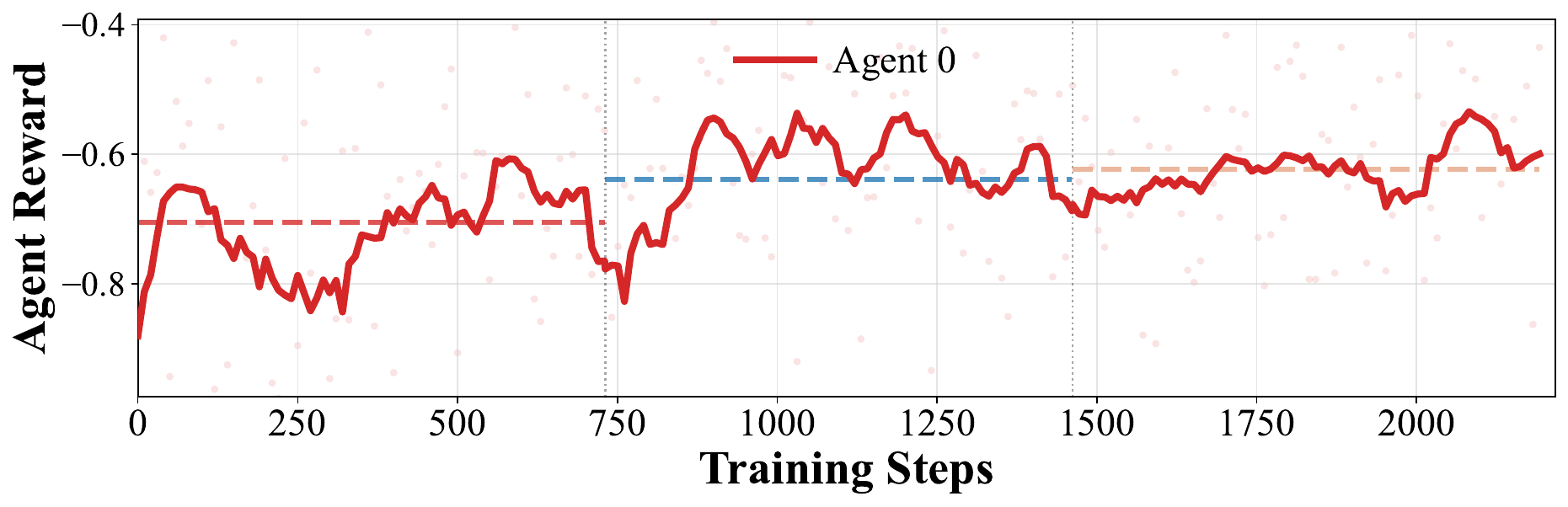}
    \caption{Agent reward during training, where the plateau aligns with the Nash-equilibrium interpretation.}
    \label{fig:agent_reward}
\end{figure}

Figure~\ref{fig:agent_reward} shows a representative agent's reward trajectory. After the initial improvement, the reward plateaus, and further updates produce minimal changes once other agents' strategies and the ensemble are fixed. This indicates that the agent has reached a stable strategy, consistent with a Nash equilibrium as stated in Theorem~\ref{thm:bne_existence}. All agents follow similar reward trends but maintain diverse evidence-seeking strategies, allowing the ensemble to leverage complementary information.

Table~\ref{tab:efficiency_comparison_main} reports a measured efficiency profile on Electricity using three agents to isolate interaction overhead. CompEvo requires 14.5 hours per epoch, 8.57 seconds per inference step, and 16.4 GB of peak memory. Its inference cost lies between MAEL, which averages independent forecasts, and MAD-RL, which performs multi-turn discussion. These measurements characterize the controlled three-agent setting and do not estimate the cost of the default 10-agent configuration.

\begin{table}[htbp]
\centering
\small
\setlength{\tabcolsep}{6pt}
\renewcommand{\arraystretch}{1.05}
\resizebox{\columnwidth}{!}{%
\begin{tabular}{lccc}
\toprule
\textbf{Metric} & \textbf{MAEL} & \textbf{MAD-RL} & \textbf{CompEvo} \\
\midrule
Train time (h/ep) & 11.4 & 40.0 & 14.5 \\
Latency (s/step) & 5.61 & 19.70 & 8.57 \\
Peak memory (GB) & 14.8 & 24.6 & 16.4 \\
\bottomrule
\end{tabular}
}
\caption{Measured efficiency on Electricity under the three-agent setting.}

\label{tab:efficiency_comparison_main}
\end{table}

\subsection{Strategy-Update Diversity and Specialization}
CompEvo maintains stronger strategy-update diversity and clearer agent specialization than collaboration-based baselines. This analysis directly supports our claim that CompEvo maintains diverse and specialized agent behaviors. We measure these properties using LUD and the number of unique news items selected by each agent. As shown in Figure~\ref{fig:logic_evolution_specialization}(a), CompEvo keeps a higher LUD across epochs, while removing diversity regularization or using MAD leads to faster LUD decay. This suggests that diversity-controlled competition prevents premature convergence and preserves heterogeneous reasoning updates.

Figure~\ref{fig:logic_evolution_specialization}(b) further supports agent specialization. CompEvo agents select more unique news items and develop distinct topical focuses, such as regulation, climate, infrastructure, and supply chain. Fitness-based competition preserves some differentiation even without diversity regularization, whereas MAD's consensus updates quickly flatten diversity.

\begin{figure}[htbp]
  \centering
  \includegraphics[width=1\columnwidth]{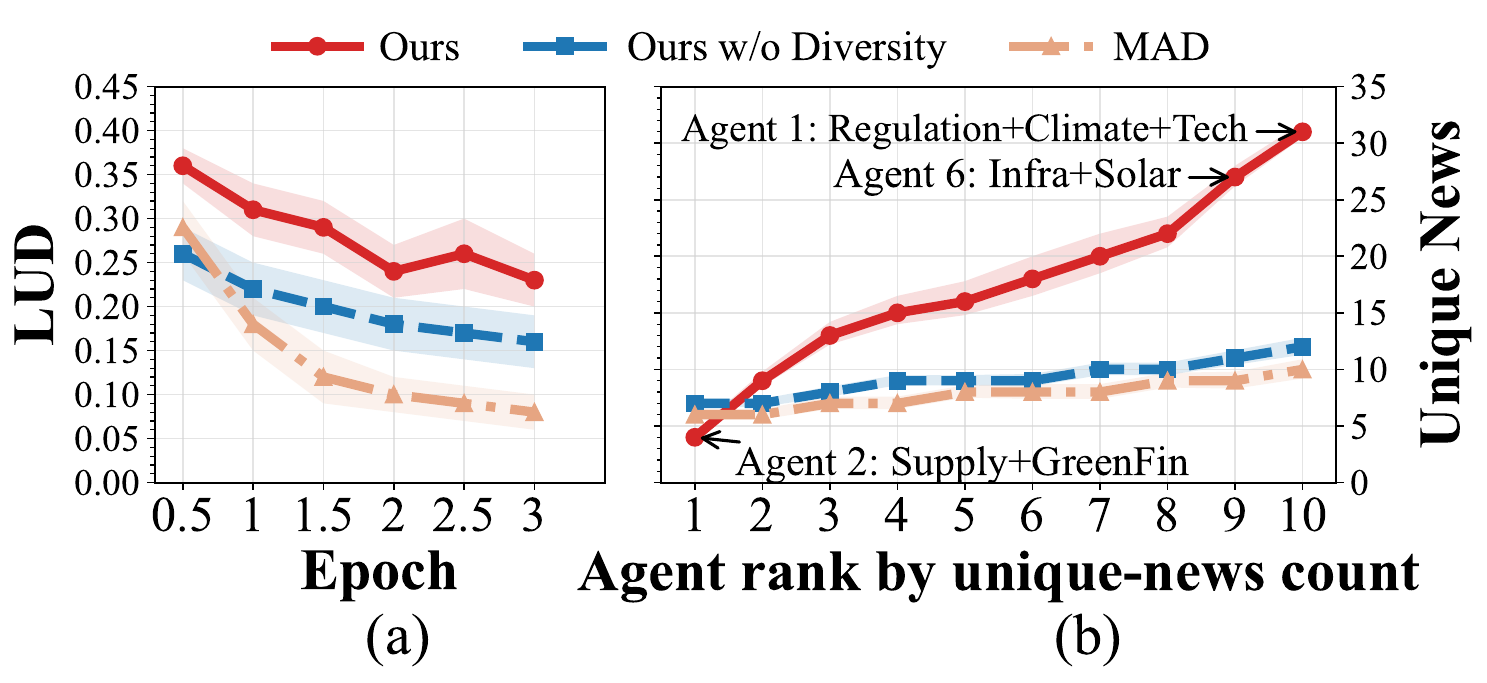}
  \caption{Strategy-update diversity and agent specialization. 
(a) Average LUD per agent across epochs. 
(b) Rank plot of unique news counts over 100 steps.}
\label{fig:logic_evolution_specialization}
\end{figure}

Three appendix checks support this mechanism-level reading. The embedding visualization in Section~\ref{sec:mechanism_visualization} shows clearer logic separation under diversity regularization, higher LUD is associated with larger error reduction (Table~\ref{tab:lud_error_correlation}), and the LUD ranking remains stable across models (Table~\ref{tab:judge_robustness}).

\subsection{Hyperparameter Sensitivity}
\label{sec:hyperparameter_sensitivity}

Unlike the architectural removals in Table~\ref{tab:ablation}, this analysis retains both regularizers and varies only their coefficients. Both diversity and pruning regularization work best at moderate strength.
Figure~\ref{fig:sensitivity_analysis_main} shows representative sweeps on Electricity and Exchange. 
Diversity regularization reaches the best balance around $\lambda_{\text{diversity}}=0.1$, where RMSE is low while logic similarity remains controlled. 
Pruning regularization performs best around $\lambda_{\text{pruning}}=0.01$, suppressing weak agents without overly shrinking the population. 
Similar trends on the remaining datasets and additional hyperparameter results are provided in Appendix~\ref{sec:LLMs_tests}.



\vspace{-10pt}
\begin{figure}[htbp]
  \centering
  \includegraphics[width=\columnwidth]{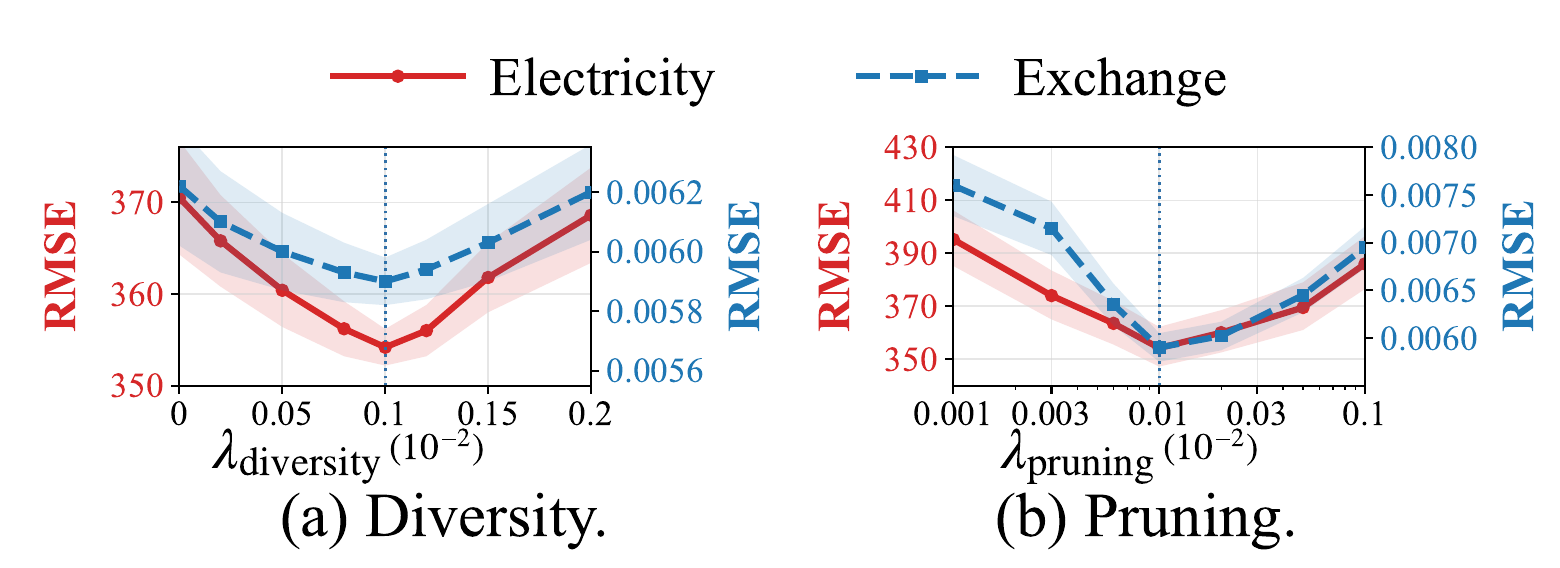}
\caption{Diversity and pruning regularization.}
  \label{fig:sensitivity_analysis_main}
\end{figure}
\vspace{-5pt}

\subsection{Robustness  under News Perturbations}
\label{sec:noisy_news_main}

CompEvo shows robust performance under news perturbations. Figure~\ref{fig:noisy_news_robustness_main} presents RMSE on the Electricity and Exchange datasets under different perturbations: shuffled news order (Shuffle), irrelevant news injection (Irrel.), and noisy news injection (Noisy), compared to clean input (Orig.). CompEvo consistently degrades more slowly than MAEL, MAE-GPT, and MAD-RL, indicating that fitness-based selection effectively down-weights strategies that overreact to misleading news. Similar patterns are observed on the other datasets (see Appendix~\ref{sec:noisy_news}).

\begin{figure}[htbp]
  \centering
  \includegraphics[width=\columnwidth]{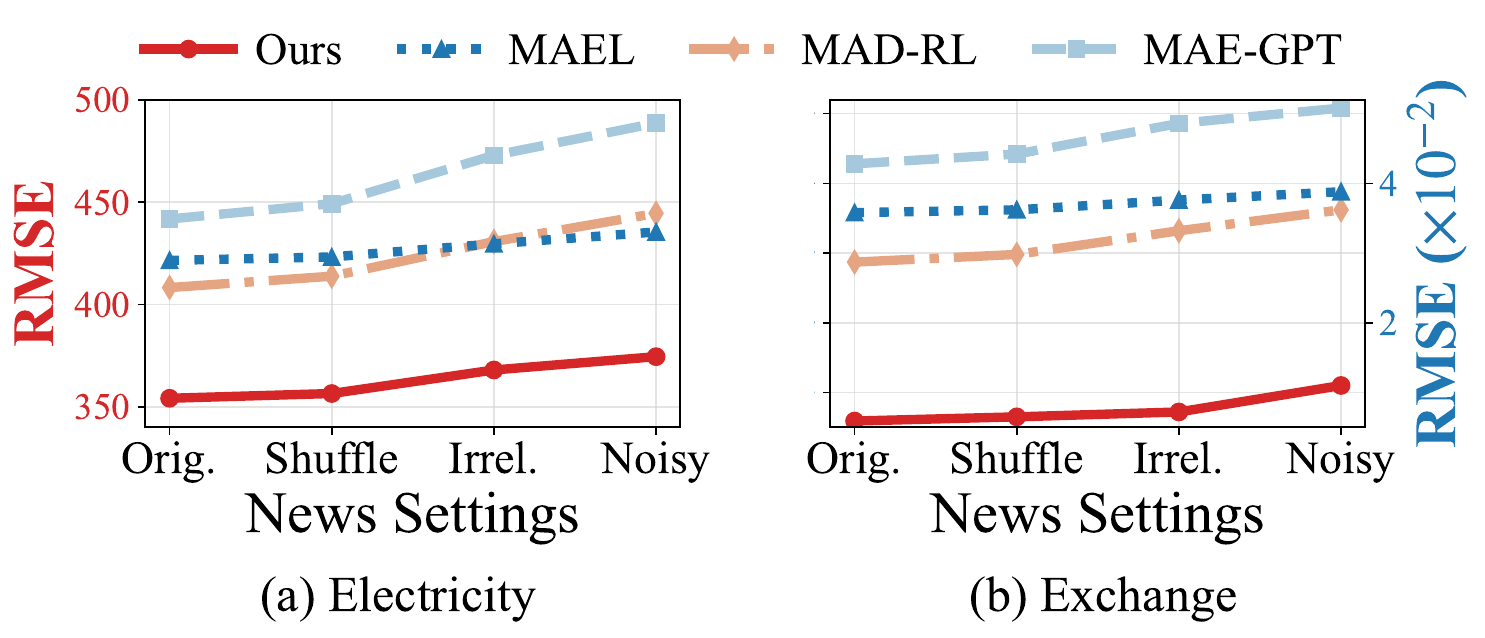}
\caption{RMSE under news perturbations.}

  \label{fig:noisy_news_robustness_main}
\end{figure}
\vspace{-10pt}



\subsection{Strategy Diversity Visualization}
\label{sec:mechanism_visualization}

We visualize the final-epoch hidden logic embeddings of 10 agents in 3D (Figure~\ref{fig:logic_embedding_appendix}). CompEvo agents spread broadly, specializing in distinct semantic areas such as Regulation+Climate+Tech, Supply+GreenFin, and Infra+Solar, while MAD-RL agents cluster tightly, indicating collapsed strategies. This shows that diversity regularization and competitive evolution enable heterogeneous reasoning and specialization, whereas debate-based coordination leads to homogenized embeddings.

\begin{figure}[htbp]
  \centering
  \includegraphics[width=0.7\columnwidth]{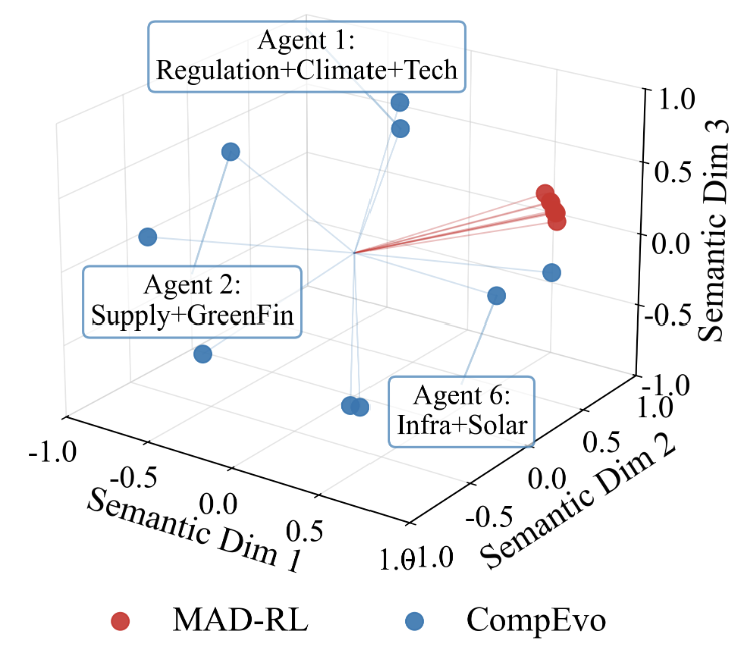}
    \caption{
    Agent strategy embeddings, red (MAD-RL) and blue (CompEvo).
    }
    \vspace{-0.3cm}
  \label{fig:logic_embedding_appendix}
\end{figure}



\section{Conclusions}
In this paper, we propose CompEvo, a competition-induced evolution framework for multi-agent news-driven time series forecasting. 
Inspired by evolutionary game theory, CompEvo formulates agent competition as a unified differentiable objective, enabling agents to adapt through end-to-end training. 
Experimental results show that CompEvo improves forecasting accuracy while maintaining diverse and specialized agent strategies.
Ablation, convergence, and perturbation analyses link these gains to fitness-guided specialization that remains robust across backbones and evidence quality.

\clearpage 

\section*{Limitations}
CompEvo is validated on NTSF benchmarks, where external textual events are directly relevant to future numerical values. Extending the framework to more general forecasting tasks or broader decision-making scenarios is a promising direction for future work.

This work focuses on textual news evidence. Real-world event signals may also include images, videos, audio, and structured multimodal records. Incorporating such multimodal evidence could further broaden the applicability of competition-induced multi-agent forecasting.


\bibliography{custom}


\ifwithappendix
\clearpage
\appendix

\phantomsection
\begin{center}
{\Large Contents of Appendix}
\end{center}
\vspace{0.2cm}

\startcontents[appendix]
\printcontents[appendix]{l}{1}{\setcounter{tocdepth}{2}}

\vspace{1 cm}

\section{Notation}
\label{sec:notation}

\noindent\textbf{Task and agent notation.}

\begin{center}
\scriptsize
\setlength{\tabcolsep}{3pt}
\renewcommand{\arraystretch}{1.08}
\begin{tabularx}{\linewidth}{p{0.25\linewidth}X}
\toprule
\textbf{Symbol} & \textbf{Description} \\
\midrule
$\mathbf{x}_{t-L:t-1}$ & Historical numerical observations from time $t-L$ to $t-1$. \\
$L$ & Length of the historical observation window. \\
$\mathcal{C}_t$ & Time-aligned public news pool at forecasting step $t$. \\
$x_t$ & Ground-truth numerical value at forecasting step $t$. \\
$\hat{x}_t$ & Predicted numerical value at forecasting step $t$. \\
$\mathcal{F}$ & Forecasting mapping from historical observations and news to prediction. \\
$I$ & Number of agents in the population. \\
$i,j$ & Agent indices. \\
$\mathcal{C}_{t,i}$ & News subset selected by agent $i$. \\
$\hat{x}_{t,i}$ & Individual prediction produced by agent $i$. \\
$e$ & Strategy-update round. \\
$E$ & Total number of strategy-update rounds. \\
$o_i^{(e)}$ & Observable state of agent $i$ in round $e$. \\
$X_s$ & Historical time-series context for sample $s$. \\
$\mathcal{C}_s$ & Shared candidate news pool for sample $s$. \\
$l_i^{(e)}$ & Textual logic generated or used by agent $i$ in round $e$. \\
$l_{-i}^{(e)}$ & Exposed textual logic of agents except agent $i$. \\
\bottomrule
\end{tabularx}
\end{center}

\vspace{0.2cm}
\noindent\textbf{Strategy evolution and selection notation.}

\begin{center}
\scriptsize
\setlength{\tabcolsep}{3pt}
\renewcommand{\arraystretch}{1.08}
\begin{tabularx}{\linewidth}{p{0.25\linewidth}X}
\toprule
\textbf{Symbol} & \textbf{Description} \\
\midrule
$\pi_i(a \mid o_i,\theta_i)$ & Parameterized strategy policy of agent $i$. \\
$a$ & Strategy action sampled from the policy and realized as textual logic. \\
$\theta_i$ & Trainable parameters of agent $i$. \\
$h_i^{(e)}$ & Hidden representation of agent $i$'s textual logic. \\
$\bar{h}_i^{(e)}$ & Candidate logic representation generated by agent $i$. \\
$c_i^{(e)}$ & Opponent context aggregated from peer logic representations. \\
$\alpha_{ij}$ & Interaction weight from agent $j$ to agent $i$. \\
$g_{2,i}, g_{3,i}$ & Gating variables in the logic-evolution process. \\
$W_{g2}, W_{g3}$ & Trainable projection matrices for gated logic fusion. \\
$b_{g2}, b_{g3}$ & Trainable bias terms for gated logic fusion. \\
$\odot$ & Element-wise multiplication. \\
$\sigma(\cdot)$ & Sigmoid activation function. \\
$R_i^{(e)}$ & Immediate reward of agent $i$ in round $e$. \\
$M_i^{(e)}$ & Long-term fitness of agent $i$ in round $e$. \\
$\beta$ & EMA coefficient for reward smoothing. \\
$g_i$ & Learnable gate in fitness-based selection. \\
$\tau$ & Temperature parameter in the soft selection rule. \\
$w_i$ & Aggregation weight assigned to agent $i$. \\
$\hat{\mathbf{Y}}_{i,s}^{(e)}$ & Individual 48-step prediction vector of agent $i$. \\
$\hat{\mathbf{Y}}_{\mathrm{agg}}$ & Fitness-weighted aggregated prediction vector. \\
$\mathbf{Y}_s$ & Ground-truth 48-step target vector for sample $s$. \\
\bottomrule
\end{tabularx}
\end{center}

\vspace{0.2cm}
\noindent\textbf{Optimization and theoretical notation.}

\begin{center}
\scriptsize
\setlength{\tabcolsep}{3pt}
\renewcommand{\arraystretch}{1.08}
\begin{tabularx}{\linewidth}{p{0.25\linewidth}X}
\toprule
\textbf{Symbol} & \textbf{Description} \\
\midrule
$\mathcal{L}_{\mathrm{metric}}$ & Differentiable forecasting metric used for reward and prediction loss. \\
$L_{\mathrm{pred}}$ & Forecasting loss of the aggregated prediction. \\
$L_{\mathrm{pg}}$ & Policy-gradient loss for strategy evolution. \\
$L_{\mathrm{total}}$ & Overall training objective. \\
$\lambda_{\mathrm{PG}}$ & Weight of the policy-update loss. \\
$\lambda_{\mathrm{diversity}}$ & Weight of the diversity regularization term. \\
$\lambda_{\mathrm{pruning}}$ & Weight of the pruning regularization term. \\
$\Pi_i$ & Strategy space of agent $i$. \\
$\pi_i^*$ & Equilibrium strategy of agent $i$. \\
$\pi_{-i}$ & Strategies of all agents except agent $i$. \\
$BR_i$ & Best-response correspondence of agent $i$. \\
$V_i^\pi$ & Value function of agent $i$ under policy $\pi$. \\
$Q_i^\pi$ & Q-function of agent $i$ under policy $\pi$. \\
$\gamma$ & Discount factor for expected future rewards. \\
$RE_i(T)$ & Bayesian regret of agent $i$ over $T$ steps. \\
$\epsilon_t$ & Estimation error term in the regret analysis. \\
$\Delta_t$ & Policy suboptimality term in the regret analysis. \\
\bottomrule
\end{tabularx}
\end{center}

\section{Theoretical Justification}
\label{sec:theory}
This appendix provides theoretical support for our EGT-inspired framework, covering the existence of a stable equilibrium, local convergence of the differentiable dynamics, and sublinear regret under standard regularity assumptions. We also compare the framework against adversarial debate dynamics.

\subsection{Existence of a BNE}
\label{sec:bne_proof}

We theoretically support the stability of our multi-agent framework by establishing the existence of a BNE, where no agent can improve its expected payoff by unilaterally changing its strategy, given the strategies of others. The existence of such an equilibrium suggests that the competition-driven evolutionary process can admit a stable strategy profile, as supported by the conditions of Glicksberg's fixed point theorem~\cite{glicksberg1952further}.

\begin{define}[Best Response Correspondence]
For each agent $i$, its best response correspondence $BR_i$ is defined as
\begin{equation}
\begin{aligned}
    BR_i(\pi_{-i}) = & \{ \pi_i \in \Pi_i \mid \\
    & \pi_i \text{ maximizes } R_i(\theta_i, \pi_i, \pi_{-i}) \}
\end{aligned}
\end{equation}
where $\Pi_i$ is the strategy space for agent $i$, and $\pi_{-i}$ denotes the strategies of all other agents.
\end{define}

\begin{theorem}[Existence of BNE]
\label{thm:bne_existence}
In the multi-agent competition framework, a BNE strategy profile $\pi^* = (\pi_1^*, \dots, \pi_N^*)$ exists.
\end{theorem}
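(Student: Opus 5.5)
The plan is to invoke Glicksberg's theorem, as the paragraph before the Best Response Correspondence definition suggests. Glicksberg's theorem says that a game with finitely many players, nonempty compact metric (or compact convex Hausdorff) pure strategy sets, and continuous payoffs has a mixed-strategy Nash equilibrium. The Bayesian layer comes from treating each agent's private observation $o_i$ as its type. Concretely, I will pass to the ex-ante (agent-normal-form) game, in which a strategy of agent $i$ is a map from types to (distributions over) actions and the payoff is $\mathbb{E}_{o}[R_i]$. A Nash equilibrium of this ex-ante game is exactly a BNE of the original game.

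\textbf{Step 1: fixing the primitives.} The players are the $I$ agents (the statement's $N$ equals $I$). The type of agent $i$ is $o_i^{(e)}$ from Equation~(\ref{eq:agent_state}), and the joint prior over types is the data distribution over samples $s$. I will take the action set of agent $i$ to be the set of textual logics of length at most some cap $K$ over the finite Llama vocabulary, which is finite. Since training uses finite batches, I will also take the type sets to be finite, or at least the set of types that occur with positive probability. Under this choice, a pure strategy of agent $i$ is a function from a finite type set to a finite action set, so each pure strategy set $\Pi_i$ is finite. Each $\Pi_i$ is then trivially a compact metric space under the discrete metric, and every payoff on it is continuous.

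\textbf{Step 2: existence.} The ex-ante payoff is $\mathbb{E}_o\bigl[-\mathcal{L}_{\text{metric}}(\hat{\mathbf{Y}}_i,\mathbf{Y}_s)\bigr]$, where $\hat{\mathbf{Y}}_i$ depends on the profile through the opponent-informed update, the evidence selection, and possibly the aggregation weights $w_i$ of Equation~(\ref{eq:weight_softmax}). This payoff is a finite expectation of real numbers. Extending it multilinearly to mixed strategies $\Delta(\Pi_i)$ gives a continuous function on a product of simplices. I will then check Kakutani's conditions for the product best-response map $BR(\pi)=\prod_i BR_i(\pi_{-i})$:
\begin{itemize}
\item it is nonempty, since a continuous function on a compact set attains its maximum;
\item it is convex-valued, since the payoff is linear in the player's own mixed strategy;
\item it has a closed graph, by Berge's maximum theorem.
\end{itemize}
Kakutani then yields a fixed point $\pi^*$. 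This is the finite special case of Glicksberg's theorem.

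For a more general version in which the type space is continuous, or the parameter space of $\theta_i$ is used as the strategy space, I will instead assume the following:
\begin{itemize}
\item a bounded metric on $\Theta_i$, restricted to a compact set (for example, a weight-norm ball);
\item continuity of $\theta\mapsto \pi_i(\cdot\mid o,\theta)$, which holds for softmax policies of a neural network;
\item boundedness of $\mathcal{L}_{\text{metric}}$, so that dominated convergence gives continuity of the expected payoff.
\end{itemize}
I will then apply Glicksberg directly to mixtures over the compact $\Theta_i$.

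\textbf{Main obstacle.} The hard part is the continuity of the payoff, not the fixed-point argument. The payoff passes through discrete operations: news subset selection, decoding of the textual logic, and the EMA fitness $M_i$, which depends on the history. My first attempt will use the finite-action reduction, which sidesteps continuity entirely. I will also treat $M_i$ and the gates $g_i$ as fixed within a round, so that payoffs are well-defined functions of the current profile. If that reduction seems too coarse, the fallback is to smooth selection by working with the sampling distribution $\pi_i(\cdot\mid o_i,\theta_i)$. Expectations over a finite action set are polynomial in the action probabilities and hence continuous in $\theta_i$. I will state these finiteness, boundedness, and continuity conditions explicitly as the assumptions under which the theorem holds.
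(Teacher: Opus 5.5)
Your proof is correct. It uses the same top-level tool as the paper (a Glicksberg/Kakutani fixed point), but you verify the hypotheses in a genuinely different way.

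\textbf{The paper's route.} The paper works directly on the space of policies $\pi_i(a\mid o,\theta)$. It gets compactness by applying Tychonoff to maps between compact action and parameter spaces, gets convexity by mixing, and simply asserts continuity of $R_i$. It obtains quasi-concavity in $\pi_i$ only through a ``local convex relaxation around the observed action subspace.'' This amounts to a Debreu--Glicksberg--Fan pure-strategy argument resting on a local approximation, which cannot support a global fixed-point conclusion.

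\textbf{Your route.} You discretize: logics are bounded-length token strings over a finite vocabulary, and there are finitely many positive-probability types. You then pass to the ex-ante agent-normal form and run Kakutani on a product of simplices.
\begin{itemize}
\item Own-strategy quasi-concavity comes for free, because the mixed extension is linear in a player's own mixed strategy.
\item Continuity is automatic in the finite game.
\item So no relaxation assumption is needed, and the existence claim is genuinely global.
\end{itemize}
Your continuous fallback is the right way to state the general mixed-strategy version: a compact norm ball for $\Theta_i$, continuity of the softmax policy in $\theta$, bounded $\mathcal{L}_{\text{metric}}$ with dominated convergence, and Glicksberg applied to $\Delta(\Theta_i)$. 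One small correction: your parenthetical ``compact convex Hausdorff'' mixes in the pure-strategy theorem. The mixed-strategy Glicksberg theorem needs only compact Hausdorff pure strategy sets, not convex ones.

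\textbf{What your route gives up.} You should say explicitly that the type prior is exogenous only because $l^{(e-1)}$ is frozen, as a point mass, alongside $M_i$ and $g_i$. Your result is therefore a per-round stage-game equilibrium. Moreover, the equilibrium is a distribution over type-contingent logics (or over parameter vectors), so it need not be realized by any single adapter setting $\theta_i$. This means it does not directly connect to the gradient dynamics in the local convergence theorem. The paper's formulation aims at an equilibrium inside the parameterized class, but it never actually establishes one. Your version is the one that proves the stated existence claim, under assumptions you make explicit.
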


\begin{proof}
We examine the three conditions of Glicksberg's theorem and their connections to each agent in the framework:

\noindent Condition 1: The Strategy Space $\Pi_i$ is non-empty, convex, and compact in the topology of convergence.
    \begin{itemize}
        \item Non-empty: the strategy space contains at least one feasible strategy. 
        \item Framework connection: the strategy space $\Pi_i$ consists of all possible strategies $\pi_{i}$ mapping parameters $\theta_i$ of agent $i$ to its action $a_i$. 
        \item Compact: the strategy space is bounded and closed.
        \item Framework connection: since the action $\mathbb{A}_i$ and the parameter space $\Theta_i$ are closed and bounded subsets of a finite-dimensional vector space, the two spaces are compact metric spaces, strategy $\pi_{i}(a|o,\theta)$ is considered as a mapping function from two compact spaces. Thus, the strategy space $\Pi_i$ is also compact based on Tychonoff Theorem.  
        \item Convex: the linear combination of any two strategies still belongs to the strategy space.
        \item Framework connection: in the model, an agent $i$ essentially selects an action $a_i$ according to the probability distribution defined by its policy (mixed strategy). Thus, the strategy space is convex, and any convex combination in the space is measurable.
    \end{itemize}

\noindent Condition 2: The payoff function $R_i$ is continuous for fixed parameter $\theta_i$.
    \begin{itemize}
        \item Framework connection: the payoff function depends continuously on strategies $\pi_{i}$ and $\pi_{-i}$ due to the continuity of the function in actions when $\theta_i$ is fixed. 
    \end{itemize}

\noindent Condition 3: The payoff function $R_i$ is quasi-concave in $\pi_i$ for fixed $\theta_i$ and $\pi_{-i}$.
    In our framework, for $\theta_i$ and $\pi_{-i}$ fixed within a single optimization iteration, we operate under a standard local convex relaxation around the observed action subspace. Under this localized approximation, the payoff function $R_i$ behaves locally as quasi-concave with respect to action $a_i$, ensuring the local properties required for the strategy profiles.

\noindent Conclusion: the framework design satisfies the three conditions of Glicksberg's fixed-point theorem and supports the existence of BNE strategy profiles. This provides a theoretical explanation for the stability of our evolutionary system.
\end{proof}

\subsection{Local convergence analysis}
\label{sec:converge_proof}
\begin{theorem}[Local convergence under regularity assumptions]
\label{thm:convergence}
Under the stated compactness, smoothness, and bounded-variance assumptions, minimizing $L_{\text{total}}$ converges to a stationary point that corresponds to a local equilibrium of the induced multi-agent optimization game.
\end{theorem}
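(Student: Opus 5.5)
The plan is to treat minimization of $L_{\text{total}}$ in \eqref{eq:total_loss} as stochastic gradient descent on a smooth nonconvex objective over the stacked parameters $\Theta=(\theta_1,\dots,\theta_I,g,W_{g2},b_{g2},W_{g3},b_{g3})$. The argument has two parts. First, the standard nonconvex SGD analysis gives convergence to a stationary point. Second, we show that a stationary point of $L_{\text{total}}$ is a local equilibrium of the induced game, in which agent $i$ controls the block $\theta_i$ and the payoff is $-L_{\text{total}}$ restricted to that block.

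We begin by making the stated assumptions concrete. \emph{Compactness} means the parameters lie in a compact set $\Theta$, enforced by projection or by weight decay. \emph{Smoothness} means $\nabla L_{\text{total}}$ is $\ell$-Lipschitz on $\Theta$. We check this term by term:
\begin{itemize}
\item $L_{\text{pred}}$ composes the differentiable metric with the softmax weights \eqref{eq:weight_softmax}. These are smooth in $(g,M)$ for fixed $\tau>0$.
\item The gates $g_{2,i},g_{3,i}$ are sigmoids of affine maps and hence smooth.
\item $L_{\text{div}}$ in \eqref{eq:diversity_loss} is smooth away from $h_i=0$. We assume $\|h_i\|_2\ge \delta>0$ on $\Theta$, or equivalently add $\epsilon$ to the norm.
\item $L_{\text{prune}}=\sum_i|g_i|$ is not differentiable at $0$. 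We either replace it with a Huber or $\sqrt{g_i^2+\epsilon}$ surrogate, or handle it as a proximal term. In the proximal case "stationary" means $0\in\partial L_{\text{total}}$.
\end{itemize}
\emph{Bounded variance} means the GRPO/REINFORCE estimator of $\nabla L_{\text{pg}}$, together with the minibatch gradients, is unbiased, or has bias vanishing with the step size, and has variance at most $\sigma^2$.

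Under these assumptions, the descent lemma gives
\[
\mathbb{E}\,L_{\text{total}}(\Theta_{k+1})\le \mathbb{E}\,L_{\text{total}}(\Theta_k)-\eta_k\bigl(1-\tfrac{\ell\eta_k}{2}\bigr)\mathbb{E}\|\nabla L_{\text{total}}(\Theta_k)\|^2+\tfrac{\ell\eta_k^2\sigma^2}{2}.
\]
Telescoping this bound, using that $L_{\text{total}}$ is bounded below on the compact set, and taking $\sum\eta_k=\infty$, $\sum\eta_k^2<\infty$, yields $\liminf_k\mathbb{E}\|\nabla L_{\text{total}}(\Theta_k)\|^2=0$. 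With $\eta_k\asymp 1/\sqrt{K}$ it yields $\min_{k\le K}\mathbb{E}\|\nabla L_{\text{total}}\|^2=O(1/\sqrt K)$. For almost-sure convergence of the iterates to the stationary set, we will invoke Robbins--Siegmund on the supermartingale above. Compactness supplies accumulation points, and each of them is stationary.

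The link to a local equilibrium goes as follows. At a stationary point $\Theta^*$ every block gradient $\nabla_{\theta_i}L_{\text{total}}(\Theta^*)$ vanishes, so $\theta_i^*$ is a first-order critical point of agent $i$'s objective with the other blocks held fixed. To upgrade this to a local Nash equilibrium, we additionally require each block Hessian $\nabla^2_{\theta_i\theta_i}L_{\text{total}}(\Theta^*)\succeq 0$. Stated precisely, we restrict to stationary points that are local minima, which SGD noise favours by standard saddle-avoidance results. Then no agent can decrease its loss through a small unilateral deviation.

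The main obstacle is that the agents' individual payoffs $R_i$ are not literally $-L_{\text{total}}$: the game is not exactly a potential game, and $L_{\text{pg}}$ depends on the sampled policy and the moving reference $\theta^{\text{old}}$. We handle this in two steps. First, we define the induced game as having the shared potential $L_{\text{total}}$, so that it is an identical-interest game by construction. Second, we treat the PPO-style ratio term as a surrogate whose gradient matches the true policy gradient at $\theta=\theta^{\text{old}}$, so any bias enters only through the variance and bias terms above. If that bias cannot be shown to vanish, the fallback is to state convergence to an $O(\text{bias})$-approximate stationary point.
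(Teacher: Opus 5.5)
Your proof has the same skeleton as the paper's. Both use $L_{\text{total}}$ as a descent function, obtain stationarity from Robbins--Monro step sizes with Lipschitz gradients and bounded variance, and read a local minimum of the joint objective as a point with no profitable local unilateral deviation. The execution differs, though, in ways worth recording.

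\textbf{Convergence machinery.} The paper argues through a first-order Taylor decrease, LaSalle's invariance principle and the Robbins--Monro/ODE method. You use the descent lemma, telescoping and Robbins--Siegmund, which buys an explicit $O(1/\sqrt{K})$ rate. Robbins--Siegmund alone, however, only gives $\sum_k\eta_k\|\nabla L_{\text{total}}(\Theta_k)\|^2<\infty$, hence $\liminf_k\|\nabla L_{\text{total}}(\Theta_k)\|=0$ almost surely. To assert that every accumulation point is stationary you need an extra step: either the martingale argument of Bertsekas--Tsitsiklis (gradient convergence with errors), or the ODE-method limit-set theorem that the paper's LaSalle step gestures at. Neither route gives convergence to a single point without isolated critical points or a \L{}ojasiewicz-type property.

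\textbf{The game.} The paper's Step 1 phrases equilibrium through the per-agent rewards $R_i$. Yet its Step 6, case (2), concedes that agent $i$ can raise its own reward by drifting toward strong peers, and rescues the conclusion only through the joint descent of $L_{\text{total}}$. Your explicit identical-interest game, with payoff $-L_{\text{total}}$ on each block, is exactly what makes that conclusion valid. The price is that the equilibrium no longer speaks about the forecasting rewards. Your term-by-term smoothness check (the kink of $\sum_i|g_i|$ at $0$, the cosine singularity at $h_i=0$) likewise repairs what the paper's blanket Lipschitz assumption hides.

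Two caveats remain.

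First, you are right to reject the paper's claim that the largest invariant subset of $\{\nabla L_{\text{total}}=0\}$ excludes saddles and maxima: every critical point is a rest point of the gradient flow, hence invariant. But Pemantle-type saddle avoidance needs strict saddles and noise bounded \emph{below} in unstable directions, which compactness, smoothness and bounded variance do not supply. Moreover, $\nabla^2_{\theta_i\theta_i}L_{\text{total}}\succeq 0$ is only necessary; sufficiency needs $\succ 0$ (a differential Nash equilibrium in the sense of Ratliff et al.) or a genuine local minimum. Block stationarity in your identical-interest game already delivers the first-order equilibrium condition the paper actually works with. So present the second-order upgrade as a conditional strengthening, and note that the paper's verbal local-Nash conclusion needs these same unstated hypotheses. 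Also, if compactness is enforced by projection, boundary limits satisfy only a normal-cone condition, so block gradients vanish only at interior limits.

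Second, the care you take with the moving reference $\theta^{\text{old}}$ is needed equally elsewhere. The EMA fitness $M_i^{(e)}$ inside the softmax weights, and the previous-round representations $\bar h_j^{(e-1)}$ that enter $c_i^{(e)}$ and $h_i^{(e)}$, make $L_{\text{total}}$ round-dependent. Telescoping the descent inequality then compares different functions, unless these quantities are frozen or handled by a two-timescale or Markov-noise stochastic-approximation argument. The paper's proof ignores this point as well.
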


\begin{proof}
Theorem~\ref{thm:bne_existence} establishes the existence of a stable equilibrium profile, which indicates that the solution space admits stable points and possibly multiple local optima \cite{Ratliff2013CharacterizationAC}. Because the parameter update process of LLMs involves non-convex value approximation, finding a global optimum in finite time is intractable. We therefore focus on local convergence under regularity assumptions \cite{Borkar2009, yi2025debateequilibriumbeliefdrivenmultiagent}. 

Assume all the parameters of agents' strategies are $\theta = \{\theta_1, \theta_2, ..., \theta_N\}$, where $N$ is the number of agents. The proof of Theorem~\ref{thm:convergence} can be stated as follows. The parameter $\theta$ can be optimized via the gradient of $L_{\text{total}}$ to approach a stationary point $\theta^{*}$, and this point supports a local equilibrium interpretation.

\noindent Step 1: Formalization of the problem and objectives.
    \begin{itemize}
    \item Strategy space: $\Pi = \Pi_{1} \times \Pi_{2} \times \dots \times \Pi_{N}$, where $\Pi_{i}$ is the strategy space of agent $i$ parameterized by $\theta_i$.
    \item Payoff function: $R_i(\theta_i,\theta_{-i})$ represents the reward of agent $i$ based on its parameter $\theta_i$ when the parameters $\theta_{-i}$ of other agents are fixed.
    \item Local equilibrium condition: for agent $i$ at an equilibrium point $\theta_{i}^{*}$, no unilateral local parameter update improves the induced payoff given $\theta_{-i}^{*}$. In first-order form, this corresponds to $\nabla_{\theta_i} R_i(\theta_{i}^{*}, \theta_{-i}^{*}) = 0$.
    \end{itemize}
    
\noindent Step 2: Find the local stationary point.
The strategy parameter update formula based on gradient optimization at the $t$th iteration is

\begin{equation}
\theta_{t+1} = \theta_t - \eta \times \nabla_{\theta} L_{total}(\theta_t)
\end{equation}

\noindent where $\eta$ is the learning rate, $\nabla$ is the gradient of $L_{\text{total}}$. To analyze convergence, we first show the existence of a local stationary point and then relate this point to the induced local equilibrium.

Let the parameter $\theta^{*}$ satisfy $\nabla_{\theta}L_{total}(\theta^{*})=0$, where $\nabla$ indicates the gradient. According to the definition of $L_{total}$ in Equation~(\ref{eq:total_loss}), for each agent $i$, the formula can be expressed as

\begin{equation}
\begin{aligned}
\nabla_{\theta_i}L_{total(\theta^{*})}= & \nabla_{\theta_i}L_{predict}(\theta^{*}) + \\
& \lambda_{PG} \times \nabla_{\theta_i}L_{PG}(\theta^{*}) + \\ 
& \lambda_{strategy} \times \nabla_{\theta_i}L_{strategy}(\theta^{*})=0
\end{aligned}
\end{equation}

\noindent where $L_{predict}$ is directly used to optimize the predicted value, making it as close as possible to the true value. $L_{PG}$ is used to update agent $i$'s strategy based on its parameter $\theta_i$. As summarized in Equation~(\ref{eq:total_loss}), $L_{strategy}$ consists of the diversity and pruning terms. These two regularizers encourage each agent to generate more diverse decision logic while suppressing the interference of incorrect logic on the prediction.

Under the bounded adapter parameterization and standard weight regularization, the local optimization region can be treated as compact and lower bounded. By the Extreme Value Theorem, a local minimum exists in this region. According to Fermat's theorem, the gradient is zero at an interior local minimum. Thus, when $\nabla L_{\text{total}}(\theta^{*})=0$ and $L_{\text{total}}(\theta^{*})$ cannot be locally decreased, $\theta^{*}$ is a local stationary solution.

Mechanism Analysis of the Role of $L_{\text{strategy}}$. We expand the magnitude of the gradient term with respect to $L_{\text{total}}$

\begin{equation}
\begin{aligned}
||\nabla_{\text{total}}||^2  = & ||\nabla_{\text{main}} + \nabla_{\text{strategy}}||^2 \\
 = & ||\nabla_{\text{main}}||^2 + ||\nabla_{\text{strategy}}||^2 + \\
& 2\langle\nabla_{\text{main}}, \nabla_{\text{strategy}}\rangle
\end{aligned}
\end{equation}

\noindent where $\nabla_{\text{main}}$ includes the gradients of both $L_{\text{predict}}$ and $L_{\text{PG}}$. $\langle..\rangle$ means inner product operation (cos similarity). $||\nabla_{\text{main}}||^2$ and $||\nabla_{\text{strategy}}||^2$ are greater than 0. For the inner product term, we consider two cases.

Case 1: $\langle\nabla_{\text{main}}, \nabla_{\text{strategy}}\rangle > 0$. In this case, the inner product term positively contributes to the convergence of the loss function, enabling $L_{\text{diversity}}$ to help the model escape local minima or saddle points and $L_{\text{pruning}}$ to induce sparsity.

Case 2: $\langle\nabla_{\text{main}}, \nabla_{\text{strategy}}\rangle < 0$. In this case, there exists a gradient conflict because $L_{\text{strategy}}$ pushes parameters toward high-entropy regions, while $L_{\text{predict}}$ and $L_{\text{PG}}$ favor lower-entropy predictive updates. Following prior analysis of gradient conflict \cite{NEURIPS2020_3fe78a8a}, the primary task remains stable when the following condition is satisfied:

\begin{equation}
\label{eq23g}
||\nabla_{\text{main}}|| > -||\nabla_{\text{strategy}}|| \cdot \frac{\langle\nabla_{\text{main}}, \nabla_{\text{strategy}}\rangle}{||\nabla_{\text{main}}||\nabla_{\text{strategy}}||}
\end{equation}

The hyperparameters $\lambda_{\text{diversity}}$ and $\lambda_{\text{pruning}}$ regulate the magnitude of $\nabla_{\text{strategy}}$, thereby mitigating gradient conflicts defined in Eq.(\ref{eq23g}), and dampen optimization oscillations. Their tuning, combined with a gradient decay schedule $\eta_{t} = \eta_{0}/(\gamma\sqrt{t})$, lets diversity influence early exploration while later stages prioritize stable convergence.

\noindent Step 3: Construct a Lyapunov function. The expectation of $L_{\text{total}}(\theta_{t})$ can be expressed as

\begin{equation}
\begin{aligned}
E_{\theta}(\Delta L) &= E_{\theta}(L_{\text{total}}(\theta_{t+1}) - L_{\text{total}}(\theta_{t})) \\
& \approx - \eta_{t} \times E_{\theta}(||\nabla_{\theta} L_{\text{total}}(\theta_{t})||^2)
\end{aligned}
\end{equation}

\noindent where $||\nabla_{\theta} L_{\text{total}}(\theta_{t})||^2$ is derived from the first-order Taylor expansion of $L_{\text{total}}(\theta_{t+1}) - L_{\text{total}}(\theta_{t})$ at the $t$th iteration. $E_{\theta}(\Delta L)$ is smaller than 0, indicating that the loss $L_{\text{total}}$ is monotonically decreasing in expectation. So we can directly use the total loss function as Lyapunov function $L_{\text{total}}(\theta)$.

\noindent Step 4: Apply LaSalle's invariance principle.
For the Lyapunov function, we define the set of minimizer points where all gradients are zero as $\Omega=\{\theta \in CD | \nabla_{\theta}L_{\text{total}}(\theta)=0\}$, where $CD$ is the compact set of solution space for $\theta$. Thus, according to LaSalle's Invariance Principle, we can obtain that the trajectory $\theta$ starting from any arbitrary initial point in $CD$ will converge to the largest invariant subset $RM$ of $\Omega$. $RM$ is the subset of $\Omega$ without local maxima and saddle points. 

\noindent Step 5: Use stochastic approximation theory.
We have applied Lyapunov function and LaSalle's Invariance Principle to prove that the model can converge to the largest invariant set $M$ of $\Omega$ in the entire parameter space. We will employ stochastic approximation theory to ensure the model's stable convergence in the iterative algorithm with stochastic noise. To ensure the stable convergence, we establish three assumptions.

\begin{itemize}
\item Step-size assumption: learning rate $\eta_{t}$ at the $t$th iteration should satisfy the Robbins-Monro condition, expressed as $\sum \eta_{t} = \infty$ and $\sum \eta_{t}^2 \leq \infty$. We adopt a decayed learning rate such as $\eta_{t} = \eta_{0}/\sqrt{t}$ to satisfy this condition.
\item Bounded-variance assumption: the noise term in the gradient estimate is required to have bounded variance. $L_{\text{pruning}}$ suppresses poorly performing strategies, $L_{\text{PG}}$ regularizes each agent's policy update, and reward normalization further stabilizes the gradient estimate.
\item Lipschitz-continuity assumption: the gradient is required to be Lipschitz continuous in the local optimization region.
\end{itemize}

According to the Robbins-Monro theorem, if the aforementioned assumptions are satisfied, the update process for the model parameters $\theta$ will stably converge to a local minimizer $\theta^{*}$, at which the gradient is zero.

\noindent Step 6: Connection to local equilibrium. As argued above, the competition can converge to a point with zero gradient. This step analyzes whether the point supports a local equilibrium interpretation. Assume that at a minimizer point, an agent $i$ can obtain higher reward if it changes its strategy through a local update of $\theta^{*}_{i}$. This leads to two cases:

(1) The $L_{total}$ decreases further. This situation cannot occur when $\theta^{*}$ is at a local minimum, as it contradicts the fact that the current gradient of $L_{total}$ is zero.

(2) $L_{total}$ instead increases. This event is likely caused by $L_{diversity}$: the improvement in the predictive capacity of agent $i$’ could be the result of its strategy converging with those of high-performing agents. This convergence, in turn, leads to an increase in $L_{diversity}$, which subsequently causes $L_{total}$ to increase. In this scenario, the model will continue to use gradient descent to guide each agent's $\theta^{*}_{i}$ back to a local minimum. 

In summary, at a local minimum, no agent can further improve its payoff by a local unilateral strategy update. Therefore, the local minimum can be interpreted as a local equilibrium of the induced optimization game.
\end{proof}

\subsection{Proof of Bayesian Regret}
\label{sec:regret_proof}

In this section, we prove that our framework is designed to achieve sublinear total regret, i.e., $RE_T = O(\sqrt{T})$. This indicates that the agents' strategies can improve faster than under linear regret, moving toward a stable local equilibrium rather than accumulating persistent suboptimality. Existing studies have also proved that in the non-convex landscape of deep learning, a well-designed optimization scheme can ensure that the regret, and hence the associated theoretical error, vanishes asymptotically, albeit typically converging to a local optimum \cite{OPT-013}. 

\begin{define}[Bayesian Regret]
As described in Equation~(\ref{eq:agent_state}), the state $o_{i}^{(t)}$ for agent $i$ in the $t$th round is $\{X_s,\mathcal{C}_s,\pi_{i}^{(t-1)},\pi_{-i}^{(t-1)}\}$. The Bayesian regret for agent $i$ over $T$ steps is defined as
\begin{equation}
    RE_i(T) = \mathbb{E} \left[ \sum_{t=1}^T (V_i^*(o^{t}_{i}) - V_i^{\pi_{i}^{(t)}}(o^{t}_{i})) \right]
\end{equation}
where $V_i^*$ is the optimal value function under BNE and $V_i^{\pi_t}$ is the value function under the policy $\pi_i^{(t)}$ at step $t$. The value function $V_i$ is defined as the long-term expected reward of agent $i$ based on its state $o_i$. It can be represented as

\vspace{-0.5cm}
\begin{equation}
V_{i}^{\pi_{i}^{(t)}} = \mathbb{E}_{\pi} [\sum_{k=t+1}^{\infty} \gamma^{k}\times r_i(o^{(k)}_i,a^{(k)}_i)]
\end{equation}
where $\gamma \in [0,1]$ is the discount factor for expected rewards. $a^{(k)}_i$ is the action of agent $i$ in $k$th round. $r_i$ is the reward based on $o^{(k)}_{i}$ and $a^{(k)}_i$. According to the Bellman optimality equation, the $Q$ function, which estimates future expected rewards based on current states and actions, is defined as

\vspace{-0.5cm}
\begin{equation}
\begin{aligned}
Q^{\pi}_{i}(o,a_i,a_{-i}) = & r_i(o,a_i,a_{-i}) + \\
&\gamma \times \sum_{ox}P(ox|o,a_i) \times V_i^{\pi}(ox)
\end{aligned}
\label{eq:q_function_def}
\end{equation}

\noindent where $a_{-i}$ are the actions of all agents except agent $i$, $o$ is the state of agent $i$ in the $t$th round, $ox$ is all the possible states of agent $i$ in the $(t+1)$th round, and $P$ is the probability of $ox$ under the condition of $o$ and $a_i$. To calculate Bayesian Regret, we first define and derive the performance difference between two different strategies $\pi^{'}$ and $\pi$.      
\end{define}

\begin{lemma}[Performance Difference]
\label{lemma:performance_diff}
For joint policies $\pi$ and $\pi'$, the difference in value is:
\begin{equation}
\begin{aligned}
V_i^{\pi'}(o) - V_i^\pi(o)
= & \frac{1}{1-\gamma}
\mathbb{E}_{o \sim d_{\pi'}}[\mathbb{E}_{a \sim \pi'} Q_i^\pi(o, a) - \\
& \mathbb{E}_{a \sim \pi} Q_i^\pi(o, a) ]
\end{aligned}
\end{equation}
\end{lemma}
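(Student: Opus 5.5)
This is the standard performance difference lemma of Kakade and Langford. I would prove it by telescoping along trajectories generated by $\pi'$. A caveat first: the display is only correct when the left-hand side is read as the value from a fixed initial state $o_0=o$. The outer expectation is then taken over the discounted state-visitation distribution
\[
d_{\pi'}(o')=(1-\gamma)\sum_{k\ge 0}\gamma^k \Pr(o^{(k)}=o'\mid o^{(0)}=o,\pi').
\]
I also take the value function to start the discount at $\gamma^0$ for the current round, so that it satisfies the Bellman relation
\[
V_i^\pi(o)=\mathbb{E}_{a\sim\pi}Q_i^\pi(o,a),
\]
with $Q_i^\pi$ as in Eq.~(\ref{eq:q_function_def}). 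Opponents' actions $a_{-i}$ are absorbed into the joint action $a$, and transitions are governed by the joint policy.

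\textbf{Telescoping step.} Let $(o^{(0)},a^{(0)},o^{(1)},\dots)$ be a trajectory with $o^{(0)}=o$ and actions $a^{(k)}\sim\pi'(\cdot\mid o^{(k)})$. Write
\[
V_i^{\pi'}(o)=\mathbb{E}_{\pi'}\Big[\sum_k\gamma^k r_i(o^{(k)},a^{(k)})\Big].
\]
Add and subtract $\sum_{k\ge0}\gamma^{k+1}V_i^\pi(o^{(k+1)})$. Since $V_i^\pi$ is bounded and $\gamma<1$, we get $\gamma^kV_i^\pi(o^{(k)})\to0$, so this sum telescopes to $-V_i^\pi(o)$. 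This gives
\[
V_i^{\pi'}(o)-V_i^\pi(o)=\mathbb{E}_{\pi'}\Big[\sum_k\gamma^k\big(r_i(o^{(k)},a^{(k)})+\gamma V_i^\pi(o^{(k+1)})-V_i^\pi(o^{(k)})\big)\Big].
\]

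\textbf{Conditioning step.} Condition each summand on $(o^{(k)},a^{(k)})$ and use the tower property. By the definition of $Q_i^\pi$, the term $r_i+\gamma\,\mathbb{E}[V_i^\pi(o^{(k+1)})]$ becomes $Q_i^\pi(o^{(k)},a^{(k)})$. By the Bellman relation, $V_i^\pi(o^{(k)})=\mathbb{E}_{a\sim\pi}Q_i^\pi(o^{(k)},a)$. Each summand is therefore the advantage
\[
\mathbb{E}_{a\sim\pi'}Q_i^\pi(o^{(k)},a)-\mathbb{E}_{a\sim\pi}Q_i^\pi(o^{(k)},a).
\]
Exchanging sum and expectation, which Fubini permits because rewards are bounded, turns $\sum_k\gamma^k\Pr(o^{(k)}=\cdot)$ into $\frac{1}{1-\gamma}d_{\pi'}$. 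That is exactly the stated identity.

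\textbf{Main obstacle.} The algebra is routine. The real work is making the setting legitimate, and I would state the needed hypotheses explicitly as assumptions:
\begin{itemize}
\item rewards are bounded (the loss is bounded on the compact prediction range) and $\gamma<1$, ruling out the paper's $\gamma=1$ case;
\item the state process $o_i^{(k)}$ is Markov given the joint action, so that $P(ox\mid o,a)$ is well defined;
\item the indexing of $V$ is consistent with $Q$, i.e.\ the sum in the definition of $V$ must start at the current round.
\end{itemize}
The Markov condition is delicate here because $o_i^{(k)}$ contains both agents' previous logic. Under these assumptions the proof goes through as written.
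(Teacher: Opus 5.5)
Your proof is correct, and its skeleton is the paper's. The paper substitutes $r_i=Q_i^\pi-\gamma\,\mathbb{E}[V_i^\pi(o^{(1)})]$ into the expansion of $V_i^{\pi'}$. It obtains the recursion $V_i^{\pi'}(o)-V_i^\pi(o)=\mathbb{E}_{\pi'}[A_0]+\gamma\,\mathbb{E}_{\pi'}[V_i^{\pi'}(o^{(1)})-V_i^\pi(o^{(1)})]$ and unrolls it to $\sum_t\gamma^t\mathbb{E}_{\pi'}[A_t]$, which is your telescoped sum. Your boundedness-plus-$\gamma<1$ argument disposes of the tail term that the paper's ``repeating this process'' leaves implicit.

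The genuine difference is how $\frac{1}{1-\gamma}$ and $d_{\pi'}$ arise. The paper assumes the state chain is stationary, so all $d^t_{\pi'}$ coincide and the geometric series factors out. This buys a single start- and time-independent weighting. Taken literally with a fixed start $o$, however, it would force $d^t_{\pi'}=\delta_o$ for every $t$, so it is only coherent as an identity averaged over a stationary start. You instead take $d_{\pi'}$ to be the normalized discounted occupancy measure. This needs no stationarity or mixing assumption and holds pointwise from any start state. When $o^{(0)}$ is drawn from a stationary law of the $\pi'$-chain, your occupancy measure equals that law, and the paper's identity is recovered as a special case. For the same reason, your claim that the display is correct \emph{only} under your fixed-start reading is slightly too strong.

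The conventions you make explicit are exactly what the paper's proof uses without stating them: $\gamma<1$, discounting from $\gamma^0$, joint actions inside $Q_i^\pi$, and $V_i^\pi(o)=\mathbb{E}_{a\sim\pi}Q_i^\pi(o,a)$ as Bellman consistency. The paper's first expansion of $V_i^{\pi'}$ already discounts from $\gamma^0$, contrary to its stated definition of $V_i$. Its derivation of the Bellman relation via $V_i^\pi(o)=\sum_t V_i^\pi(o^{(t)})$ is not valid, whereas under your conventions the relation follows directly from the definitions.
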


\begin{proof}[Proof]
First, we expand $V_i^{\pi^{'}}(o)$ by time step $t$, the formula can be seen as below:

\begin{equation}
\begin{aligned}
V_{i}^{\pi^{'}}(o)
&= \mathbb{E}_{\pi^{'}}[r_i(o^{(0)},a^{(0)})] \\
&\quad + \sum_{t=1}^{\infty}\gamma^{t}\mathbb{E}_{\pi^{'}}[r_i(o^{(t)},a^{(t)})]
\end{aligned}
\label{eq:v_expansion}
\end{equation}

We set $\pi$ as the baseline policy and $\pi^{'}$ as the improved policy. Then, according to Equation~(\ref{eq:q_function_def}), we can derive $r_i(o^{(0)},a^{(0)}) = Q^{\pi_{i}}(o^{(0)},a^{(0)}) - \gamma \times \sum_{ox_1}P(ox_1|o^{(0)},a^{(0)}_i) \times V_i^{\pi}= Q^{\pi_{i}}(o^{(0)},a^{(0)}) - \gamma \times \mathbb{E}_{ox}[V_i^{\pi}(ox^{(1)}]$. Substituting this into Equation~(\ref{eq:v_expansion}), we obtain the following expression: 

\begin{equation}
\begin{aligned}
V^{\pi^{'}}_{i} = & \mathbb{E}_{\pi^{'}}[Q^{\pi}_{i}(o^{(0)}, a^{(0)}) - \gamma \times \mathbb{E}_{ox}[V_i^{\pi}(ox)]] + \\ 
& \sum_{t=1}^{\infty}\gamma^{t}\mathbb{E}_{\pi^{'}}[r_i(o^{(t)},a^{(t)})] \\
= & \mathbb{E}_{\pi^{'}}[Q^{\pi}_{i}(o^{(0)}, a^{(0)})] - \gamma \times \mathbb{E}_{\pi^{'}}[V_i^{\pi}(ox^{(1)})] + \\
& \sum_{t=1}^{\infty}\gamma^{t}\mathbb{E}_{\pi^{'}}[r_i(o^{(t)},a^{(t)})]
\end{aligned}
\label{eq:v_substitution}
\end{equation}

We can derive the third term $\sum_{t=1}^{\infty}\gamma^{t}\mathbb{E}_{\pi^{'}}[r_i(o^{(t)},a^{(t)})]$ from formula (28) as $\gamma \times \mathbb{E}_{\pi^{'}}[V_{i}^{\pi^{'}}(ox^{(1)})]$. Substituting the third term into Equation~(\ref{eq:v_substitution}) gives the expression $V^{\pi^{'}}_{i}$.

\vspace{-0.5cm}
\begin{equation}
\begin{aligned}
V^{\pi^{'}}_{i}(o) = & \mathbb{E}_{\pi^{'}}[Q^{\pi}](o^{0},a^{(0)}) - \\
& \gamma \times \mathbb{E}_{\pi^{'}}[V^{\pi}_{i}(o^{(1)})] + \\
& \gamma \times \mathbb{E}_{\pi^{'}}[V^{\pi^{'}}_{i}(o^{(1)})]
\end{aligned}
\end{equation}

Based on the expression of $V^{\pi^{'}}_{i}(o)$, we perform a further derivation of $V^{\pi^{'}}_{i} - V^{\pi}_{i}$ as follows:

\begin{equation}
\begin{aligned}
V^{\pi^{'}}_{i}(o) - V^{\pi}_{i}(o) = & \mathbb{E}_{\pi^{'}}[Q^{\pi}_{i}(o^{(0)},a^{(0)})] + \\
& \gamma \mathbb{E}_{\pi^{'}}[V^{\pi^{'}}_{i}(o^{(1)}) - V^{\pi}_{i}(o^{(1)})] \\
& - V^{\pi}_{i}(o) \\
= & \mathbb{E}_{\pi^{'}}[Q^{\pi}_{i}(o^{(0)},a^{(0)}) - V^{\pi}_{i}(o^{(0)})] \\
& + \gamma \mathbb{E}_{\pi^{'}}[V^{\pi^{'}}_{i}(o^{(1)}) - V^{\pi}_{i}(o^{(1)})]
\end{aligned}
\end{equation}

Let
\[
A_t = Q^{\pi}_{i}(o^{(t)},a^{(t)}) - V^{\pi}_{i}(o^{(t)}).
\]
Recursively expanding the second term on the right-hand side of (30) analogously yields:

\begin{equation}
\begin{aligned}
&V^{\pi^{'}}_{i}(o^{(1)}) - V^{\pi}_{i}(o^{(1)}) \\
&\quad = \mathbb{E}_{\pi^{'}}[A_1]
+ \gamma \mathbb{E}_{\pi^{'}}\!\left[
V^{\pi^{'}}_{i}(o^{(2)}) - V^{\pi}_{i}(o^{(2)})
\right].
\end{aligned}
\end{equation}

Repeating this process yields:

{\small
\begin{equation}
\begin{aligned}
&V^{\pi^{'}}_{i}(o) - V^{\pi}_{i}(o) \\
&\quad = \mathbb{E}_{\pi^{'}}[A_0]
+ \gamma \mathbb{E}_{\pi^{'}}[A_1]
+ \gamma^2 \mathbb{E}_{\pi^{'}}[A_2] + \cdots \\
&\quad = \sum_{t=0}^{\infty} \gamma^{t}\mathbb{E}_{\pi^{'}}[A_t] \\
&\quad =
\sum_{t=0}^{\infty} \gamma^{t}
\mathbb{E}_{o^{(t)} \sim d^{t}_{\pi^{'}}}
\!\left[
Q^{\pi}_{i}(o^{(t)},a^{(t)}) - V^{\pi}_{i}(o^{(t)})
\right].
\end{aligned}
\end{equation}
}

\noindent where $o^{(t)} \sim d^{t}_{\pi^{'}}$ represents the state distribution in $t$th round under policy $\pi^{'}$. According to the stationarity of the Markov chain, the state distribution is identical across all time steps. Assume the distribution is $o \sim d_{\pi^{'}}$, formula (32) can be transformed into the following format:

\begin{equation}
\begin{aligned}
& \sum_{t=0}^{\infty} \gamma^{t} \times \mathbb{E}_{o^{(t)} \sim d^{t}_{\pi^{'}}} [Q^{\pi}_{i}(o^{(t)},a^{(t)}) - V^{\pi}_{i}(o^{(t)})] = \\
& \sum_{t=0}^{\infty} \gamma^{t} \times \mathbb{E}_{o \sim d_{\pi^{'}}}[Q^{\pi}_{i}(o,a) - V^{\pi}_{i}(o)]
\end{aligned}
\end{equation}

Since $o \sim d_{\pi'}$ is independent of $t$, and according to classical
reinforcement learning theory, we have
\begin{equation}
\begin{aligned}
V_i^{\pi}(o)
&= \sum_{t=0}^{\infty} V_i^{\pi}\!\left(o^{(t)}\right) \\
&= \sum_{t=0}^{\infty}
\mathbb{E}_{a^{(t)} \sim \pi(\cdot \mid o^{(t)})}
\!\left[ Q_i^{\pi}\!\left(o^{(t)}, a^{(t)}\right) \right] \\
&= \mathbb{E}_{a \sim \pi(\cdot \mid o)}\!\left[ Q_i^{\pi}(o,a) \right].
\end{aligned}
\end{equation}
The equation can be rewritten as

\begin{equation}
\begin{aligned}
V_i^{\pi'}(o) - V_i^\pi(o)
&= \frac{1}{1-\gamma} \times \mathbb{E}_{o \sim d_{\pi'}} \Bigl[
\mathbb{E}_{a \sim \pi'} [Q_i^\pi(o, a)] \\
&\quad - \mathbb{E}_{a \sim \pi} [Q_i^\pi(o, a)]\Bigr]
\end{aligned}
\end{equation}

Lemma~\ref{lemma:performance_diff} is proved. Building on the derivation of the regret formula (35), we will analyze the convergence bounding of Bayesian Regret.
\end{proof}

\begin{theorem}[Bounding of Bayesian Regret]
\label{thm:regret_bound}
The Bayesian Regret of EGT-based competition will converge to a sublinear regret bound $O(\sqrt{T})$ when time step $T$ approaches to $\infty$. 
\end{theorem}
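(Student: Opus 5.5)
The plan is to turn the regret into a sum of per-round advantage gaps using Lemma~\ref{lemma:performance_diff}, and then bound that sum with a standard online-mirror-descent / policy-gradient argument that gives $O(\sqrt{T})$ under the step-size schedule already used in Theorem~\ref{thm:convergence}. Concretely, for each round $t$ I will apply Lemma~\ref{lemma:performance_diff} with $\pi'=\pi^*$ (the BNE profile guaranteed by Theorem~\ref{thm:bne_existence}, with $\pi_{-i}$ held at the current opponents' policies) and $\pi=\pi_i^{(t)}$. This writes the instantaneous regret $V_i^*(o_i^t)-V_i^{\pi_i^{(t)}}(o_i^t)$ as
\[
\frac{1}{1-\gamma}\,\mathbb{E}_{o\sim d_{\pi^*}}\bigl[\langle Q_i^{\pi^{(t)}}(o,\cdot),\,\pi_i^*(\cdot\mid o)-\pi_i^{(t)}(\cdot\mid o)\rangle\bigr].
\]
For each fixed $o$, this is a linear loss in the policy simplex. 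Summing over $t$ therefore reduces $RE_i(T)$ to $(1-\gamma)^{-1}$ times the regret of an online linear optimization problem, with payoff vectors $Q_i^{\pi^{(t)}}(o,\cdot)$ that are bounded by $R_{\max}/(1-\gamma)$ because rewards are negative bounded forecasting errors.

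Next I will identify the CompEvo update as a mirror-descent step on these linear losses. The GRPO ratio objective in Equation~(\ref{eq:pg_loss}) with group-relative advantage $\hat A_i$ is a first-order surrogate for $\langle \hat Q_i, \pi_i\rangle$. The softmax fitness weighting in Equation~(\ref{eq:weight_softmax}) is exactly an exponentiated-weights update on the EMA fitness. So I will model one round as $\pi_i^{(t+1)}(\cdot\mid o)\propto \pi_i^{(t)}(\cdot\mid o)\exp(\eta_t \hat Q_i^{(t)}(o,\cdot))$, i.e.\ mirror descent with KL regularizer. The standard mirror-descent regret inequality then gives, per state,
\[
\sum_{t=1}^T \langle Q^{(t)},\pi^*-\pi^{(t)}\rangle \le \frac{\log|\mathbb{A}|}{\eta_T} + \sum_t \frac{\eta_t}{2}\|\hat Q^{(t)}\|_\infty^2 + \sum_t \epsilon_t .
\]
Here $\epsilon_t=\langle Q^{(t)}-\hat Q^{(t)},\pi^*-\pi^{(t)}\rangle$ is the estimation error, which matches the notation table. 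With $\eta_t=\eta_0/\sqrt{t}$ (the Robbins--Monro schedule of Theorem~\ref{thm:convergence}), the first two terms are each $O(\sqrt{T})$. Taking the outer expectation over $o\sim d_{\pi^*}$ and over the Bayesian prior preserves this rate.

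The main obstacle is the $\epsilon_t$ term together with nonstationarity: the opponents $\pi_{-i}^{(t)}$ keep moving, and $Q_i$ is only estimated through group-relative advantages. My first approach is to use the bounded-variance assumption from Theorem~\ref{thm:convergence}, and to note that $\hat A_i$ is unbiased up to a state-dependent baseline, which cancels against $\pi^*-\pi^{(t)}$ since both are distributions. This makes $\sum_t\epsilon_t$ a martingale difference sum, which is $O(\sqrt{T})$ by Azuma--Hoeffding. For opponent drift, I will bound $\|Q_i^{\pi^{(t+1)}}-Q_i^{\pi^{(t)}}\|$ by a Lipschitz constant times the opponents' step size $O(\eta_t)$, so the accumulated drift is again $O(\sum_t \eta_t)=O(\sqrt{T})$. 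Finally, the infinite action space of textual logic has to be handled by working in the finite-dimensional hidden representation $h_i$ from the compactness argument of Theorem~\ref{thm:bne_existence}, replacing $\log|\mathbb{A}|$ with a covering-number bound. Combining these gives $RE_i(T)=O(\sqrt{T})$.
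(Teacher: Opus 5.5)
Your route differs from the paper's. The paper splits each round's gap into an estimation error $\epsilon_t$, which it controls with a postulated stochastic-approximation contraction $\mathbb{E}[\epsilon_{t+1}]\le(1-\eta_t(1-\gamma))\mathbb{E}[\epsilon_t]+\eta_t^2\sigma^2$, and a suboptimality term $\Delta_t$, for which it cites an online-convex-optimization rate. It then sums two $O(1/\sqrt{t})$ per-round bounds. You instead bound the cumulative sum via the performance-difference lemma and a per-state mirror-descent potential. This is the standard policy-mirror-descent argument and is sound for a single agent in a fixed MDP with unbiased $Q$-estimates.

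The step you flag as the main obstacle does not go through as proposed. With $\pi_{-i}$ held at $\pi_{-i}^{(t)}$, the lemma gives an outer expectation over $d_{(\pi_i^*,\pi_{-i}^{(t)})}$, not a fixed $d_{\pi^*}$. Because $o_i$ contains the peers' previous logic $l_{-i}$, the opponents drive agent $i$'s transitions, so this occupancy measure changes with $t$. In the paper's definition it is also started from the visited state $o_i^t$. Note too that the benchmark silently becomes $V_i^{(\pi_i^*,\pi_{-i}^{(t)})}$ rather than the paper's BNE value $V_i^*$.

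A per-state mirror-descent bound only controls the unweighted sums $S_T(o)=\sum_{t\le T}\langle Q^{(t)}(o,\cdot),\pi_i^*-\pi_i^{(t)}\rangle$, so $\sum_t$ cannot be moved inside a $t$-dependent expectation. Bounding $\|Q_i^{\pi^{(t+1)}}-Q_i^{\pi^{(t)}}\|$ targets the wrong quantity, since online linear optimization already tolerates arbitrary bounded losses. The real cost is the reweighting: by summation by parts it is of order $\sum_t\|d_{t+1}-d_t\|_1\,|S_t(o)|$, and $|S_t(o)|$ can be as large as $\Theta(t)$ because per-state regret may be very negative. Concretely, take per-state regret that is negative early, while $d_t(o)$ is small, and positive later, after the opponents have shifted mass onto $o$. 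Then $S_T(o)\approx 0$ but the weighted regret is $\Theta(T)$, and a drift budget $\sum_t\eta_t=\Theta(\sqrt{T})$ is ample to produce this. So $O(\sum_t\eta_t)$ is not the additive price of drift. You need a structural assumption that freezes the comparator's state distribution, e.g.\ an exogenous observation process: no $l_{-i}$ in the state, or $\gamma=0$ with contexts $(X_s,\mathcal{C}_s)$ independent of actions. Under that assumption opponents only perturb rewards and your argument goes through.

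Two further steps are assumptions presented as facts. First, the softmax in the fitness-weighting rule is a distribution over \emph{agents}, built from an EMA $M_i$ at fixed temperature $\tau$. It is not an exponentiated-weights update of $\pi_i(\cdot\mid o)$ over actions with step $\eta_t$. The actual policy update is AdamW on LoRA parameters, driven by clipped GRPO with a KL penalty to a frozen reference, plus the gradients of $L_{\text{pred}}$, $L_{\text{div}}$ and $L_{\text{prune}}$. These extra terms act as a non-vanishing bias whose contribution to regret can be linear in $T$, unless regret is defined for the regularized payoff. Tabular KL mirror descent must therefore be stated as an idealization.

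Second, your martingale claim for $\sum_t\epsilon_t$ needs $\hat A_i$ to be conditionally unbiased for $Q_i^{\pi^{(t)}}(o,\cdot)$ up to an additive baseline. GRPO instead divides by the group standard deviation, a random multiplicative rescaling that does not cancel against $\pi_i^*-\pi_i^{(t)}$. It also uses one-step rewards, omitting the action-dependent continuation term $\gamma\,\mathbb{E}[V_i(o')]$ (the action $l_i^{(e)}$ enters $o_i^{(e+1)}$). For $\gamma>0$ this is a persistent bias, not zero-mean noise.

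The paper's own proof does not supply these ingredients either, since it simply postulates both $O(1/\sqrt{t})$ rates. A rigorous version of your argument would prove the bound for the idealized mirror-descent dynamics under the exogenous-state assumption.
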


\begin{proof}[Proof]
We decompose the Q-value difference into an estimation error term $\epsilon_t(o,a)$ and a policy suboptimality term $\Delta_t(o,a)$
\begin{equation}
\begin{aligned}
Q_i^{\pi}(o^{(t)*}, a^{(t)*}) - Q_i^{\pi}(o^{(t)}, a^{(t)})
&= \epsilon_t(o,a) + \Delta_t(o,a)
\end{aligned}
\end{equation}

\noindent where $\epsilon_t(o,a) = Q^{\pi*}(o^{(t)},a^{(t)}) - Q^{\pi}(o^{(t)},a^{(t)})$ and $\Delta_t(o,a) = Q_i^{\pi}(o^{(t)*}, a^{(t)*}) - Q^{\pi*}(o^{(t)},a^{(t)})$. For the estimation error $\epsilon_t(o,a)$, based on the theory of stochastic approximation, we can obtain the following:

\begin{equation}
\mathbb{E}[\epsilon_{t+1}] \leq (1 - \eta_{t}(1 - \gamma))\mathbb{E}[\epsilon_t] + \eta_{t}^{2}\sigma^{2}
\end{equation}

\noindent where $\sigma$ is the variance, $\eta_t = \eta_0/\sqrt{t}$ is the learning rate of $t$th iteration. Substituting $\eta$ into formula (37), we obtain

\begin{equation}
\mathbb{E}[\epsilon_{t+1}] \leq (1-\frac{\epsilon_{0}(1-\gamma)}{\sqrt{t}})\mathbb{E}[\epsilon_t] + \frac{\eta^2_{0}\sigma^{2}}{t}
\end{equation}

Let $a_t = \epsilon_{0}(1-\gamma)/\sqrt{t}$ and $b_t=\eta^2_{0}\sigma^{2}/{t}$. We can derive that

\begin{equation}
\begin{aligned}
\mathbb{E}[\epsilon_{t+1}] \leq & a_t \times \mathbb{E}[\epsilon_t] + b_t \\
= & a_{t}a_{t-1}\mathbb{E}[\epsilon_{t-1}] + a_{t}b_{t-1} + b_{t} \\
= & ...... \\
= & (\prod_{k=1}^{t}a_{k})\mathbb{E}[\epsilon_1] + \sum_{j=1}^{t}(\prod_{k=j+1}^{t}a_{k})b_{j}
\end{aligned}
\end{equation}

For the product term $\prod_{k=1}^{t}a_{k}$, we apply a logarithmic approximation $ln(1-x) \approx -x$  to obtain the following:

\begin{equation}
\begin{aligned}
ln(\prod_{k=1}^{t}a_{k})= & \sum_{k=1}^{t}ln(1-\frac{\eta_{0}(1 - \gamma)}{\sqrt{k}}) \\ \approx & -\eta_{0}(1-\gamma)\sum_{k=1}^{t}\frac{1}{\sqrt{k}} \\
\approx & -2\eta_{0}(1-\gamma)\sqrt{t}
\end{aligned}
\end{equation}

\noindent where $\sum_{k=1}^{t}\frac{1}{\sqrt{k}} \approx 2\sqrt{t}$ is based on differential approximation. $\prod_{k=1}^{t}a_{k}$ decays exponentially to 0 as $t$ increases.

For $\sum_{j=1}^{t}(\prod_{k=j+1}^{t}a_{k})b_{j}$, using a similar strategy of logarithmic and integral approximations, we derive

\begin{equation}
\begin{aligned}
\sum_{j=1}^{t}(\prod_{k=j+1}^{t}a_{k})b_{j} \approx &  
\sum_{j=1}^{t}\frac{\eta_{0}^{2}\sigma^{2}}{t}(\frac{j}{t})^{(\eta_{0}(1-\gamma))^{2}/2} \\
& \times e^{2\eta_{0}(1-\gamma)(\sqrt{t}-\sqrt{j})}
\end{aligned}
\end{equation}

According to asymptotic analysis theory, when $j = t - m$, $m = c \times \sqrt{t}$ which can also be represented as $m = O(\sqrt{t})$ and this indicates that $m$ is smaller than $t$ when $t$ is very large. Because t and j are very close in value, the exponential term can be approximated by 1, allowing Equation (41) to be rewritten as

\begin{equation}
\begin{aligned}
\sum_{j=1}^{t}(\prod_{k=j+1}^{t}a_{k})b_{j} \approx \sum_{k=0}^{O(\sqrt{t})}\frac{\eta_{0}^{2}\sigma^{2}}{t-c\sqrt{t}}O(1)
\end{aligned}
\end{equation}

For $O(\sqrt{\sqrt{t}})$ summation terms, each with a magnitude of $O(1/t)$, equation (42) can be rewritten as

\begin{equation}
\sum_{j=1}^{t}(\prod_{k=j+1}^{t}a_{k})b_{j} \approx O(\sqrt{t})O(1/t) = O(1/\sqrt{t})
\end{equation}

This proof shows that the estimated error $\epsilon_t(o,a)$ of the competition framework converges at a rate of $1/\sqrt{t}$.

For the policy suboptimality term $\Delta_t(o,a)$, according to the research of online convex optimization, we obtain
{\small
\begin{equation}
\begin{aligned}
\Delta_t(o,a) = & Q^{\pi}_{i}(o^{(t)*},a^{(t)*})-Q^{\pi^{*}}_{i}(o^{(t)},a^{(t)}) \\
& \leq \frac{1}{t}\sum_{\tau=1}^{t}[Q^{\pi}_{i}(o^{(t)*},a^{(t)*})-Q^{\pi^{*}}_{i}(o^{(\tau)},a^{(\tau)})] \\
& \leq \frac{D^{2}(\mathbb{P})}{2\eta_{0}\sqrt{t}} + \frac{\eta_0L^2}{2\sqrt{t}} \approx O(\frac{1}{\sqrt{t}})
\end{aligned}
\end{equation}
}
This proof shows that the suboptimality term $\Delta_t(s,a)$ of the competition framework also converges at a rate of $1/\sqrt{t}$.

The total regret $RE(T) = \sum_{i=1}^N RE_i(T)$ is therefore bounded by:
{\small
\begin{equation}
    RE(T) \leq \sum_{i=1}^N \sum_{t=1}^T \frac{1}{1-\gamma} \left( \frac{1}{\sqrt{t}} + \frac{1}{\sqrt{t}} \right) = \frac{N}{1-\gamma} \sum_{t=1}^T \frac{1}{\sqrt{t}}
\end{equation}
}
Since $\sum_{t=1}^T 1/\sqrt{t} \leq 2\sqrt{T}$, we arrive at the final bound:
\begin{equation}
    RE(T) = O\left(\frac{N\sqrt{T}}{1-\gamma}\right)
\end{equation}

This analysis provides theoretical support that the evolutionary dynamics of CompEvo can promote stable strategy improvement under the stated assumptions, with the potential to achieve a sublinear regret bound.
\end{proof}

\subsection{Comparative Analysis: Linear Regret in Adversarial Frameworks}
\label{sec:mad_comparison}

To highlight the theoretical advantages of our EGT-inspired approach, we provide a comparative analysis of a purely adversarial MAD framework. Such frameworks are a common alternative for complex reasoning but, as we will show, are fundamentally limited. In tasks like forecasting, a MAD setup can be modeled as a zero-sum game, where one agent's gain (e.g., its proposed logic being accepted through debate) is exactly another agent's loss. We prove that such a system suffers from linear total regret, i.e., $RE_{\text{debate}}(T) = O(T)$.

\begin{lemma}[Persistent Suboptimality in Zero-Sum Games]
\label{lemma:linear_regret}
In a two-player zero-sum game that lacks a pure-strategy BNE, any agent $i$ employing a no-regret learning algorithm will exhibit persistent suboptimality. That is, there exists a constant $\delta_{\min} > 0$ such that for all $t$ beyond some initial phase $T_0$:
\begin{equation}
    \mathbb{E}\left[\max_{a_i^*} Q_i^*(o^{(t)}, a^{(t)*}) - V_i^{\pi}(o^{(t)})\right] \ge \delta_{\min}
\end{equation}
where $V_i^{\pi}(o^{(t)}) = \mathbb{E}_{a_i^t \sim \pi(o^{(t)})}[Q_i^*(o^{(t)}, a_{i}^{(t)})]$ is the value of the agent's policy at step $t$.
\end{lemma}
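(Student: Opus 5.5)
The plan is to read the quantity in the statement as agent $i$'s \emph{instantaneous exploitability} at round $t$. Write
\[
G_i(\pi^{(t)}) = \max_{a_i^*} Q_i^*(o^{(t)},a^{(t)*}) - V_i^{\pi}(o^{(t)}),
\]
which is the gain agent $i$ could obtain by switching to a pure best response against the opponent's current mixed strategy $\pi_{-i}^{(t)}$. In a two-player zero-sum game, $G_1 + G_2$ is exactly the duality (Nash) gap of the current profile. This sum is a continuous, nonnegative function on the compact product of simplices. By Theorem~\ref{thm:bne_existence}, a mixed equilibrium exists, and the gap vanishes exactly on the equilibrium set $\mathcal{N}$. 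Since the game has no pure-strategy BNE, $\mathcal{N}$ contains no vertex of the strategy simplex. The lemma therefore reduces to a \emph{last-iterate} statement: the iterates $\pi^{(t)}$ must stay a uniform distance away from $\mathcal{N}$ for $t \ge T_0$.

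\textbf{Step 1 (compactness).} First show that if $\mathrm{dist}(\pi^{(t)},\mathcal{N}) \ge r > 0$ for all $t \ge T_0$, then $G_1+G_2 \ge 2\delta_{\min}$ for some $\delta_{\min}>0$. This follows because a continuous function attains a positive minimum on the compact set $\{\pi : \mathrm{dist}(\pi,\mathcal{N}) \ge r\}$.

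\textbf{Step 2 (the dynamics stay away from $\mathcal{N}$).} Next, establish the distance bound itself. I would fix the no-regret learner to be of the FTRL / multiplicative-weights (Hedge) type used in the regret analysis of Theorem~\ref{thm:regret_bound}. For such learners in zero-sum games, a standard result (Bailey--Piliouras type) says that $\sum_i \mathrm{KL}(\pi_i^* \,\|\, \pi_i^{(t)})$ is nondecreasing along the iterates, for any interior equilibrium $\pi^*$. Starting from a non-equilibrium initialization, this divergence is bounded below by a positive constant. Pinsker's inequality then turns this into a lower bound on $\mathrm{dist}(\pi^{(t)},\mathcal{N})$. If the equilibrium is not fully mixed, I would restrict to the support of the equilibrium. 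Actions outside the support are strictly dominated against $\pi^*$, so their weights decay and the same argument applies on the support face after some $T_0$.

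\textbf{Step 3 (from the sum to agent $i$).} Finally, pass from the summed gap to the individual agent. Using the zero-sum structure $Q_2 = -Q_1$ and the symmetry of the two roles, at least one player carries half the gap at each round. I would then argue that the cyclic dynamics alternate which player is exploitable, but that neither player's gap can tend to $0$. Indeed, if $G_i(\pi^{(t_k)}) \to 0$ along a subsequence, then $\pi_i^{(t_k)}$ approaches a best response to $\pi_{-i}^{(t_k)}$. Because no pure equilibrium exists, this forces the opponent's next update to move strictly, which keeps the KL potential from Step 2 from decreasing. Together these give an eventually uniform per-agent constant $\delta_{\min}$. Taking expectations over $o^{(t)}$ preserves the bound.

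The main obstacle is Step 2 together with the literal phrase ``any no-regret algorithm''. Optimistic or extragradient no-regret methods \emph{do} converge in the last iterate to the mixed equilibrium, so the lemma is false in that generality. I would therefore state explicitly that the result holds for the FTRL/Hedge class of non-optimistic learners that the debate dynamics are modeled by, and prove the KL-monotonicity step for that class. The per-agent refinement in Step 3 is the second delicate point; if it resists, I would state the bound for $\max_i G_i$ instead.
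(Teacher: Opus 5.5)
Your route is genuinely different from the paper's. The paper argues only that a mixed equilibrium forces an entropy floor $H(\pi_i^t)\ge h_{\min}>0$, and that this randomness by itself creates a gap. That inference is invalid: by indifference, every action in the support of a mixed equilibrium is a best response, so the gap is exactly $0$ there. Your reduction to last-iterate non-convergence correctly identifies where a gap could come from.

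However, Step~3 has a genuine gap that no argument can close. The per-agent, every-round bound is false for the very Hedge dynamics you fix. Take Matching Pennies with $p=2x_1-1$ and $q=2y_1-1$. Then $G_1=|q|(1-p\,\mathrm{sgn}\,q)$, $G_2=|p|(1+q\,\mathrm{sgn}\,p)$, and $G_1+G_2=|p|+|q|$. Hedge with step $\eta$ gives $p_t=\tanh L_t$ and $q_t=\tanh M_t$, with $L_{t+1}=L_t+\eta q_t$ and $M_{t+1}=M_t-\eta p_t$. The energy $E_t=\log\cosh L_t+\log\cosh M_t$ (which is exactly the KL sum) is nondecreasing by convexity and unbounded; this is the Bailey--Piliouras divergence you invoke. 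So the iterates spiral clockwise outward.

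In every revolution, the last round before $M$ turns negative has $0<M_t<\eta$ and $\log\cosh L_t\ge E_t-\log\cosh\eta$. At that round $G_1=\tanh M_t\,(1-\tanh L_t)\le 2e^{-2L_t}\to 0$. With the decaying step $\eta_t=\eta_0/\sqrt{t}$ it is even quicker: $G_1\le 2\tanh\eta_t\to 0$ at those rounds. Hence $\liminf_t G_1(\pi^{(t)})=0$.

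The contradiction you sketch does not materialize. The opponent moving strictly and the KL potential not decreasing are fully compatible with player~1 best-responding almost exactly at that round; this is precisely the alternation you describe. So the lemma fails per-agent even inside the Hedge class, not only for optimistic methods.

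You should therefore commit to your fallback. Once repaired, Steps~1--2 give $G_1+G_2\ge 2\delta_{\min}$ for $t\ge T_0$. This is weaker than the lemma, but for two players it yields exactly the summed bound $\sum_i\sum_{t\ge T_0}\mathbb{E}[G_i]\ge 2(T-T_0)\delta_{\min}$ that the paper uses afterwards for linear regret.

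Step~2 needs the following repairs.
\begin{enumerate}
\item Pinsker's inequality bounds total variation from \emph{above} by $\sqrt{\mathrm{KL}/2}$, so a lower bound on $\mathrm{KL}(\pi^*\|\pi^{(t)})$ gives nothing through it. Instead use continuity of $q\mapsto\mathrm{KL}(\pi^*\|q)$ at $q=\pi^*$. Apply it to every $\pi^*\in\mathcal{N}$, together with compactness of $\mathcal{N}$ and $\min_{\pi^*\in\mathcal{N}}\mathrm{KL}(\pi^*\|\pi^{(0)})>0$.
\item State the hidden hypotheses explicitly. The non-equilibrium start is one: the standard uniform initialization \emph{is} the equilibrium in Matching Pennies. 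Both players running Hedge with a common step is another: if the opponent simply plays its interior equilibrium strategy, then $G_i\equiv 0$.
\item The claim that actions outside the support are strictly worse against $\pi^*$ holds only for a quasi-strict equilibrium. One exists in zero-sum games by strict complementary slackness, so you must pick that one. Even then, the decay of those weights against a cycling opponent still needs an argument.
\end{enumerate}
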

\begin{proof}
According to the mini-max theorem, the equilibrium in such games necessitates mixed strategies. Intuitively, an agent cannot adopt a deterministic policy, as it would be quickly predicted and exploited by a rational opponent. To remain unpredictable and avoid exploitation, the agent's policy $\pi_i^t$ must maintain a minimum level of entropy, i.e., $H(\pi_i^t) \ge h_{\min} > 0$. This required randomness implies that the agent cannot exclusively play the optimal action, leading to a persistent, non-zero gap between its policy's expected value and the maximum possible value. This performance gap is lower-bounded by a constant $\delta_{\min} > 0$.
\end{proof}

The total regret for the MAD framework, $RE_{\text{debate}}(T)$, is the sum of these persistent single-step regrets. Its lower bound can be derived as
\begin{equation}
\begin{aligned}
    RE_{\text{debate}}(T) & = \sum_{i=1}^N \sum_{t=1}^T \mathbb{E}[\Delta_t] \\
    & \ge \sum_{i=1}^N \sum_{t=T_0}^T \delta_{\min} \\
    & = N(T-T_0)\delta_{\min} = O(NT)
\end{aligned}
\end{equation}
This linear regret bound starkly contrasts with the sublinear regret achieved by our EGT-inspired framework, as theoretically justified in Appendix~\ref{sec:theory}. It provides a formal justification for why zero-sum dynamics are ill-suited for achieving stable, long-term strategy improvement. In such systems, agents are perpetually trapped in a cycle of counter-exploitation, preventing them from converging to a globally effective strategy.

\subsection{Aggregation Mechanism: Monotonicity of Fitness-based Weighting}
\label{sec:monotonicity_proof}

In our framework, the learning and evolution of agents rely on a shared global reward signal derived from the aggregated prediction. To ensure that an improvement in an individual agent's strategy reliably translates to an improvement in global performance, it is crucial to establish that our aggregation mechanism is monotonic. This means that an enhancement in any single agent's predictive performance, reflected by a higher long-term fitness, should lead to a non-detrimental, and typically positive, impact on the final aggregated outcome. This is crucial for proving the effectiveness of our competition, especially the survival-of-the-fittest mechanism. 

We consider an agent's long-term fitness $M_i$ in Equation~(\ref{eq:fitness}) as a proxy for its local value. The final aggregated prediction $\hat{\mathbf{Y}}_{\text{agg}}$ is defined by the weighting rule in Equation~(\ref{eq:weight_softmax}), and its performance is measured by the prediction loss $L_{\text{predict}}$. Our goal is to prove that as an agent's long-term fitness increases, its contribution weight to the final prediction is non-decreasing. This ensures that better-performing agents are rightfully given more influence.

\begin{prop}[Monotonicity of Aggregation Weights]
\label{prop:monotonicity}
In our framework, if the long-term fitness $M_k$ of an agent $k$ increases, its weight $w_k$ in the final aggregated prediction is non-decreasing, i.e., $\frac{\partial w_k}{\partial M_k} \ge 0$.
\end{prop}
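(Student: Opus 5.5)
The plan is to compute the partial derivative directly from the softmax weighting rule in Equation~(\ref{eq:weight_softmax}), holding the gates $g_j$, the temperature $\tau$ and the other fitnesses $M_j$ ($j\neq k$) fixed. Write $z_j = g_j M_j/\tau$ for the logits, so that $w_k = e^{z_k}/\sum_{j=1}^{I} e^{z_j}$.

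The derivative follows from the quotient rule. Only the logit $z_k$ depends on $M_k$, and $\partial z_k/\partial M_k = g_k/\tau$. Differentiating the numerator and denominator gives
\begin{equation*}
\frac{\partial w_k}{\partial M_k} = \frac{g_k}{\tau}\left(\frac{e^{z_k}}{\sum_j e^{z_j}} - \frac{e^{2z_k}}{\left(\sum_j e^{z_j}\right)^2}\right) = \frac{g_k}{\tau}\, w_k(1-w_k).
\end{equation*}
This is the standard softmax Jacobian diagonal $w_k(1-w_k)$, multiplied by the chain-rule factor $g_k/\tau$.

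The sign is then determined factor by factor. Because every exponential is strictly positive, $w_k\in(0,1)$, and hence $w_k(1-w_k)>0$; at $I=1$ we have $w_k\equiv 1$ and the derivative vanishes. The temperature satisfies $\tau>0$ by construction. So the sign of $\partial w_k/\partial M_k$ coincides with the sign of $g_k$.

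The main obstacle is that the gate $g_k$ is learnable and nothing in the statement constrains its sign. Without a sign condition the proposition fails: if $g_k<0$, higher fitness would strictly reduce the weight. I would therefore make explicit the assumption that the selection gates are nonnegative. This could be enforced by parameterizing $g_i$ through a nonnegative map such as softplus or a sigmoid, or by clipping $g_i$ to $[0,\infty)$. Under this assumption $\partial w_k/\partial M_k \ge 0$.

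The edge cases then fit the claim. Equality holds exactly when $g_k=0$, which is the case the $\ell_1$ pruning loss $L_{\text{prune}}$ drives persistently weak agents toward. In that case the weight becomes insensitive to fitness rather than inversely sensitive, which is consistent with ``non-decreasing.'' Strict monotonicity holds whenever $g_k>0$ and $I\ge 2$.

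Finally, I would note a complementary fact: for $j\neq k$, $\partial w_j/\partial M_k = -(g_k/\tau)\, w_j w_k \le 0$. This shows that the weight gained by agent $k$ is taken from the other agents, which supports the survival-of-the-fittest reading of the selection mechanism.
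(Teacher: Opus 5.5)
Your proposal is correct and follows the paper's proof essentially step for step: logits $z_j = g_j M_j/\tau$, the quotient rule for $\partial w_k/\partial z_k$, the chain-rule factor $g_k/\tau$, and a sign conclusion resting on $g_k \ge 0$ and $\tau > 0$; the paper also imposes the nonnegative-gate assumption explicitly inside its proof rather than in the statement, so flagging it is apt. Your closed form $\frac{g_k}{\tau}w_k(1-w_k)$ is the same quantity as the paper's $\frac{g_k}{\tau}\, e^{z_k}\sum_{j\neq k} e^{z_j}/\bigl(\sum_j e^{z_j}\bigr)^2$, written more compactly.
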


\begin{proof}
Recall the contribution weight $w_k$ defined in Equation~(\ref{eq:weight_softmax}) in the main text:
\[
w_k = \frac{\exp((g_k \cdot M_k) / \tau)}{\sum_{j=1}^{I} \exp((g_j \cdot M_j) / \tau)}
\]

To analyze its monotonicity with respect to $M_k$, we compute the partial derivative $\frac{\partial w_k}{\partial M_k}$. Let $z_j = (g_j \cdot M_j) / \tau$ for simplification. The expression becomes $w_k = \frac{e^{z_k}}{\sum_j e^{z_j}}$.

Using the quotient rule for differentiation with respect to $z_k$:
\begin{align}
    \frac{\partial w_k}{\partial z_k} &= \frac{\frac{\partial(e^{z_k})}{\partial z_k} \left(\sum_j e^{z_j}\right) - e^{z_k} \frac{\partial(\sum_j e^{z_j})}{\partial z_k}}{\left(\sum_j e^{z_j}\right)^2} \nonumber \\
    &= \frac{e^{z_k} \left(\sum_j e^{z_j}\right) - e^{z_k} (e^{z_k})}{\left(\sum_j e^{z_j}\right)^2} \nonumber \\
    &= \frac{e^{z_k} \left(\sum_{j \neq k} e^{z_j}\right)}{\left(\sum_j e^{z_j}\right)^2}
\end{align}
Since the exponential function yields a positive value, and assuming the gating parameter $g_j \ge 0$ and temperature $\tau > 0$, all terms in the numerator and denominator are non-negative. Thus, $\frac{\partial w_k}{\partial z_k} \ge 0$. According to the chain rule, we have:
\begin{equation}
    \frac{\partial w_k}{\partial M_k} = \frac{\partial w_k}{\partial z_k} \cdot \frac{\partial z_k}{\partial M_k} = \frac{\partial w_k}{\partial z_k} \cdot \frac{g_k}{\tau}
\end{equation}
Given that $g_k \ge 0$ and $\tau > 0$, it follows that $\frac{\partial w_k}{\partial M_k} \ge 0$.

Conclusion: this proof demonstrates that as an agent's long-term fitness ($M_k$) improves, its contribution weight ($w_k$) to the final group decision non-decreasingly increases. 
\end{proof}

\section{Additional Framework Details}
\label{sec:framework_details}
\subsection{Soft Pruning}
\label{SF}

Soft pruning down-weights persistently weak agents without discrete elimination. It is implemented through the learnable gate used in the fitness-based weighting rule and an L1 penalty on gate magnitudes.

We first compute each agent's long-term fitness $M_i$ as the EMA of its per-round reward $R_i$ (see Eq.~(\ref{eq:reward})--(\ref{eq:fitness}) in the main text), and determine its contribution weight $w_i$ through the gated softmax selection (Eq.~(\ref{eq:weight_softmax})).
To enable soft-pruning, we impose an L1 penalty on the learnable gates $\{g_i\}_{i=1}^{N}$:
\begin{equation}
\mathcal{L}_{\text{pruning}} = \sum_{i=1}^{N} |g_i|.
\label{eq:pruning_loss_app}
\end{equation}
When an agent has low long-term fitness, the gate penalty reduces its contribution weight in Eq.~(\ref{eq:weight_softmax}). The agent is therefore suppressed smoothly during training, while still remaining available for later recovery if its strategy improves.

\subsection{Logic Diversity}
\label{IA}

The diversity term acts on the candidate logic representation generated by each agent before opponent-aware mixing. For agent $i$, this representation is denoted as $h_i^c$. We penalize pairwise cosine similarity among candidate logics:
\begin{equation}
\mathcal{L}_{\text{diversity}} = \sum_{i=1}^{N} \sum_{j=i+1}^{N} \frac{h_i^c \cdot h_j^c}{\|h_i^c\|_2 \|h_j^c\|_2} \label{eq:diversity_loss_app}
\end{equation}
Minimizing this term encourages agents to maintain distinct candidate logics before they exchange opponent information. This is the implementation detail behind the diversity regularization used in the main text.

\subsection{Module Interaction Summary}
\label{subsec:diversity_discussion}

The trainable loop has three interacting terms:
\begin{itemize}
    \item $L_{\text{pg}}$ links forecasting rewards to logic-generation updates.
    \item $L_{\text{div}}$ keeps candidate logics separated before opponent-aware fusion.
    \item $L_{\text{prune}}$ and the gated softmax in Eq.~(\ref{eq:weight_softmax}) reduce the influence of weak long-term strategies.
\end{itemize}
The first term updates strategies, the second preserves variation, and the third implements differentiable selection.

\section{Experimental Setup Details}
\label{sec:experimental_settings}

\subsection{Details of Datasets}
\label{dataset}
The dataset sources, scales, and news filtering examples are provided for reproduction.

\begin{table*}[htbp]
\centering
\small
\setlength{\tabcolsep}{4pt}
\renewcommand{\arraystretch}{1.12}
\begin{tabular}{@{}p{0.11\textwidth}p{0.12\textwidth}p{0.31\textwidth}p{0.07\textwidth}p{0.09\textwidth}p{0.22\textwidth}@{}}
\toprule
Dataset & Domain & Time-series source & Variables & Time points & News source \\
\midrule
Electricity & Energy & AEMO operational demand~\cite{aemo_operational_demand} & 19 & 52,560 & news.com.au/GDELT~\cite{leetaru2013gdelt} \\
Exchange & Finance & Standard exchange-rate series~\cite{lai2018modeling} & 7 & 1,460 & news.com.au/GDELT~\cite{leetaru2013gdelt} \\
Bitcoin & Cryptocurrency & BitInfoCharts-derived series~\cite{bitinfocharts} & 18 & 858 & GDELT/domain crawls~\cite{leetaru2013gdelt} \\
Traffic & Transportation & Caltrans PeMS~\cite{caltrans_pems} & 862 & 17,544 & GDELT/domain crawls~\cite{leetaru2013gdelt} \\
\bottomrule
\end{tabular}
\caption{Dataset provenance and time-series scale.}
\label{tab:dataset_provenance}
\end{table*}

Table~\ref{tab:dataset_provenance} summarizes dataset provenance and scale.
Representative news examples are listed below to illustrate the type of textual evidence aligned with each dataset.

\begin{table*}[htbp]
\centering
\small
\setlength{\tabcolsep}{5pt}
\renewcommand{\arraystretch}{1.12}
\begin{tabular}{p{0.16\textwidth}p{0.76\textwidth}}
\toprule
Dataset & Representative news example \\
\midrule
Electricity & South Australia is only days away from a heatwave which will last for almost a week and has left Tour Down Under organisers anxiously watching the weather forecast. \\
Exchange & The RBA has dramatically revised down its economic forecasts amid the ongoing property market correction, prompting the Australian dollar to plunge again. \\
Traffic & A funnel cloud was spotted over Waterford in northern California on April 27 as a line of storms brought heavy rain and hail to the area. \\
Bitcoin & Personal finance expert Peter Adeney, known as `Mr. Money Mustache,' has warned against investing in bitcoin, calling it a speculative asset rather than a true investment. \\
\bottomrule
\end{tabular}
\caption{Representative news examples for each dataset.}
\label{tab:dataset_news}
\end{table*}

\subsection{Implementation Details}
\label{sec:implementation_details}

\paragraph{Base model and efficiency design.}
\label{sec:base_model}
We use 10 agents with 5-fold validation under the same train/validation/test protocol for all methods. Agents share an instruction-tuned backbone, while agent-specific lightweight adapters maintain heterogeneous reasoning behavior. The backbone parameters are kept frozen during agent-specific updates.

Each agent uses separate LoRA adapters for forecasting and logic generation, both with rank 16 and scaling factor $\alpha_{\mathrm{LoRA}}=32$. Candidate textual logic is tokenized with the shared Llama-3.1-8B tokenizer and encoded using the corresponding agent's logic adapter; the last-layer hidden state at the final non-padding token forms the 4,096-dimensional logic representation. For agent $i$, scalar interaction scores over peer agents are normalized row-wise to obtain $\alpha_{ij}$, and the opponent context is the weighted sum of peer logic representations. The two 4,096-dimensional gates in the three-stage update then fuse the agent and opponent representations elementwise. The fused representation is linearly projected into one soft-prompt embedding, prepended to the logic-generation instruction, and decoded autoregressively by the shared Llama-3.1-8B backbone into the next textual logic.

\paragraph{Policy optimization.}
\label{GRPO} 

We use group relative policy optimization (GRPO) \cite{deepseek_math_grpo} to align logic generation with forecasting rewards. The update uses group-normalized rewards, a KL penalty against a frozen reference policy, AdamW optimization, 3\% warmup, and cosine learning-rate decay. The main hyperparameters are summarized in Table~\ref{tab:hyperparameters}.

\paragraph{Hyperparameters.}
The supervised forecasting stage uses cosine decay, and the input context window is truncated to 4096 tokens.

\begin{table}[htbp]
\centering
\small
\setlength{\tabcolsep}{5pt}
\renewcommand{\arraystretch}{1.05}
\begin{tabular}{lc}
\toprule
Hyperparameter & Value \\
\midrule
Number of agents & 10 \\
Validation folds & 5 \\
Adapter rank & 16 \\
Adapter alpha & 32 \\
Adapter dropout & 0.05 \\
GRPO group size & 8 \\
KL coefficient & 0.04 \\
Policy learning rate & $1\times10^{-6}$ \\
Forecasting learning rate & $1\times10^{-4}$ \\
PPO clipping range & 0.2 \\
Warmup ratio & 3\% \\
Context length & 4096 \\
\bottomrule
\end{tabular}
\caption{Main implementation hyperparameters.}
\label{tab:hyperparameters}
\end{table}

\subsection{Forecasting Input Template for Fine-tuning LLM}
\label{input}

The input-output construction follows \citet{Wang2024}. A shortened example is shown in the card below.

\begin{promptcard}{Forecasting Input Template for Fine-tuning}
\textbf{Instruction.}
Historical load data: 4640.1, 4476.7, ...

\vspace{0.4em}
\textbf{Input.}
Predict next-day load for VIC. Context includes date, holiday status, weather, and aligned news with rationales.

\vspace{0.4em}
\textbf{Output.}
4741.8, 4497.8, 4360.1, 4188.0, ...

\vspace{0.4em}
\textbf{Serialized format.}
\begin{tcolorbox}[
    colback=black!3,
    colframe=black!20,
    boxrule=0.4pt,
    arc=1pt,
    left=4pt,
    right=4pt,
    top=3pt,
    bottom=3pt
]
\begin{lstlisting}[basicstyle=\ttfamily\footnotesize,breaklines=true]
{
  "instruction": "Historical load data: 4640.1, 4476.7, ...",
  "input": "Predict next-day load for VIC. Context includes date, holiday status, weather, and aligned news with rationales.",
  "output": "4741.8, 4497.8, 4360.1, 4188.0, ..."
}
\end{lstlisting}
\end{tcolorbox}
\end{promptcard}
\subsection{Baseline Details}
\label{sec:baselines}

The baselines are grouped into numerical-centric models, news-aware LLM models, and multi-agent systems. All methods use the same dataset split, direct 48-step forecasting horizon, evaluation metrics, and five temporally ordered validation folds; methods that consume news receive the same candidate news pool. We retain each multi-agent baseline's original interaction mechanism while adapting its task interface and output to this forecasting setting. Table~\ref{tab:baseline_configuration} reports the single-agent and multi-agent LLM configurations and their trainable parameter counts. GPT-4o is frozen and therefore excluded from the MAE-GPT count. These configurations are not capacity matched: CompEvo uses 26.7\% fewer trainable parameters than MAD-RL and 2.20$\times$ as many as MAEL or MAE-GPT.

\begin{table}[htbp]
\centering
\small
\setlength{\tabcolsep}{3.5pt}
\renewcommand{\arraystretch}{1.05}
\resizebox{\columnwidth}{!}{%
\begin{tabular}{lclrr}
\toprule
Method & Agents & Model(s) & Trainable params & Train steps \\
\midrule
SA & 1 & Llama-3.1-8B & 41.9M & 2,193 \\
MAEL & 10 & Llama-3.1-8B & 419.4M & 2,193 \\
MAE-GPT & 10 & GPT-4o + Llama-3.1-8B & 419.4M & 2,193 \\
MAD-RL & 10 & Llama-3.1-8B & 1,258.4M & 2,193 \\
CompEvo & 10 & Llama-3.1-8B & 922.8M & 2,193 \\
\bottomrule
\end{tabular}
}
\caption{Multi-agent baseline configurations. All methods use a context length of 4,096 tokens. Trainable counts exclude frozen backbone parameters and the frozen GPT-4o selector in MAE-GPT.}
\label{tab:baseline_configuration}
\end{table}

\paragraph{Numerical-centric baselines.}
These methods rely solely on historical time-series numerical data for forecasting.

\begin{itemize}
    \item DLinear \cite{zeng2023transformers}: A linear decomposition baseline that separates trend and remainder components for time-series forecasting.
    
    \item iTransformer \cite{liu2023itransformer}: An inverted Transformer that treats each variate as a token to model multivariate dependencies.

    \item FiLM \cite{zhou2022film}: A frequency-enhanced Legendre memory model for preserving historical information in long-term forecasting.

    \item Pyraformer \cite{liu2021pyraformer}: A pyramidal-attention forecasting model that captures multi-resolution temporal dependencies with linear complexity.

    \item FEDformer \cite{zhou2022fedformer}: A decomposition-based Transformer that uses frequency-domain enhancement for efficient long-term forecasting.

    \item GPT4TS \cite{zhou2024one}: A pre-trained language model adaptation framework for time-series forecasting.
\end{itemize}

\paragraph{News-aware LLM baselines.}
These methods explicitly incorporate textual news or event information alongside numerical data.

\begin{itemize}
    \item News2Forecast (N2F) \cite{Wang2024}: A news-aware LLM pipeline that retrieves aligned news and uses self-refinement for forecasting.
    
    \item GPT4MTS \cite{liu2025dpgpt4mts}: A dual-prompt multimodal framework that aligns textual news with numerical trends.
    
    \item LTP-LLM \cite{xiong2025beyond}: A multimodal framework that uses LLM-generated trend descriptions as auxiliary forecasting features.
\end{itemize}

\paragraph{Multi-agent baselines.}
To isolate the contribution of our evolutionary competitive mechanism, we compare against recent multi-agent architectures.

\begin{itemize}
    \item MAEL (Multi-Agent Ensemble Learning): A parallel-agent averaging baseline without interaction or evolution.
    
    \item MAE-GPT \cite{yuan-etal-2025-evoagent}: A prompt-mutation evolutionary baseline based on EvoAgent.
    
    \item MAD-RL \cite{park-etal-2025-maporl}: Agents first generate individual 48-step forecasts and supporting news rationales, then exchange them through the RL-enhanced debate protocol and revise their forecasts. The final prediction is a weighted average of the revised forecasts, using normalized final-round verifier scores as agent weights. Forecasting loss over the 48-step target window is used as the task reward.
\end{itemize}

\section{Additional Experimental Results}
\label{sec:llm_generalization}
\label{sec:LLMs_tests}

\subsection{Discussion on Memorization Risk}
\label{subsec:performance_source_discussion}

Because LLMs may encode historical event knowledge from pretraining, we avoid attributing the gains of CompEvo solely to pretrained event memory. In our experiments, all compared LLM-based methods use the same temporal data split and the same available news pool. The ablation results in the main text further show that removing the evolutionary components leads to clear performance degradation under the same backbone and data setting, suggesting that the observed gains mainly come from the proposed competitive adaptation mechanism.

To further examine memorization risk, we construct an additional 2026 Electricity inference set using the same data provenance as the original Electricity benchmark in Table~\ref{tab:dataset_provenance}, namely AEMO operational demand data aligned with GDELT news. The set contains 100 samples from February 19 to March 30, 2026, covering Australian regions such as TAS, QLD, NSW, VIC, and SA. Each sample contains a 48-step historical load sequence, candidate news, and a 48-step ground-truth future sequence. We directly run inference using the Llama-3.1-based CompEvo model trained in the main experiments, without any further parameter update.

\begin{table}[t]
\centering
\footnotesize
\begin{tabular}{lcc}
\toprule
Metric & Original test & 2026 inference \\
\midrule
MSE & 125440.00 & 111141.83 \\
RMSE & 354.18 & 333.38 \\
MAE & 220.15 & 225.02 \\
MAPE (\%) & 6.45 & 6.99 \\
\bottomrule
\end{tabular}
\caption{Leakage-controlled inference results on Electricity. The 2026 samples are used only for post-training inference and are not used for training, validation, model selection, or hyperparameter tuning.}
\label{tab:memorization_risk_2026}
\end{table}

As shown in Table~\ref{tab:memorization_risk_2026}, CompEvo maintains comparable performance on the 2026 samples, with an RMSE of 333.38 and a MAPE of 6.99\%. This directly suggests that the model does not rely only on memorized historical event--target associations. Instead, the trained model can still generalize to temporally later electricity data, supporting that CompEvo learns a transferable news-conditioned forecasting mechanism rather than merely exploiting pre-training exposure. We view this experiment as a leakage-controlled sanity check, while fully dynamic benchmarks with continuously updated news streams remain important for future evaluation.

\subsection{Robustness Analysis under News Perturbations}
\label{sec:noisy_news}
\vspace{-0.2cm}

Figure~\ref{fig:noisy_news_robustness} shows RMSE of CompEvo, MAEL, MAE-GPT, and MAD-RL on all four datasets under shuffled, irrelevant, noisy, and original news. CompEvo degrades the slowest, while MAEL and MAE-GPT show larger increases and MAD-RL clusters evidence-seeking patterns, overreacting to perturbations. This demonstrates that fitness-based selection and competitive evolution maintain diverse, specialized reasoning, with similar trends across MAE, MSE, and MAPE.

\begin{figure}[t]
  \centering
  \includegraphics[width=\columnwidth]{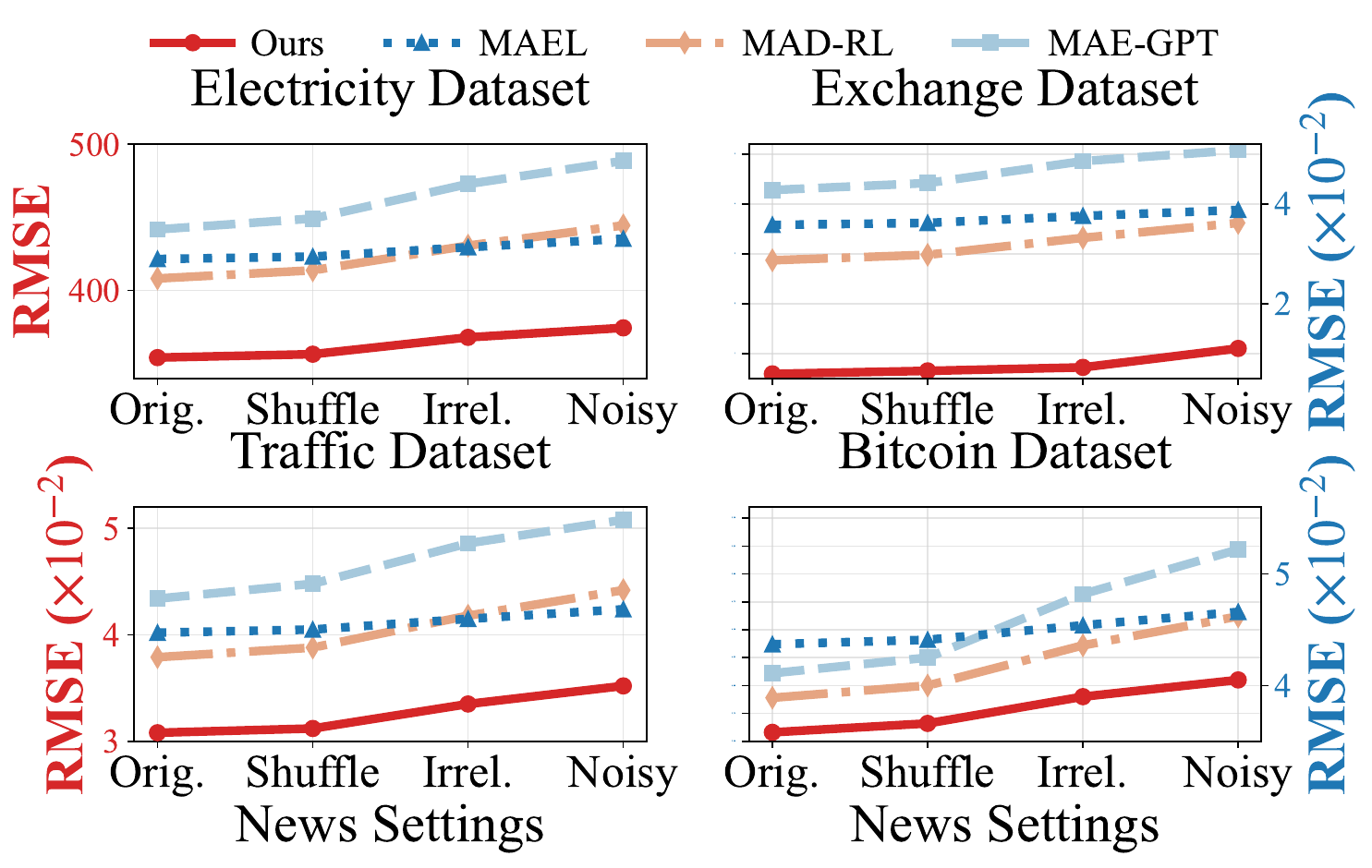}
\caption{RMSE under news perturbations.}
  \label{fig:noisy_news_robustness}
\end{figure}

\subsection{Additional Hyperparameter Sensitivity Analysis}
\label{sec:Parameter Sensitivity Analysis}
\label{sec:parameter_sensitivity}

Both diversity and pruning regularization work best at moderate strength. Figure~\ref{fig:sensitivity_analysis} shows the two representative sweeps discussed in the main experimental design. The remaining sensitivity analyses below further test population size, policy-update weight, generation temperature, aggregation temperature, prompt robustness, and epoch selection.

\begin{figure}[h]
  \centering
  \begin{subfigure}{\linewidth}
    \centering
    \includegraphics[width=\linewidth]{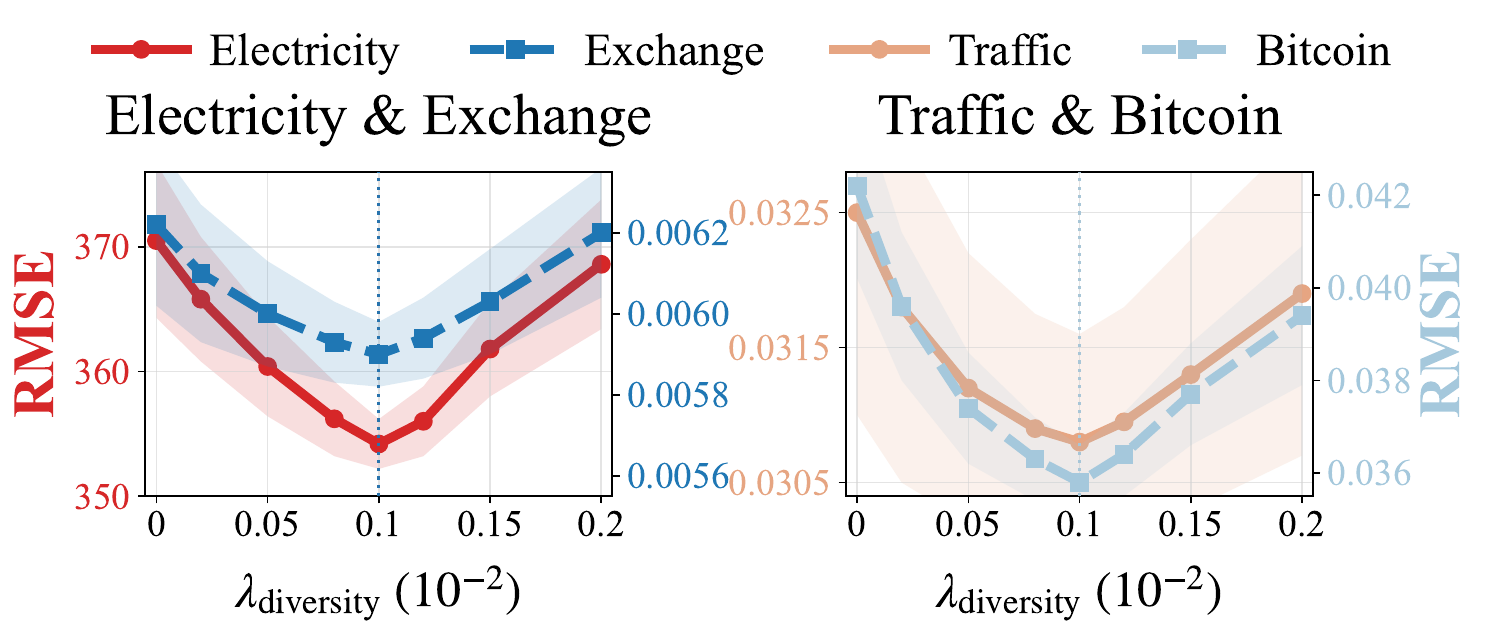}
    \caption{Diversity regularization.}
  \end{subfigure}

  \vspace{0.5em}  

  \begin{subfigure}{\linewidth}
    \centering
    \includegraphics[width=\linewidth]{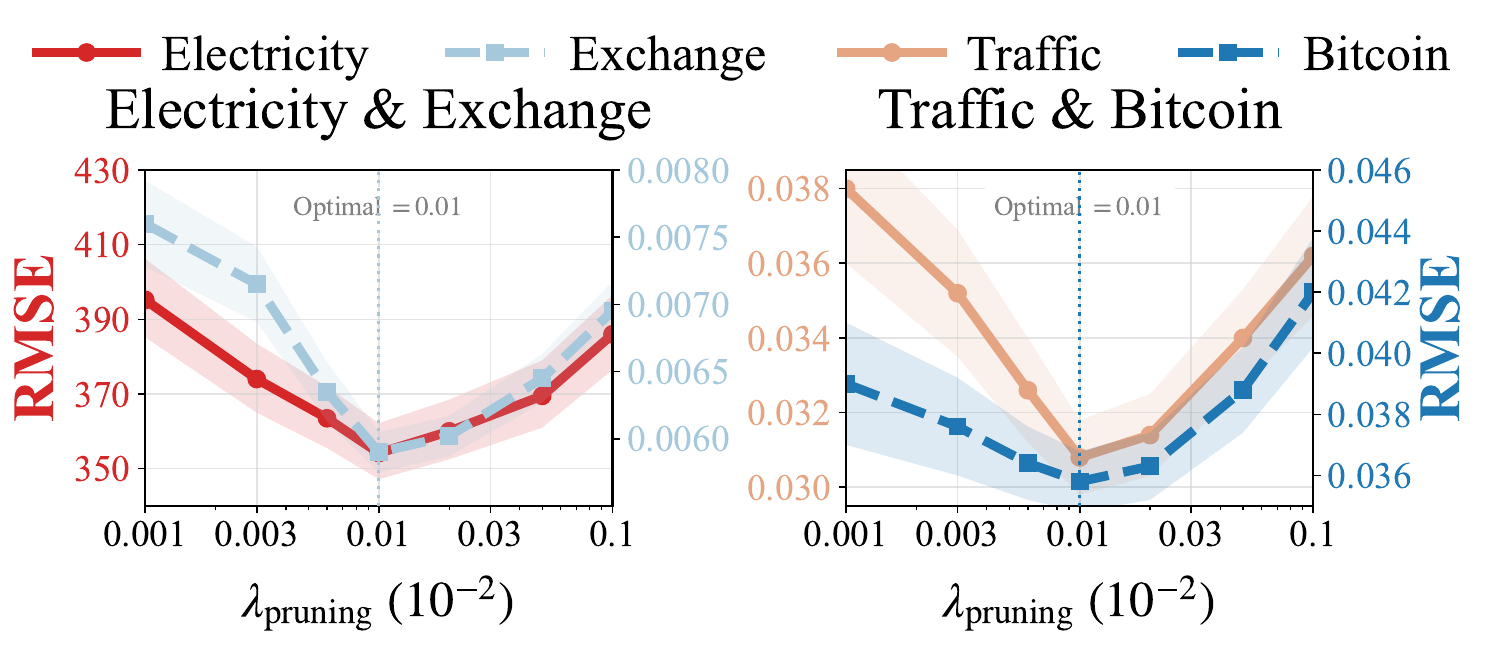}
    \caption{Pruning regularization.}
  \end{subfigure}

  \caption{Sensitivity analysis on diversity and pruning regularization. Both regularizers work best at moderate strength.}
  \label{fig:sensitivity_analysis}
\end{figure}

\subsection{Impact of Different Number of Initial Agents}
\label{sec:Different_Number_of_Initial_Agents}

We vary the initial population size from 2 to 10 on the Electricity dataset. Figure~\ref{fig:agent_population_impact} shows that larger populations generally obtain lower later-epoch MAPE, with $N=10$ giving the best final result. We therefore use 10 agents in the main experiments.

\begin{figure}[!ht]
  \centering
  \includegraphics[width=0.9\columnwidth]{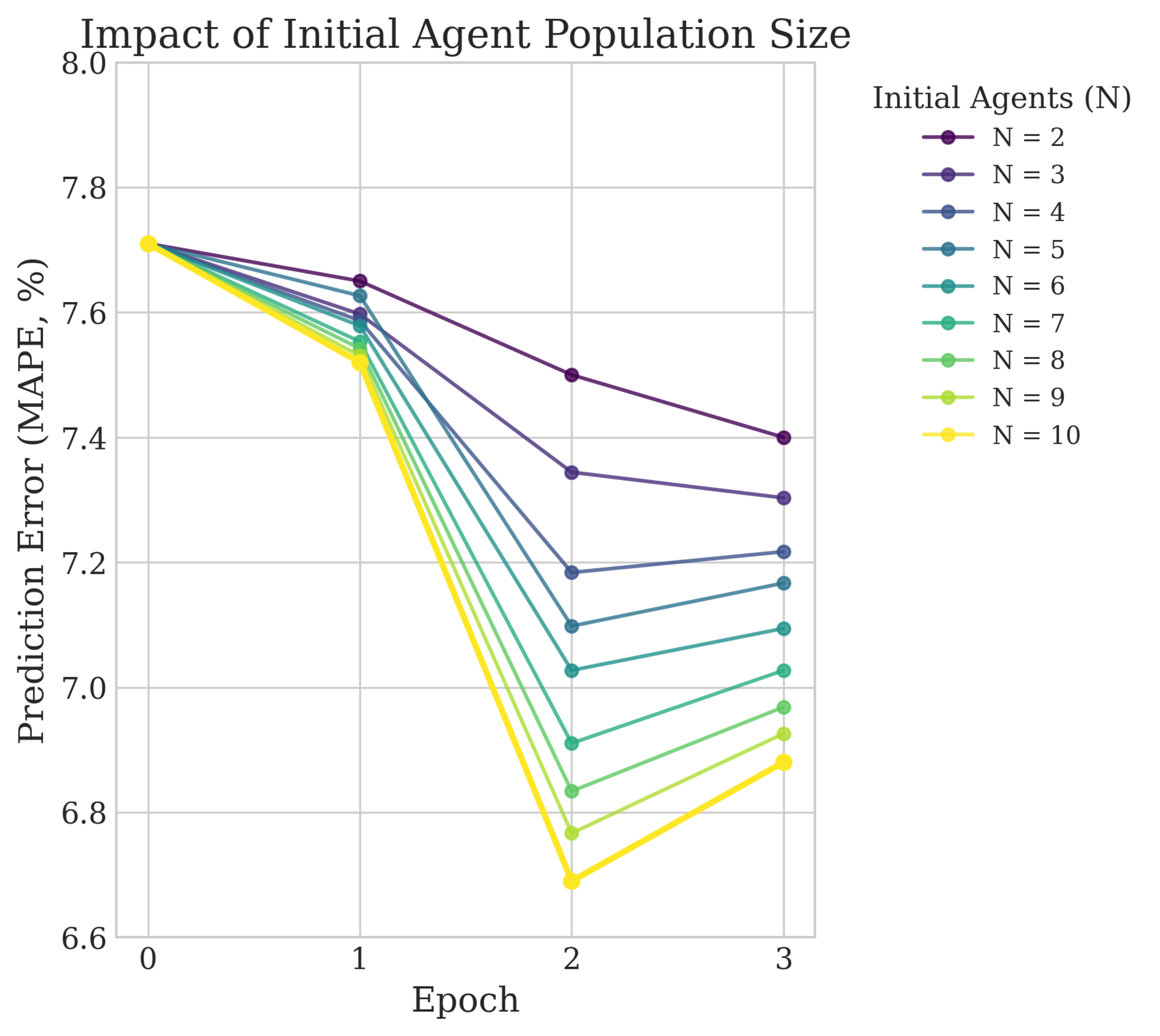}
\caption{Impact of the initial number of agents on prediction accuracy.}
  \label{fig:agent_population_impact}
\end{figure}

\subsection{Impact of Strategy Weight (\texorpdfstring{$\lambda_{PG}$}{lambda\_PG})}
\label{sec:impact_lambda_strategy}

The policy-update weight $\lambda_{PG}$ controls how strongly relative reward signals affect logic evolution. Table~\ref{tab:lambda_sensitivity} shows that a moderate value performs best: too little policy pressure reduces the method to a mostly fitness-weighted ensemble, while overly large weights destabilize forecasting.

\begin{table}[htbp]
    \centering
    \small
    \setlength{\tabcolsep}{3.5pt}
    \renewcommand{\arraystretch}{1.3}
    \begin{tabular}{ccccc}
        \toprule
        \multirow{2}{*}{$\lambda_{PG}$} & \multicolumn{4}{c}{Forecasting performance} \\
        \cmidrule(lr){2-5}
         & RMSE & MSE & MAE & MAPE \\
        \midrule
        0.0 (Ensemble) & 398.50 & 158.00 & 248.00 & 7.25\% \\
        0.1            & 375.20 & 142.00 & 232.00 & 6.85\% \\
        0.5 (CompEvo) & 354.18 & 125.44 & 220.15 & 6.45\% \\
        1.0            & 382.40 & 148.00 & 238.00 & 7.05\% \\
        5.0            & 435.00 & 189.00 & 268.00 & 7.95\% \\
        \bottomrule
    \end{tabular}
    \caption{Sensitivity analysis of $\lambda_{PG}$ on the Electricity dataset.}
    \label{tab:lambda_sensitivity}
\end{table}

\subsection{Impact of LLM Generation Temperature (\texorpdfstring{$T_{gen}$}{Tgen})}
\label{sec:Impact_of_Temperature}

We vary the generation temperature used by the LLM when producing textual logic. Figure~\ref{fig:temperature} shows that moderate stochasticity is sufficient for exploration without causing large performance variance. We set $T_{gen}=0.7$ in the main experiments.

\begin{figure}[!ht]
  \includegraphics[width=\columnwidth]{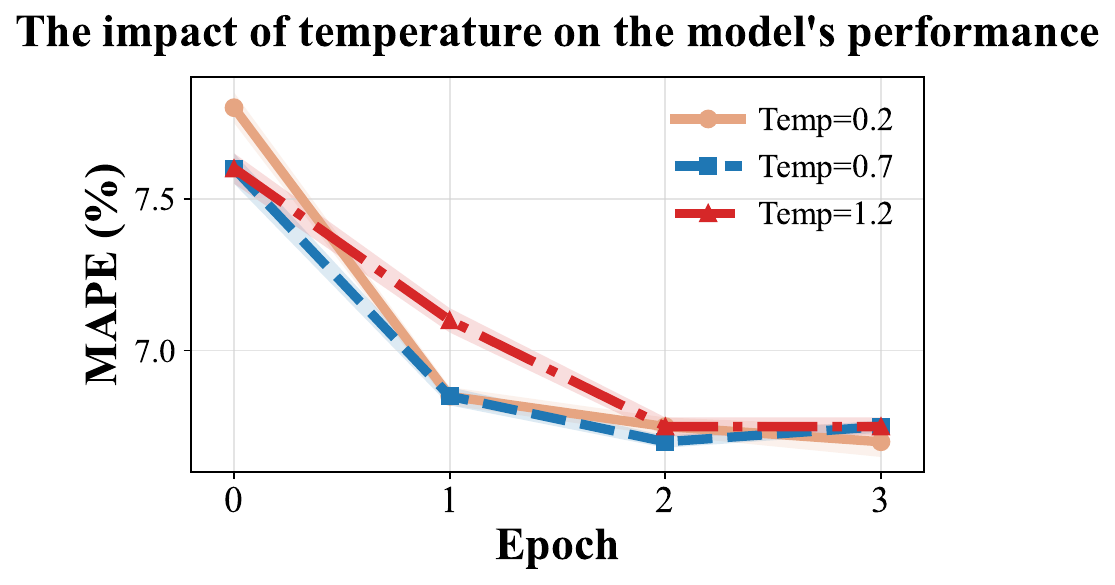}
  \caption{Impact of generation temperature on forecasting performance.}
  \label{fig:temperature}
\end{figure}

\subsection{Aggregation Temperature (\texorpdfstring{$\tau$}{tau})}
\label{sec:sensitivity_temperature}

This temperature controls the softmax aggregation weights in Equation~\ref{eq:weight_softmax}, and is distinct from the LLM generation temperature above. Table~\ref{tab:tau_impact_data} and Figure~\ref{fig:tau_sensitivity} show a U-shaped pattern: very small $\tau$ over-concentrates weight on a few agents, while very large $\tau$ weakens selection pressure. We use $\tau=0.5$.

\begin{table}[htbp]
    \centering
    \small
    \setlength{\tabcolsep}{3.5pt}
    \renewcommand{\arraystretch}{1.3}
    \begin{tabular}{ccccc}
        \toprule
        \multirow{2}{*}{Temperature ($\tau$)} & \multicolumn{4}{c}{Forecasting performance} \\
        \cmidrule(lr){2-5}
         & RMSE & MSE & MAE & MAPE \\
        \midrule
        0.1 & 378.50 & 144.00 & 235.00 & 6.92\% \\
        0.5 (CompEvo) & 354.18 & 125.44 & 220.15 & 6.45\% \\
        1.0 & 372.00 & 139.00 & 230.00 & 6.80\% \\
        5.0 & 415.00 & 170.00 & 255.00 & 7.60\% \\
        \bottomrule
    \end{tabular}
    \caption{Sensitivity analysis of aggregation temperature ($\tau$) on the Electricity dataset.}
    \label{tab:tau_impact_data}
\end{table}

\begin{figure}[htbp]
    \centering
    \includegraphics[width=0.9\columnwidth]{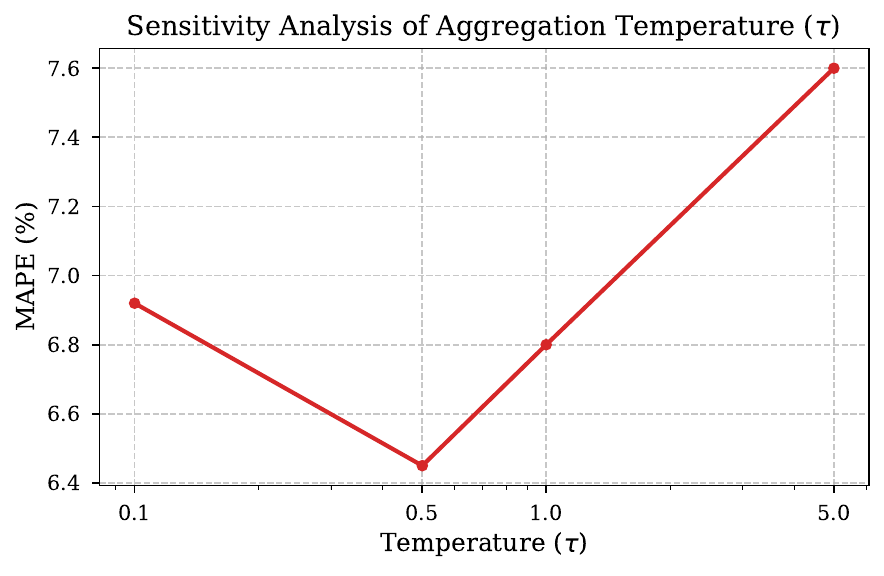}
    \caption{Aggregation temperature sensitivity.}
    \label{fig:tau_sensitivity}
\end{figure}

\subsection{Role of EMA Smoothing Factor (\texorpdfstring{$\beta$}{beta})}
\label{sec:impact_beta}

The EMA smoothing factor $\beta$ controls how quickly long-term fitness responds to recent forecasting rewards:
\begin{equation}
M_i^{(e+1)} = \beta M_i^{(e)} + (1 - \beta) R_i^{(e)}
\end{equation}
A small $\beta$ reacts quickly but is sensitive to noise, while a large $\beta$ gives stable but slow-moving fitness estimates. We use a moderate value to balance responsiveness and stability.

\subsection{Agent-side Instruction Robustness}
\label{sec:prompt_robustness}

We test prompt sensitivity by rephrasing only the agent-side logic-update instruction, while keeping the LLM judge prompt, model, data, and hyperparameters fixed. Figure~\ref{fig:prompt_settings} shows nearly overlapping MAPE curves before and after paraphrasing, suggesting that the result is not driven by a brittle wording choice in the agent-side update instruction.

\begin{figure}[t]
  \centering
  \includegraphics[width=0.8\columnwidth]{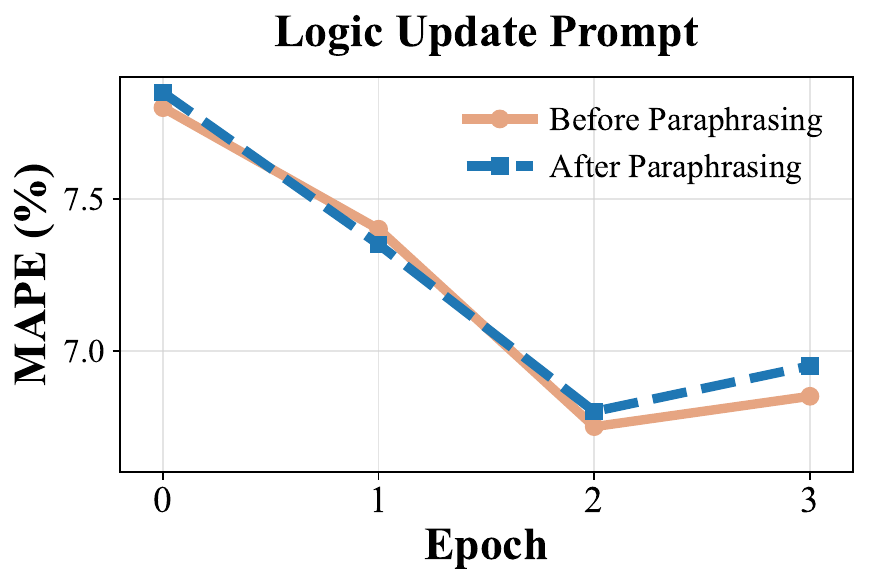}
  \caption{Robustness to paraphrasing the agent-side logic-update instruction. ``Before Paraphrasing'' uses the original agent-side update instruction, while ``After Paraphrasing'' uses a semantically equivalent rephrased version. The LLM judge prompt is kept fixed. Results are reported as MAPE across training epochs.}
  \label{fig:prompt_settings}
\end{figure}

\section{LUD Definition and LLM Judge Protocol}
\label{app:rius_prompt}

\subsection{Overview}
This appendix provides the complete definitions and computation procedure of our rubric-based Logic Update Degree (LUD), including (i) embedding-based update magnitude, (ii) the rubric-based Innovation Update Score (RIUS) with two indicators, and (iii) the exact LLM judge prompt template used to produce the rubric indicators. The rubric-based design follows recent work on expert-guided innovation measurement and structured LLM-based creativity evaluation~\cite{nowak2025aibasedmeasurementinnovationmapping,Zhao_2025,alrabeyah2024aut_agreement}.

\subsection{Formal Definition of LUD}
\label{app:lud_def}
For agent $i$ at evolutionary stage $t$, we denote its generated logic text as $logic_i^t$.
We first compute the embedding-based update magnitude between two consecutive stages:
\[
\Delta_i^t = 1 - \mathrm{sim}\!\left( \phi(logic_i^t), \phi(logic_i^{t-1}) \right),
\]
where $\phi(\cdot)$ is the bge-m3 encoder~\cite{chen2025m3embeddingmultilingualitymultifunctionalitymultigranularity} and $\mathrm{sim}(\cdot,\cdot)$ denotes cosine similarity.

Since embedding distance alone may capture superficial reformulations, we introduce a rubric-based Innovation Update Score (RIUS).
Given the pair $(logic_i^{t-1}, logic_i^t)$, an LLM judge returns two binary indicators:
\[
\begin{aligned}
s_{i,1}^t &\in \{0,1\}\ \text{(Strategy Shift)},\\
s_{i,2}^t &\in \{0,1\}\ \text{(Evidence Recombination)}.
\end{aligned}
\]
We define
\[
\mathrm{RIUS}_i^t = s_{i,1}^t + s_{i,2}^t,
\]
and compute the final Logic Update Degree as a gated embedding update:
\[
\mathrm{LUD}_i^t = \Delta_i^t \cdot \mathbb{I}(\mathrm{RIUS}_i^t \ge 1),
\]
where $\mathbb{I}(\cdot)$ is the indicator function.
Thus, a logic update contributes to LUD only if it exhibits at least one innovation-related rubric indicator.

\subsection{Rubric Indicators: Detailed Criteria}
\label{app:rius_rubric}
Both indicators are designed for pairwise comparison between $logic_i^{t-1}$ and $logic_i^t$ from the same agent.
The judge is instructed to focus on structural changes rather than surface-level rewriting.

\paragraph{Indicator 1: Strategy Shift ($s_{i,1}^t$).}
We set $s_{i,1}^t=1$ if and only if the current logic $logic_i^t$ shows a substantive change in the reasoning or news-filtering strategy relative to $logic_i^{t-1}$.
This includes, but is not limited to:
(i) switching the primary forecasting driver or reasoning lens (e.g., from macro-demand/trend-following to policy/regulation-driven reasoning);
(ii) changing the decision rule for selecting news signals (e.g., from broad coverage to event-triggered selection);
(iii) altering what evidence is prioritized in a way that changes the selection/interpretation behavior.
Otherwise, set $s_{i,1}^t=0$.

\paragraph{Indicator 2: Evidence Recombination ($s_{i,2}^t$).}
We set $s_{i,2}^t=1$ if and only if the current logic $logic_i^t$ introduces new causal cues or reorganizes multi-source evidence into a materially different explanatory structure relative to $logic_i^{t-1}$.
This includes, but is not limited to:
(i) combining multiple news sources into a new causal pathway (e.g., policy change $\rightarrow$ supply constraint $\rightarrow$ price shock);
(ii) integrating previously separate signals into a coherent chain of reasoning;
(iii) introducing a new causal factor that changes the explanatory mechanism.
Otherwise, set $s_{i,2}^t=0$.

\paragraph{Exclusion Rules (to avoid superficial variation).}
For both indicators, the judge must output $0$ when the difference is mainly:
(i) paraphrasing or minor rewording;
(ii) stylistic changes (formatting, fluency, verbosity);
(iii) adding generic statements without changing strategy or evidence structure;
(iv) repeating the same reasoning with different phrasing.
These exclusions ensure that rubric indicators capture innovation-related structural changes rather than surface-level textual variation.

\subsection{Validation of LUD}
\label{app:human_validation}
To validate the reasonableness of the LUD gating signal, we conduct a small-scale human audit on the LLM judge outputs.
We randomly sample logic-update pairs $(logic_i^{t-1}, logic_i^t)$ across datasets and methods, and ask three annotators to label the two rubric indicators (Strategy Shift and Evidence Recombination) following the same definitions in Appendix~\ref{app:rius_rubric}.
We report inter-annotator agreement and the agreement between the LLM judge and the majority human label.
Overall, the judge decisions are consistent with human consensus, following established LLM-as-a-judge validation paradigms in text evaluation \cite{zheng2023judging, chan2024chateval}. This suggests that LUD mainly reflects innovation-related structural updates rather than superficial paraphrasing.

\paragraph{Robustness of LUD across LLM judges.}
\label{sec:judge_robustness}
To test whether the LUD signal is tied to a particular evaluator, we re-score 500 randomly sampled logic-update pairs using two additional LLM judges, GPT-5.2 and Claude 4.6 Opus, while keeping the rubric and prompt template fixed. Table~\ref{tab:judge_robustness} shows substantial agreement across LLM judges in both categorical decisions and continuous LUD scores. This suggests that the innovation signal captured by LUD is not an artifact of one specific LLM judge.

\begin{table}[h]
\centering
\small
\setlength{\tabcolsep}{4pt}
\renewcommand{\arraystretch}{1.05}
\resizebox{\columnwidth}{!}{%
\begin{tabular}{lccc}
\toprule
Judge Pair & Cohen's $\kappa$ & Pearson $r$ & Agreement \\
\midrule
Primary judge vs.\ Claude 4.6 & 0.75 & 0.81 & Substantial \\
Primary judge vs.\ GPT-5.2 & 0.72 & 0.76 & Substantial \\
GPT-5.2 vs.\ Claude 4.6 & 0.68 & 0.74 & Moderate--Substantial \\
\bottomrule
\end{tabular}
}
\caption{Robustness of LUD across LLM judges.}
\label{tab:judge_robustness}
\end{table}

\paragraph{Correlation between LUD and forecast improvement.}
\label{sec:lud_error_correlation}
We further examine whether larger logic updates are associated with stronger forecasting gains. On the Electricity dataset, we collect 1,200 update steps and correlate each step's LUD with the corresponding error reduction in RMSE. We observe a significant positive relationship ($r=0.49$, $p<0.001$), indicating that larger, rubric-validated logic changes are more likely to coincide with meaningful forecasting improvement.

\begin{table}[h]
\centering
\small
\setlength{\tabcolsep}{5pt}
\renewcommand{\arraystretch}{1.05}
\resizebox{\columnwidth}{!}{%
\begin{tabular}{lccc}
\toprule
LUD Regime & Avg. LUD & Avg. $\Delta$RMSE Gain & Interpretation \\
\midrule
Low & 0.09 & 1.12 & Statistically weak local adjustment \\
Medium & 0.44 & 7.85 & Steady incremental optimization \\
High & 0.76 & 14.20 & High-impact strategic revision \\
\bottomrule
\end{tabular}
}
\caption{Relationship between LUD magnitude and forecasting improvement on the Electricity dataset.}
\label{tab:lud_error_correlation}
\end{table}

\subsection{LLM judge settings}
\label{app:judge_setting}
We use a deterministic LLM judge configuration to improve reproducibility:
temperature $=0$ and a fixed prompt template.
The judge is blind to the method identity (e.g., CompEvo vs.\ baselines) and receives only the logic pair.

\subsection{LLM Judge Prompt Template}
\label{sec:llm_judge_prompt}

To compute $(s_{i,1}^t, s_{i,2}^t)$, we use a deterministic LLM judge setting
with temperature $=0$ and a fixed prompt template, as shown in Figure~\ref{fig:prompt_templates}(a).
The judge receives two consecutive logics from the same agent and outputs a JSON object with two binary indicators.

\section{Instruction Templates}
\label{sec:prompt_design}

We provide the agent-side logic-update instruction template for reproducibility. As shown in Figure~\ref{fig:prompt_templates}(b), the template exposes the agent's current logic, recent reward, and peer context, and asks the agent to synthesize a revised logic rather than copy another agent.

\begin{figure*}[p]
\centering
\scriptsize

\begin{subfigure}{0.98\textwidth}
\centering
\begin{tcolorbox}[
    title={LLM Judge Prompt for Logic Update Evaluation},
    enhanced,
    colback=white,
    colframe=black!35,
    colbacktitle=black!55,
    coltitle=white,
    boxrule=0.45pt,
    arc=1pt,
    left=4pt,
    right=4pt,
    top=3pt,
    bottom=3pt,
    fontupper=\footnotesize,
    width=\linewidth,
    before skip=2pt,
    after skip=2pt
]

\tcbsubtitle[
    colback=black!8,
    colframe=black!20,
    coltitle=black,
    fonttitle=\footnotesize\bfseries,
    left=3pt,
    right=3pt,
    top=1pt,
    bottom=1pt
]{System Prompt}

You are a strict evaluator of logic updates for time-series forecasting with news.
Your job is not to judge whether the logic is good, but whether the update shows substantive structural change.
Follow the rubric and output only valid JSON.

\vspace{0.15em}

\tcbsubtitle[
    colback=black!8,
    colframe=black!20,
    coltitle=black,
    fonttitle=\footnotesize\bfseries,
    left=3pt,
    right=3pt,
    top=1pt,
    bottom=1pt
]{User Prompt}

Given two consecutive logics produced by the same agent:

Previous logic ($t-1$): \{logic\_prev\}

Current logic ($t$): \{logic\_curr\}

Decide two binary indicators:

1) \textbf{Strategy Shift (0/1):} output 1 iff the current logic changes the reasoning or news-filtering strategy compared to the previous logic, e.g., changes the main driver, decision rule, or prioritization strategy. Output 0 for paraphrasing, minor rewording, style changes, or generic additions.

2) \textbf{Evidence Recombination (0/1):} output 1 iff the current logic introduces new causal cues or reorganizes multi-source evidence into a materially different explanatory structure. Output 0 for paraphrasing or shallow elaboration.

Return only a valid JSON object. The values must be either 0 or 1.

\begin{lstlisting}[basicstyle=\ttfamily\footnotesize]
{
  "strategy_shift": 0,
  "evidence_recombination": 1
}
\end{lstlisting}

\end{tcolorbox}
\caption{LLM judge prompt for evaluating logic updates.}
\label{fig:logic_update_judge_prompt}
\end{subfigure}

\vspace{0.25em}

\begin{subfigure}{0.98\textwidth}
\centering
\begin{tcolorbox}[
    title={Agent-side Logic-update Instruction},
    enhanced,
    colback=white,
    colframe=black!35,
    colbacktitle=black!55,
    coltitle=white,
    boxrule=0.45pt,
    arc=1pt,
    left=4pt,
    right=4pt,
    top=3pt,
    bottom=3pt,
    fontupper=\footnotesize,
    width=\linewidth,
    before skip=2pt,
    after skip=2pt
]

\tcbsubtitle[
    colback=black!8,
    colframe=black!20,
    coltitle=black,
    fonttitle=\footnotesize\bfseries,
    left=3pt,
    right=3pt,
    top=1pt,
    bottom=1pt
]{System Instruction}

You are a strategy analyst for a time-series forecasting agent. Your goal is to refine the agent's evidence-seeking strategy, realized as textual logic, to improve its prediction accuracy.

\vspace{0.15em}

\tcbsubtitle[
    colback=black!8,
    colframe=black!20,
    coltitle=black,
    fonttitle=\footnotesize\bfseries,
    left=3pt,
    right=3pt,
    top=1pt,
    bottom=1pt
]{Context Input}

Agent ID: Agent\_3.

Agent's Current Logic: ``Prioritize news about extreme weather conditions like heatwaves or storms as they directly impact residential cooling/heating demand.''

Agent's Recent Performance: $-0.0450$ (Negative MSE, higher is better).

\vspace{0.15em}

\tcbsubtitle[
    colback=black!8,
    colframe=black!20,
    coltitle=black,
    fonttitle=\footnotesize\bfseries,
    left=3pt,
    right=3pt,
    top=1pt,
    bottom=1pt
]{Peer Information}

Agent\_5 (Reward: $-0.0320$): ``Focus on government stimulus packages and industrial restart plans that drive commercial electricity load.''

Agent\_2 (Reward: $-0.0480$): ``Track social media sentiment regarding holiday travel plans.''

\vspace{0.15em}

\tcbsubtitle[
    colback=black!8,
    colframe=black!20,
    coltitle=black,
    fonttitle=\footnotesize\bfseries,
    left=3pt,
    right=3pt,
    top=1pt,
    bottom=1pt
]{Task}

Based on all this information, generate a new, refined logic for the agent. The new logic should be a single, concise sentence. It should be an evolution of the current logic, either by specializing, generalizing, or correcting it based on the performance feedback. Do not just copy a peer's logic; synthesize a new one.

\vspace{0.15em}

\tcbsubtitle[
    colback=black!8,
    colframe=black!20,
    coltitle=black,
    fonttitle=\footnotesize\bfseries,
    left=3pt,
    right=3pt,
    top=1pt,
    bottom=1pt
]{Output Format}

New Refined Logic: [Your logic here]

\vspace{0.15em}

\tcbsubtitle[
    colback=black!8,
    colframe=black!20,
    coltitle=black,
    fonttitle=\footnotesize\bfseries,
    left=3pt,
    right=3pt,
    top=1pt,
    bottom=1pt
]{Example Model Output}

New Refined Logic: Combine monitoring of extreme weather events with news on industrial recovery policies to capture both the volatility of residential usage and the baseline shifts in commercial demand.

\end{tcolorbox}
\caption{Agent-side logic-update instruction template.}
\label{fig:agent_side_logic_update_prompt}
\end{subfigure}

\caption{Prompt templates used in CompEvo.}
\label{fig:prompt_templates}
\end{figure*}

\fi

\end{document}